\documentclass[11pt]{amsart}
\usepackage{amsfonts}
\usepackage{amsmath,amssymb}
\usepackage[foot]{amsaddr}
\usepackage{amsthm}
\usepackage{graphicx}
\usepackage{mathtools}
\usepackage{mathrsfs}
\usepackage{xcolor}
\usepackage{textcomp}
\usepackage{booktabs}
\usepackage{array}
\usepackage{enumitem}
\usepackage{placeins}
\usepackage{microtype}
\usepackage{xurl}
\usepackage[numbers,sort&compress]{natbib}
\usepackage[colorlinks=true,linkcolor=blue,citecolor=blue,urlcolor=blue,psdextra]{hyperref}

\numberwithin{equation}{section}
\allowdisplaybreaks
\newtheorem{theorem}{Theorem}[section]
\newtheorem{proposition}[theorem]{Proposition}
\newtheorem{lemma}[theorem]{Lemma}
\newtheorem{corollary}[theorem]{Corollary}
\theoremstyle{definition}
\newtheorem{definition}[theorem]{Definition}
\theoremstyle{remark}
\newtheorem{remark}[theorem]{Remark}

\newcommand{\E}{\mathbb E}
\newcommand{\Pp}{\mathbb P}
\newcommand{\R}{\mathbb R}
\newcommand{\1}{\mathbf 1}
\newcommand{\cF}{\mathcal F}
\newcommand{\cP}{\mathcal P}
\newcommand{\relu}{\sigma}
\newcommand{\dd}{\,\mathrm d}
\DeclareMathOperator{\Var}{Var}
\DeclareMathOperator{\Cov}{Cov}

\newcommand{\tblcap}[1]{\refstepcounter{table}\par\vskip6pt\noindent\parbox{\linewidth}{\small\textbf{\tablename~\thetable.} #1\par}\par\vskip4pt}

\begin{document}
\bibliographystyle{abbrvnat}
\renewcommand{\baselinestretch}{1.05}

\title[Affine geometry of Gaussian ReLU networks]
{Affine Geometry of Gaussian ReLU Networks via Conditional Kac--Rice Formulas}

\author{Recep {\"O}zkan}
\address{Department of Mathematics, Middle East Technical University (METU), Ankara, Turkey}
\curraddr{Department of Mathematics, Aarhus University, Aarhus, Denmark}
\email{rozkan@metu.edu.tr}

\author{Christian Hirsch}
\address{Department of Mathematics, Aarhus University, Aarhus, Denmark}
\email{hirsch@math.au.dk}

\subjclass[2020]{Primary 60D05; Secondary 60G15, 60G60, 68T07}
\keywords{ReLU networks, Kac--Rice formula, activation boundaries, scalar kinks, Gaussian initialization, neural-network geometry}

\begin{abstract}
We study how the affine geometry of finite ReLU networks is created at random initialization and reorganized by supervised training. We call a sign-changing zero of a hidden preactivation an activation switch and a point where the scalar network output is nondifferentiable a scalar kink. For one-dimensional input, conditioning on the preceding layers makes each preactivation Gaussian and affine on the cells of a random finite partition. This yields an exact finite-width conditional Kac--Rice formula for the expected number of activation switches along an input interval. For fixed depth and proportionally growing widths, the resulting switch intensities converge to explicit deterministic limits. A visibility estimate shows that the expected number of switches that do not produce scalar kinks is negligible, yielding an explicit leading formula for the expected number of scalar kinks and hence affine regions. In higher input dimensions $d\ge2$, the analogous conditional surface formula yields the leading expected $(d-1)$-dimensional Hausdorff measure of the scalar kink set. On the Breast Cancer Wisconsin data, the initialization formula accurately predicts switch counts along held-out segments. After training, switch counts decrease along within-class segments and increase along between-class segments. Thus, training redistributes rather than merely contracts affine complexity.
\end{abstract}

\maketitle

\section{Introduction}

A ReLU network represents a continuous piecewise-affine map, and its activation boundaries determine where the represented affine function changes. We study this geometry for finite Gaussian ReLU networks, both at random initialization and after supervised training. Throughout, an \emph{activation switch} is a sign-changing zero of a hidden preactivation, so that the corresponding ReLU gate changes state, while a \emph{scalar kink} is a point where the scalar network output is not differentiable.

Our first results quantify this geometry at random initialization. For one-dimensional input, we obtain explicit asymptotic formulas for the expected number of scalar kinks and hence for the expected number of maximal affine regions. In higher input dimensions $d\ge2$, we obtain the corresponding asymptotic formula for the expected $(d-1)$-dimensional Hausdorff measure of the scalar kink set. We then use the same switch statistic to study trained networks empirically. On the Breast Cancer Wisconsin data, training decreases switch counts along within-class segments and increases them along between-class segments. Thus, the statistic reveals a redistribution of affine complexity rather than a uniform contraction or expansion.

The affine geometry of ReLU networks has been studied from several perspectives. Classical work investigates how depth creates many linear regions \citep{PascanuMontufarBengio2014,MontufarPascanuChoBengio2014}, how activation changes accumulate along trajectories \citep{RaghuEtAl2017}, and how regions can be counted or related to robustness \citep{SerraTjandraatmadjaRamalingam2018,CroceAndriushchenkoHein2019}. Other work emphasizes that different regions can have different geometric roles \citep{TakaiSannaiCordonnier2021,GambaEtAl2022}. A complementary geometric literature describes the induced input-space partition itself. \citet{BalestrieroCosentinoAazhangBaraniuk2019} represent the successive partitions of piecewise-affine networks through power-diagram subdivisions, while \citet{GrigsbyLindsey2022} develop a generic transversality framework for ReLU bent-hyperplane arrangements. \citet{Masden2025} gives an algorithmic description of the face poset of the resulting polyhedral complex, and \citet{HumayunEtAl2023SplineCam} compute and visualize this geometry and the associated decision boundary exactly over prescribed input regions. \citet{HuchetteMunozSerraTsay2026} provide a recent survey of the broader polyhedral perspective. Region counts have also been developed as explicit complexity measures for related ReLU architectures: \citet{LiangXu2021BiasedReLU} analyze how additional biases change the number of linear regions, while \citet{ChenWangXiong2023LinearRegions} derive region bounds for graph convolutional networks.

For random networks, Hanin and Rolnick show that along a fixed one-dimensional path the expected number of regions grows only linearly with the number of neurons and study activation-pattern geometry before and after training \citep{HaninRolnick2019LinearRegions,HaninRolnick2019ActivationPatterns}. \citet{GoujonEtemadiUnser2024} develop a complementary stochastic framework for continuous piecewise-linear networks and bound the expected density of linear regions along one-dimensional paths in terms of depth, width, and activation complexity. The closest explicit asymptotic calculation to our one-dimensional formula is due to \citet{KoganJananthanKepner2025}. Under He scaling, they start from the infinite-width Gaussian process and obtain a Cauchy zero-crossing density. Recent work of \citet{DInvernoEtAl2026FiniteSize} shows more broadly that finite-width random networks can exhibit structure that is absent from their infinite-width mean-field description. Recent probabilistic work has developed complementary asymptotic descriptions
of Gaussian neural models, including large- and moderate-deviation principles
for deep Gaussian networks \citep{Vogel2026,MacciPacchiarottiTorrisi2026}
and spectral asymptotics for Gaussian random-feature models
\citep{PaquetteXiaoZhu2026}. In a different direction, \citet{XuZhang2024Convergence} encode the piecewise-linear structure through activation domains and activation matrices to analyze convergence with increasing depth. 

Our contribution differs in that the calculation starts from the finite network. Conditional on the preceding layers, a new preactivation is Gaussian and affine on each cell of the inherited activation partition. A cellwise Kac--Rice calculation therefore gives an exact finite-width conditional formula for its expected number of activation switches. Only afterward do we pass to proportionally growing widths, where propagation of the conditional covariance data yields explicit deterministic layerwise intensities. This order of argument is important: limiting Gaussian descriptions can suggest candidate formulas, but they do not by themselves control the finite-network zero geometry.

A second issue is that activation switches and scalar kinks are not identical. A switch need not create a kink in the scalar output if every downstream route from the switching neuron is blocked, while different active routes could in principle cancel. Gaussian genericity excludes nontrivial cancellation, and the probability that an entire downstream layer is inactive is exponentially small in its width. A visibility estimate therefore transfers the expected switch count to the expected scalar kink count. In higher dimensions, the same finite-network mechanism is combined with a conditional surface formula to obtain the leading Hausdorff measure of the scalar kink set. Finite-width coarea arguments for random ReLU boundaries were developed by \citet{HaninRolnick2019LinearRegions}, while \citet{DiLilloMarinucciSalviVigogna2025Fractal} study boundary volumes of Gaussian fields after the infinite-width limit. Crofton's formula gives a direct bridge between the higher-dimensional surface measure and one-dimensional zero counts \citep{SchneiderWeil2008}.

The geometry induced by trained networks has likewise been studied from several viewpoints. \citet{SzymanskiMcCaneAtkinson2022Conceptual} quantify local network geometry through tangent spaces and relate their complexity to class separation. \citet{JiaEtAl2022ActivationSpaces} analyze the evolution of class geometry through neural activation spaces using convex-hull approximations, while \citet{BenfenatiMarta2023Riemannian} reconstruct input-space equivalence classes from a differential-geometric description of trained networks. More directly related to boundary redistribution, \citet{HumayunBalestrieroBaraniuk2024} use the density of linear regions near data as a local-complexity statistic and report that, late in training, regions migrate away from data points toward the decision boundary. \citet{PatelMontufar2025} formalize local complexity as the density of linear regions over an input distribution and connect its reduction to learned low-dimensional representations and optimization. At the level of individual neurons, \citet{ChenGe2024} analyze the evolution of ReLU activation boundaries under stochastic optimization.

These works establish that training can reorganize the spatial distribution of ReLU boundaries. Our empirical study asks a different question, using a chord-wise switch count whose behavior at random initialization is supplied by the finite-width theory above. The initialization formula accurately predicts the measured switch counts along held-out data segments, after which the comparison of within-class, between-class, and off-manifold segments reveals strongly class-dependent changes. Thus, the contribution of the trained-data experiment is not a claim that boundary redistribution itself is new, but a class-conditioned measurement of that redistribution with an analytic Gaussian initialization benchmark. We make no Gaussian assumption for trained parameters; the post-training conclusions are empirical.

The remainder of the paper develops these three parts in turn. Section~\ref{sec:main-asymptotic-theorem} states the  scalar-kink asymptotics. The finite-network Kac--Rice calculation, covariance propagation, and visibility mechanism are then made explicit, followed by finite-width numerical verification and the trained-data study. Complete proofs and the broader numerical checks are collected in the appendices.

\section{Model and main results}
\label{sec:main-asymptotic-theorem}
Let the input be $x\in\R^d$, the scalar output be $f_N(x)$, and the hidden depth $L$ be fixed. Hidden layer $\ell$ has width $n_\ell$, with $N=\sum_{\ell=1}^L n_\ell$ and $n_\ell/N\to\alpha_\ell\in(0,1)$. The activation is $\relu(u)=u_+$. The first and later layers are
\[
\begin{aligned}
 z_{1,j}(x)&=b_{1,j}+\langle W_{1,j},x\rangle,\\
 z_{\ell,j}(x)&=b_{\ell,j}+\frac1{\sqrt{n_{\ell-1}}}
 \sum_iW_{\ell,ji}h_{\ell-1,i}(x),\qquad \ell\ge2.
\end{aligned}
\]
with $h_{\ell,j}=\relu(z_{\ell,j})$ and $f_N=b_{\rm out}+n_L^{-1/2}\sum_j a_jh_{L,j}$. All parameters are independent and centered Gaussian, with $\Var(b_{\ell,j})=\beta_\ell>0$, $W_{1,j}\sim N(0,\gamma_1I_d)$, and $\Var(W_{\ell,ji})=\gamma_\ell$ for $\ell\ge2$. Define \(A_1=\beta_1\), \(B_1=\gamma_1\), and
\[
 A_\ell=\beta_\ell+\frac{\gamma_\ell A_{\ell-1}}2,
 \quad B_\ell=\frac{\gamma_\ell B_{\ell-1}}2.
\]

\subsection{One-dimensional input}
We first specialize the common network to a one-dimensional input. The scalar kink set is then a finite boundary set, and its counting measure is the quantity that matches the higher-dimensional Hausdorff measure below. The special one-dimensional geometry also recovers the number of maximal affine intervals by adding one to this kink count. Let $I=[a,b]\subset\R$ and define
\[
\begin{aligned}
 \mathcal R_N(I)&:=\{t\in\operatorname{int}(I):
 f_N(t)\text{ is not differentiable}\},\\
 R_N(I)&:=\#\mathcal R_N(I)=\mathcal H^0(\mathcal R_N(I)),\\
 \rho_\ell(t)&:=\frac{\sqrt{A_\ell B_\ell}}{\pi(A_\ell+B_\ell t^2)}.
\end{aligned}
\]
Because $f_N$ is continuous and piecewise affine, $1+R_N(I)$ is exactly the number of maximal affine intervals. The first theorem identifies the leading expected kink count through the layerwise intensities $\rho_\ell$.

\begin{theorem}[Scalar kinks in one dimension]
\label{thm:affine-region-asymptotics}
Under the preceding fixed-depth proportional-width Gaussian model, for every compact interval $I\subset\R$,
\[
\begin{aligned}
 \E R_N(I)&=\sum_{\ell=1}^L n_\ell\int_I\rho_\ell(t)\,\dd t+o(N),\\
 \frac{\E R_N(I)}N&\longrightarrow
 \sum_{\ell=1}^L\alpha_\ell\int_I\rho_\ell(t)\,\dd t.
\end{aligned}
\]
\end{theorem}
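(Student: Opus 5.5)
We will prove the statement in three stages, following the order announced in the introduction: an exact conditional switch count, a deterministic limit of its intensity, and a transfer from switches to kinks.

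\emph{Stage 1 (exact conditional Kac--Rice).} For layer $\ell$ and neuron $j$, let $S_{\ell,j}(I)$ be the number of sign-changing zeros of $z_{\ell,j}$ in $\operatorname{int}(I)$, and set $S_N(I)=\sum_{\ell,j}S_{\ell,j}(I)$. Let $\mathcal F_{\ell-1}$ be the $\sigma$-field of parameters of layers $<\ell$. Conditionally on $\mathcal F_{\ell-1}$, the map $t\mapsto h_{\ell-1}(t)$ is continuous and piecewise affine on a finite random partition $a=t_0<\dots<t_m=b$. Hence $z_{\ell,j}(t)=b_{\ell,j}+n_{\ell-1}^{-1/2}\langle W_{\ell,j},h_{\ell-1}(t)\rangle$ is a centered Gaussian process that is affine on each cell. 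On the cell $(t_{k-1},t_k)$ write $h_{\ell-1}(t)=p_k+tq_k$. Then $z_{\ell,j}=U+tV$ with $(U,V)$ jointly Gaussian, with covariance determined by $\beta_\ell$, $\gamma_\ell\|p_k\|^2/n_{\ell-1}$, $\gamma_\ell\langle p_k,q_k\rangle/n_{\ell-1}$ and $\gamma_\ell\|q_k\|^2/n_{\ell-1}$. Such a process has at most one zero per cell, almost surely transversal. Zeros at the partition points have probability zero because $\beta_\ell>0$. So the Kac--Rice formula for a Gaussian process that is affine on a cell gives
\[
\E[S_{\ell,j}(I)\mid\mathcal F_{\ell-1}]=\int_I\frac{\sqrt{K(t,t)\,\partial_s\partial_uK(s,u)|_{s=u=t}-(\partial_sK(s,t)|_{s=t})^2}}{\pi K(t,t)}\dd t,
\]
where $K$ is the conditional covariance of $z_{\ell,j}$. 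This is the arctangent-derivative (Cauchy) density, which is exact for affine Gaussian processes.

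\emph{Stage 2 (limits of the covariance data).} We show that for almost every $t\in I$, the quantities $K(t,t)$, the cross term $\partial_sK(s,t)|_{s=t}$ and the derivative variance converge in probability to the limits $A_\ell+B_\ell t^2$, $B_\ell t$ and $B_\ell$. To do so we prove by induction on $\ell$ a law of large numbers for $n_\ell^{-1}\sum_j h_{\ell,j}(t)^2$, $n_\ell^{-1}\sum_j h_{\ell,j}(t)h'_{\ell,j}(t)$ and $n_\ell^{-1}\sum_j h'_{\ell,j}(t)^2$. Given $\mathcal F_{\ell-1}$, the summands are i.i.d. in $j$, so the conditional variances are $O(1/n_\ell)$. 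Their means are computed from the half-Gaussian moments of $(z,z')$: $\E[z_+^2]=\tfrac12\E z^2$, $\E[z_+z'\1_{z>0}]=\tfrac12\E zz'$ and $\E[z'^2\1_{z>0}]=\tfrac12\E z'^2$, using symmetry in $(z,z')\mapsto(-z,-z')$. This reproduces exactly the recursions $A_\ell=\beta_\ell+\gamma_\ell A_{\ell-1}/2$ and $B_\ell=\gamma_\ell B_{\ell-1}/2$, with $\E zz'$ propagating as $B_\ell t$.

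Substituting the limits into the density gives $\sqrt{(A_\ell+B_\ell t^2)B_\ell-B_\ell^2t^2}/(\pi(A_\ell+B_\ell t^2))=\rho_\ell(t)$. To pass the limit inside expectations, note that the density is a bounded continuous function of the covariance data as long as $K(t,t)\ge\beta_\ell>0$. Alternatively, $\int_I$ of the density is at most the total arctangent variation, which is at most $\pi^{-1}\cdot\pi$ per cell. We would instead apply dominated convergence using the bound $\text{density}\le\sqrt{\text{derivative variance}/\beta_\ell}/\pi$, whose mean is uniformly bounded. Together these give $\E S_N(I)=\sum_\ell n_\ell\int_I\rho_\ell+o(N)$.

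\emph{Stage 3 (visibility; main obstacle).} Clearly $R_N\le S_N$ almost surely, since $f_N$ is affine wherever no preactivation changes sign. For the reverse bound, at a switch of neuron $(\ell,j)$ at time $t$, the jump in $f_N'$ equals the slope jump of $h_{\ell,j}$ multiplied by the sum over downstream active paths of products of weights. We show this sum vanishes only if (i) some downstream layer $\ell'>\ell$ has all neurons inactive at $t$, or (ii) a Lebesgue-null coincidence occurs. Event (ii) is excluded because, conditionally on the activation pattern, the sum is a nonzero polynomial in independent Gaussian weights, and simultaneous switches have probability zero.

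So $\E[S_N-R_N]\le\sum_{\ell,j}\E\sum_{\text{switches }t}\1\{\exists\ell'>\ell:\text{layer }\ell'\text{ dead at }t\}$. Conditioning on layers up to $\ell'-1$, the probability that all $n_{\ell'}$ independent neurons are inactive at a given point is $\le c^{n_{\ell'}}$ for some $c<1$, uniformly on $\{t\in I\}$ with high probability. The difficulty is that the dead event is evaluated at the random switch locations. We would handle this with Cauchy--Schwarz: we bound $\E[S_N\,\1_{\text{some layer dead somewhere on }I}]$ by $\sqrt{\E S_N^2}\sqrt{\Pp(\cdot)}$. Here $\E S_N^2$ is polynomial in $N$, by a second-moment Kac--Rice argument or simply by the crude cell bound $S_N\le$ a polynomial in $N$, since each layer at most multiplies the number of cells by $n_\ell+1$, giving $\le\prod(n_\ell+1)$. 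We also need the probability that some layer is dead somewhere on $I$ to be exponentially small. For this we union-bound over a polynomial-size grid of $t$-values and use Lipschitz control of the preactivations to cover $I$; this step is where most care is needed. The result is $\E[S_N-R_N]=o(1)$, which with Stage 2 proves both displays, the second after dividing by $N$ and using $n_\ell/N\to\alpha_\ell$.
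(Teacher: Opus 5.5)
Your Stages 1 and 2 follow the paper's argument: cellwise affine Kac--Rice, half-space moment propagation, and uniform integrability. For the last of these, what you need is the second-moment bound $\E\kappa_{\ell,N}(t)^2\le \E Q_{\ell,N}(t)/(\pi^2\beta_\ell)$; a bounded mean of $\sqrt{Q_{\ell,N}/\beta_\ell}$ alone is not enough, and the limit of expectations comes from uniform integrability rather than dominated convergence.

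The genuine difference is Stage 3, and the obstacle you identify is not actually there. The switch locations $T_1,\dots,T_M$ of layer $\ell$ are $\mathcal F_\ell$-measurable, hence $\mathcal F_{r-1}$-measurable for every downstream $r>\ell$. Conditional on $\mathcal F_{r-1}$, the values $z_{r,k}(T_m)$, $1\le k\le n_r$, are independent centered Gaussians with variance at least $\beta_r$. So the event that layer $r$ is dead at $T_m$ has conditional probability exactly $2^{-n_r}$. The tower property then gives $\E[S_N(I)-V_N(I)]\le\sum_r2^{-n_r}\E S_N(I)$ directly, using only $\E S_N(I)=O(N)$ from Stage 2.

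Your route is also valid. You bound the invisible switches by $S_N\1_E$, with $E$ the event that some hidden layer is dead somewhere on $I$, and use the deterministic bound $S_N\le\prod_\ell(n_\ell+1)$, which makes Cauchy--Schwarz unnecessary. This works provided the covering step uses a robust event. Being dead at $t$ only gives $z_{\ell',k}(t')\le\delta$ at a grid point $t'$ within distance $\delta/M$, where $M$ bounds $\sup_{t\in I}\max_k|z'_{\ell',k}(t)|$. Conditionally on $\mathcal F_{\ell'-1}$, this event has probability at most $\Phi(\delta/\sqrt{\beta_{\ell'}})^{n_{\ell'}}$. The bound $M=C\sqrt N$ holds outside an event of probability $e^{-cN}$: use $|z'_{\ell',k}(t)|\le\|z'_{\ell'}(t)\|\le\|w_1\|\prod_{r=2}^{\ell'}\|W_r\|_{\rm op}/\sqrt{n_{r-1}}$, with $w_1=(W_{1,j})_j$, together with Gaussian operator-norm concentration. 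A union bound over $O(\sqrt N)$ grid points then makes $\Pp(E)$ exponentially small, which beats the polynomial factor.

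The trade-off is this. Your argument costs random-matrix concentration and a covering step, but proves something stronger: with overwhelming probability no hidden layer is entirely inactive anywhere on $I$. The paper's conditioning argument takes a few lines and yields the explicit bound $\sum_r2^{-n_r}\E S_N(I)$, requiring nothing beyond the first moment of $S_N$.
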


For constant variances $\beta_\ell=\beta$ and $\gamma_\ell=\gamma$, and setting 
\(B_\ell=\gamma(\gamma/2)^{\ell-1},\) the integral is explicit:
\[
\int_a^b\rho_\ell(t)\,\dd t
=\frac1\pi\big[\arctan\big(b\sqrt{B_\ell/A_\ell}\big)-\arctan\big(a\sqrt{B_\ell/A_\ell}\big)\big].
\]
Under He scaling, $\gamma_\ell=2, \beta_\ell=\sigma_b^2$ give \( A_\ell=\ell\sigma_b^2, B_\ell=2\)
\[
 \rho_\ell(t)={\sqrt{2\ell}\,\sigma_b}/(\pi(2t^2+\ell\sigma_b^2)).
\]
Up to layer indexing, this agrees with the infinite-width zero-crossing intensity of \citet{KoganJananthanKepner2025}; our argument begins with the finite network.

\subsection{Higher-dimensional input}
\label{subsec:higher-dimensional-main-result}
We next keep the input dimension fixed and replace the 1D kink count by codimension-one surface measure. The scalar output remains the principal object; hidden activation boundaries enter only as an auxiliary construction in the proof. The limiting density is determined by the covariance of the preactivation value and gradient at a fixed input. Let $d\ge2$ be fixed, let $D\subset\R^d$ be bounded, open, and convex, and let $\mathcal H^{d-1}$ denote $(d-1)$-dimensional Hausdorff measure. Define the scalar kink set
\[
 \mathcal R_N(D):=\{x\in D:f_N(x)\text{ is not differentiable}\}.
\]
Put $s_\ell(x)=A_\ell+B_\ell\|x\|^2$ and
\[
\begin{aligned}
 \Gamma_\ell(x)&=B_\ell I_d-\frac{B_\ell^2}{s_\ell(x)}xx^{\mathsf T},\\
 \rho_{\ell,d}(x)&=\frac{\E\|\Gamma_\ell(x)^{1/2}G\|}
 {\sqrt{2\pi s_\ell(x)}},\qquad G\sim N(0,I_d).
\end{aligned}
\]
The eigenvalues of $\Gamma_\ell(x)$ are $B_\ell$ in the $d-1$ transverse directions and $A_\ell B_\ell/s_\ell(x)$ radially. In particular, the density is explicit up to a one-dimensional Gaussian norm expectation and depends on $x$ only through $\|x\|$. The second theorem identifies the leading expected measure of the scalar kink set.

\begin{theorem}[Scalar kink measure]
\label{thm:higher-dimensional-boundaries}
Under the preceding fixed-depth proportional-width Gaussian model with fixed $d\ge2$, for every bounded open convex $D\subset\R^d$,
\[
 \E\mathcal H^{d-1}(\mathcal R_N(D))
 =\sum_{\ell=1}^L n_\ell\int_D\rho_{\ell,d}(x)\,\dd x+o(N).
\]
\end{theorem}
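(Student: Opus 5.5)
The proof follows the one-dimensional argument of Theorem~\ref{thm:affine-region-asymptotics}, with zero counts replaced by surface measure. There are three steps: bound the kink set by switch sets, compute the expected switch measure exactly by a conditional Kac--Rice (coarea) formula, and pass to the proportional-width limit while showing the remaining discrepancy is $o(N)$.

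\textbf{Step 1: switch sets.} Let $Z_{\ell,j}=\{x\in D: z_{\ell,j}(x)=0\text{ with a sign change}\}$. Since $f_N$ is a composition of affine maps and ReLUs, every nondifferentiability point lies on $\bigcup_{\ell,j}Z_{\ell,j}$. Conversely, a point of $Z_{\ell,j}$ is a kink unless every downstream route from neuron $(\ell,j)$ to the output is blocked, or active routes cancel. By Gaussian genericity, nontrivial cancellation and nontransversal intersections have probability zero, so up to $\mathcal H^{d-1}$-null sets
\[
\textstyle\sum_{\ell,j}\mathcal H^{d-1}(Z_{\ell,j})-\mathcal H^{d-1}(\mathcal R_N(D))\le \sum_{\ell,j}\mathcal H^{d-1}(Z_{\ell,j}\cap V_{\ell,j}^c),
\]
where $V_{\ell,j}$ is the visibility event at the point.

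\textbf{Step 2: exact conditional formula.} Condition on $\mathcal F_{\ell-1}$, the parameters of layers $<\ell$. Then $z_{\ell,j}$ is a centered Gaussian field. On each cell $P$ of the inherited activation partition (finitely many convex polytopes) it is affine: $z_{\ell,j}(x)=\xi_0+\langle\xi,x\rangle$, with jointly Gaussian $(\xi_0,\xi)$ whose covariance is determined by $h_{\ell-1}$ on $P$. For an affine Gaussian field, the Kac--Rice/coarea formula gives
\[
\E[\mathcal H^{d-1}(Z_{\ell,j}\cap P)\mid\mathcal F_{\ell-1}]=\int_P \E[\|\nabla z\|\mid z(x)=0]\,p_{z(x)}(0)\,\dd x.
\]
The cell boundaries are $\mathcal H^{d-1}$-null for the zero set almost surely. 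The integrand can therefore be written as
\[
\frac{\E\|\hat\Gamma(x)^{1/2}G\|}{\sqrt{2\pi\hat s(x)}},
\]
where $\hat s(x)=\beta_\ell+\gamma_\ell n_{\ell-1}^{-1}\|h_{\ell-1}(x)\|^2$ and $\hat\Gamma$ is the conditional covariance of the gradient given the value. Both are built from the empirical kernel $K(x)=n_{\ell-1}^{-1}\sum_i h_{\ell-1,i}(x)^2$ and its gradient blocks $n_{\ell-1}^{-1}\sum_i h\nabla h$ and $n_{\ell-1}^{-1}\sum_i\nabla h\nabla h^{\mathsf T}$. For $\ell=1$ no conditioning is needed: $s_1=\beta_1+\gamma_1\|x\|^2$, and the formula is already $\rho_{1,d}$.

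\textbf{Step 3: limit.} Covariance propagation, the same law-of-large-numbers induction used in the 1D case, shows that for a.e.\ $x$ these empirical blocks converge in probability to $A_\ell+B_\ell\|x\|^2$, $B_\ell x$ and $B_\ell I_d$. The factor $1/2$ in the recursion comes from $\E[\mathbf 1\{z>0\}]=1/2$ and from the homogeneity of $(z,\nabla z)$ in the parameters. Consequently $\hat s\to s_\ell$ and $\hat\Gamma\to\Gamma_\ell$. Since $\hat s\ge\beta_\ell>0$ and the integrand is bounded by $C\sqrt{\operatorname{tr}\hat\Gamma}$, which has uniformly bounded moments on the bounded set $D$, dominated convergence gives
\[
\E\mathcal H^{d-1}(Z_{\ell,j})=\int_D\rho_{\ell,d}\,\dd x+o(1)
\]
uniformly in $j$. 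Summing over $j$ gives the main term $\sum_\ell n_\ell\int_D\rho_{\ell,d}+o(N)$.

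\textbf{Main obstacle: the visibility error.} It remains to show $\sum_{\ell,j}\E\mathcal H^{d-1}(Z_{\ell,j}\cap V_{\ell,j}^c)=o(N)$. I would apply the same Kac--Rice formula with the indicator of invisibility inserted into the conditional expectation. Invisibility requires some downstream layer $m>\ell$ to be entirely inactive at $x$, or the last layer to be inactive, or all output weights of active neurons to vanish (a null event). Conditionally on layers $<m$, the $n_m$ neurons of layer $m$ are i.i.d.\ given the previous layer, each inactive with probability $1/2$. This gives a bound like $2^{-n_m}$, up to the dependence on whether neuron $(\ell,j)$ itself sits at zero. That dependence is handled by removing neuron $j$'s contribution, which is $O(n_{\ell}^{-1/2})$.

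Then Cauchy--Schwarz against the bounded second moment of $\|\nabla z\|$ gives an error of $N\cdot e^{-cN}=o(N)$. If this pointwise decoupling proves delicate, a fallback is Crofton's formula: write $\mathcal H^{d-1}$ as an integral of 1D zero counts along lines meeting the convex set $D$, and apply the visibility estimate from the one-dimensional theorem line by line. This needs uniformity of that estimate over lines through the bounded set $D$, which I expect to hold because its constants depend only on $A_\ell,B_\ell$ and the length of the chord.
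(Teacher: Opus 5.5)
Your proposal follows the paper's proof essentially step for step: a cellwise conditional coarea formula, propagation of the value/gradient moments with a uniform second-moment bound for dominated convergence, Gaussian genericity, and the $2^{-n_r}$ bound for an entirely inactive downstream layer (the genericity you only assert is made rigorous in the paper by freezing the gates into finitely many masks, so that every exceptional event, including cancellation along an active path, is the zero set of a nontrivial polynomial in the parameters). The dependence you worry about in the visibility step does not arise if you condition on $\cF_\ell$ instead of $\cF_{\ell-1}$: then $\mu_u=\mathcal H^{d-1}\restriction\mathcal Z_u(D)$ is fixed, the signs of layer $r>\ell$ at each fixed point are conditionally independent and fair given $\cF_{r-1}$, and Fubini gives $\E[\mu_u(\{D_u=0\})\mid\cF_\ell]\le\sum_{r=\ell+1}^L2^{-n_r}\mu_u(D)$ directly, with no correction term, no Cauchy--Schwarz loss, and no need for the Crofton fallback.
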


\begin{remark}[The one-dimensional specialization]
Formally setting $d=1$ gives $\rho_{\ell,1}(t)=\rho_\ell(t)$. Since $\mathcal H^0$ is counting measure,
\[
 R_N(I)=\mathcal H^0(\mathcal R_N(I)).
\]
Thus, the one-dimensional formula is precisely the $d=1$ specialization of the formula for the scalar kink measure. The additional one-dimensional fact is that $1+R_N(I)$ is the number of maximal affine intervals of $f_N$ on $I$; boundary measure has no analogous relation to the number of full-dimensional affine regions for $d\ge2$.
\end{remark}

\section{Finite-network proof overview}
The two proofs have the same architecture: an exact conditional Kac--Rice identity, propagation of empirical covariance data, and a visibility argument for the scalar output. This section records the mathematical mechanism and the formulas that determine the limiting constants without reproducing every technical estimate. Complete proofs, with the central propositions stated before the theorem deductions and their auxiliary inputs proved afterward, appear in the appendices. In one dimension, let $N_{\ell,j}(I)$ denote the number of sign-changing zeros of $z_{\ell,j}$ in $\operatorname{int}(I)$, let $S_N(I)=\sum_{\ell,j}N_{\ell,j}(I)$, and let $V_N(I)$ denote the number of these switches that create a genuine kink of the scalar output.

\subsection{Conditional Kac--Rice in one dimension} \label{subsec:Cond Kac--Rice-1d}
The first step computes the expected switch count of one neuron conditional on the preceding layers. On each inherited cell, the current preactivation is an affine Gaussian function, so its zero count can be evaluated directly by a change of variables. The resulting density depends only on the conditional variance of the value and slope and on their covariance.
In its classical one-dimensional form, the Kac--Rice formula expresses the expected number $N_0(I)$ of zeros of a sufficiently regular real-valued random process $Z$ on an interval $I$ as
\[
 \E N_0(I)=\int_I p_{Z(t)}(0)\E[|Z'(t)|\mid Z(t)=0] \,\dd t.
\]
Standard treatments include \citet{AdlerTaylor2007} and \citet{AzaisWschebor2009}. In the present setting the preactivations are only piecewise affine, so we use this mechanism separately on each inherited affine cell rather than invoke a global smooth-process formula.
Let $\cF_{\ell-1}$ be generated by the preceding layers, and on an inherited cell write $z_{\ell,j}(t)=U+Vt$. Define
\[
\begin{aligned}
 S_{\ell,N}(t)&=\Var(z_{\ell,j}(t)\mid\cF_{\ell-1}),\\
 Q_{\ell,N}(t)&=\Var(z'_{\ell,j}(t)\mid\cF_{\ell-1}),\\
 C_{\ell,N}(t)&=\Cov(z_{\ell,j}(t),z'_{\ell,j}(t)\mid\cF_{\ell-1}).
\end{aligned}
\]
On one affine cell, a zero occurs at $t$ exactly when $(U,V)=(-tV,V)$. The change of variables $(t,v)\mapsto(-tv,v)$ has absolute Jacobian $|v|$, so
\[
 \E\#\{t:U+Vt=0\}
 =\int p_{Z(t)}(0)\E[|V|\mid Z(t)=0] \,\dd t.
\]
For a centered Gaussian pair, the conditional variance of $V$ given $Z(t)=0$ is $Q-C^2/S$. Hence, summing over the inherited cells gives the exact conditional identity
\begin{equation}
\label{eq:main-kr}
\begin{split}
 &\E[N_{\ell,j}(I)\mid\cF_{\ell-1}]\\
 &=\frac1\pi\int_I\sqrt{\frac{Q_{\ell,N}(t)}{S_{\ell,N}(t)}-
 \Big(\frac{C_{\ell,N}(t)}{S_{\ell,N}(t)}\Big)^2}\,\dd t.
\end{split}
\end{equation}
The positive bias variance prevents an identically zero affine restriction. For the first layer, the unique root of $b_{1,j}+W_{1,j}t$ is a scaled ratio of two independent Gaussians, giving the same Cauchy density directly.

For fixed $t$, empirical ReLU moments propagate through the layers. The conditional covariance data satisfy
\[
 \begin{aligned}
 S_{\ell,N}(t)&=\beta_\ell+\frac{\gamma_\ell}{n_{\ell-1}}\sum_i h_{\ell-1,i}(t)^2,\\
 Q_{\ell,N}(t)&=\frac{\gamma_\ell}{n_{\ell-1}}\sum_i h'_{\ell-1,i}(t)^2,\\
 C_{\ell,N}(t)&=\frac{\gamma_\ell}{n_{\ell-1}}\sum_i h_{\ell-1,i}(t)h'_{\ell-1,i}(t),
 \end{aligned}
\]
and the half-space identities for a centered Gaussian pair give
\[
 (S_{\ell,N}(t),Q_{\ell,N}(t),C_{\ell,N}(t))
 \longrightarrow(A_\ell+B_\ell t^2,B_\ell,B_\ell t)
\]
in probability and in $L^1$. Uniform fourth moments imply uniform integrability of the random Kac--Rice density. Therefore, the expected contribution of one layer converges to $\int_I\rho_\ell$.

\subsection{Finite geometry and visibility}
The exact zero formula must be connected to the realized piecewise-affine network. For every fixed architecture, Gaussian parameters satisfy the following properties simultaneously outside a null event: inherited breakpoints are finite; a later preactivation does not vanish at an inherited breakpoint; zeros inside inherited cells are isolated and sign-changing; distinct neurons do not switch at the same input; and an active downstream path has nonzero downstream sensitivity. The key nondegeneracy statements reduce to nonzero polynomials. On an inherited cell $C$, write $h_{\ell-1,i}(t)=p_{i,C}+q_{i,C}t$ and $z_{\ell,j}(t)=U_{j,C}+V_{j,C}t$. Then
\[
\begin{aligned}
 \Var(V_{j,C}\mid\cF_{\ell-1})
 &=\frac{\gamma_\ell}{n_{\ell-1}}\sum_i q_{i,C}^2=:Q_C,\\
 \det\Cov((U_{j,C},V_{j,C})\mid\cF_{\ell-1})
 &\ge\beta_\ell Q_C.
\end{aligned}
\]
Thus, $Q_C=0$ gives a nonzero random constant, while $Q_C>0$ gives a nondegenerate affine pair. For two neurons, a common root forces the nontrivial polynomial $U_{j,C}V_{k,C}-U_{k,C}V_{j,C}$ to vanish. For a fixed downstream gate vector $g$, let $c_e>0$ denote the deterministic width normalization attached to an edge $e$ and let $w_e$ denote its weight. The sensitivity of a switching neuron $u$ is
\[
 D_u=\sum_{\pi:u\to {\rm out}}
 \Big(\prod_{e\in\pi}c_e w_e\Big)
 \Big(\prod_{v\in\pi\setminus\{u,{\rm out}\}}g_v\Big).
\]
If $g$ contains an active path, its path monomial cannot be reproduced by another path, so $D_u$ is a nonzero polynomial. Let $\cP_N(I)$ be the partition of $I$ into maximal intervals on which the complete activation pattern is constant. The genericity facts then give the exact identities $\#\cP_N(I)=1+S_N(I)$ and $R_N(I)=V_N(I)$.

\begin{figure*}[t]
\centering
\includegraphics[width=0.72\textwidth]{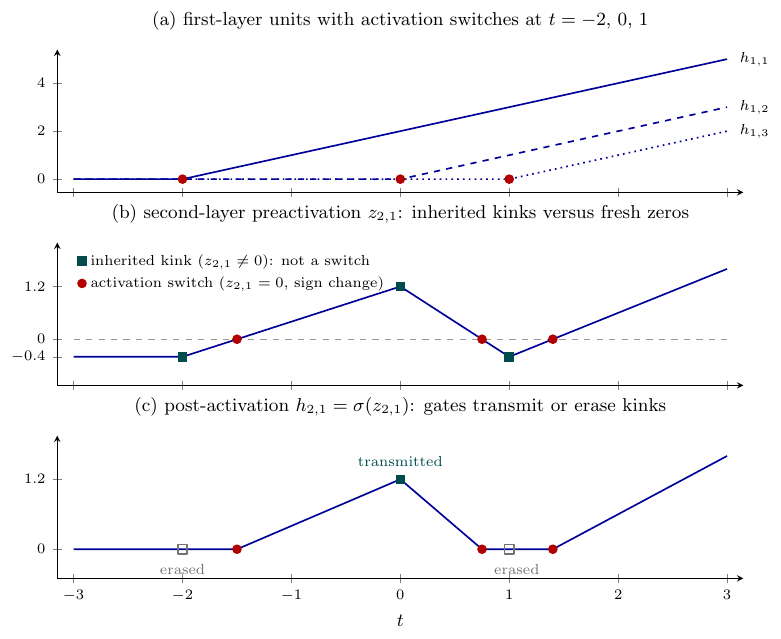}
\caption{A deterministic two-layer example separating inherited kinks from fresh activation switches. Panel (a) shows three first-layer units; in panel (b) the later preactivation inherits slope changes at earlier switches (squares) but creates new switches only at its own sign-changing zeros (circles). In panel (c), ReLU gates can transmit or erase inherited kinks. The finite Gaussian genericity argument ensures that, almost surely, these cases do not become ambiguous through coincident switches or exact downstream cancellation.}
\label{fig:switch-vs-kink}
\end{figure*}

The same genericity argument also controls visibility at the scalar output. The visibility step transfers the switch count to the scalar kink count. Finite Gaussian genericity makes distinct switch locations almost surely disjoint and reduces the local question to the downstream sensitivity of the switching neuron. On a fixed gate pattern this sensitivity is a polynomial in the remaining Gaussian edge weights. At a switch of neuron $u$, freezing downstream gates writes the derivative jump of the scalar output as the derivative jump of $h_u$ multiplied by this sensitivity. If there is an active path from $u$ to the output, the corresponding path monomial has nonzero coefficient, so the polynomial is nontrivial and does not vanish almost surely.

A switch can therefore be invisible only if some downstream hidden layer is entirely inactive. Centered Gaussian preactivations make all signs in a new layer conditionally independent and fair, giving probability $2^{-n_r}$ for this event in layer $r$. Consequently,
\begin{equation}
\label{eq:main-vis}
 0\le \E[S_N(I)-V_N(I)]
 \le\sum_{r=1}^L2^{-n_r}\E S_N(I)=o(1).
\end{equation}
The three ingredients of Theorem~\ref{thm:affine-region-asymptotics} are now in
place. The identity~\eqref{eq:main-kr} gives
$\E S_N(I)=\sum_\ell n_\ell\int_I\rho_\ell+o(N)$. The
bound~\eqref{eq:main-vis} makes the difference between $S_N(I)$ and $V_N(I)$
negligible. Finally, $R_N(I)=V_N(I)$. The complete deduction appears in
Appendix~\ref{sec:proof-affine-region-asymptotics}, and the number of maximal
affine intervals is $1+R_N(I)$.

\subsection{Fixed dimension}
The higher-dimensional argument follows the same three stages, with zero counts replaced by surface measure. The hidden activation boundaries provide an auxiliary quantity that can be computed by a conditional surface formula and then transferred to the scalar kink set by visibility. The genericity step ensures that intersections and lower-dimensional faces do not contribute to $\mathcal H^{d-1}$. For Theorem~\ref{thm:higher-dimensional-boundaries}, write $\mathcal Z_{\ell,j}(D)=\{x\in D:z_{\ell,j}(x)=0\}$ and introduce the auxiliary hidden boundary measure
\begin{equation}
\label{eq:snd}
 \mathsf S_{N,d}(D):=\sum_{\ell=1}^L\sum_{j=1}^{n_\ell}
 \mathcal H^{d-1}(\mathcal Z_{\ell,j}(D)).
\end{equation}
Conditioning on the preceding layers again yields a Gaussian field that is affine on each full-dimensional cell. The cellwise coarea formula replaces the absolute conditional slope in \eqref{eq:main-kr} by the norm of a conditional Gaussian gradient. If $S,R,C$ denote the conditional variance of the field, value-gradient covariance, and gradient covariance matrix, respectively, the local surface density is
\[
 \frac1{\sqrt{2\pi S(x)}}\E\|N(0,C(x)-R(x)R(x)^{\mathsf T}/S(x))\|.
\]
Propagation of the value-gradient empirical moments gives $(S,R,C)\to(s_\ell(x),B_\ell x,B_\ell I_d)$ and therefore the matrix $\Gamma_\ell(x)$ in Section~\ref{subsec:higher-dimensional-main-result}. The geometric step is again finite-dimensional. Replacing every ReLU gate by a deterministic mask $\varepsilon\in\{0,1\}^U$ makes each masked preactivation affine,
\[
 z_u^\varepsilon(x)=a_u^\varepsilon+\langle g_u^\varepsilon,x\rangle,
 \qquad
 Q_{\varepsilon,u}:=(a_u^\varepsilon)^2+\|g_u^\varepsilon\|^2.
\]
The bias of $u$ enters $a_u^\varepsilon$ with coefficient one, so $Q_{\varepsilon,u}$ is a nonzero polynomial and a candidate facet is almost surely a genuine hyperplane. Coincidence of two candidate hyperplanes similarly forces a nonzero polynomial relation between their affine coefficient vectors. Hence, distinct boundaries meet only in codimension at least two, which is invisible to $\mathcal H^{d-1}$. Polynomial genericity excludes coincident regular facets and nontrivial downstream cancellation. The same blocking argument as in one dimension yields
\[
\begin{aligned}
 \E\mathsf S_{N,d}(D)
 &=\sum_{\ell=1}^L n_\ell\int_D\rho_{\ell,d}(x)\,\dd x+o(N),\\
 0&\le\E\big[\mathsf S_{N,d}(D)
 -\mathcal H^{d-1}(\mathcal R_N(D))\big]=o(1),
\end{aligned}
\]
which gives the main-text reduction to the scalar-output theorem; the complete proof appears in Appendix~\ref{sec:higher-dimensional-boundaries}. 

\section{Finite-width verification at initialization}
The preceding theorems are asymptotic, so we first check how accurately the leading terms describe networks at moderate width. These experiments are not needed for the proofs. Their role is to verify the finite-width implementation before using the same switch statistic in the trained-data study.

\subsection{One-dimensional convergence}
We first test the one-dimensional formula at widths where the network can be simulated exactly along the full interval. The experiment separates the finite-width identity at depth one from the asymptotic predictions at depths two and three. For $\beta=\gamma=1$ and $I=[-1,1]$, the layer integrals are $1/2$, $1/3$, and approximately $0.2301$ for the first three layers, while at depth one the kink-count formula is exact at every finite width:
\[
 \E R_N(I)=n_1\int_I\rho_1(t)\,\dd t=n_1/2.
\]
Equivalently, the expected number of maximal affine intervals is $1+n_1/2$.
For depths two and three, the theorem guarantees only an $o(N)$ correction. Across the widths used in the experiment, the measured expectation stays within two affine regions of the leading prediction, and the normalized counts are already close to the limiting constants, as reported in Table~\ref{tab:mc-main} and Figure~\ref{fig:mc-convergence-main}.

\begin{figure*}[!ht]
\centering
\includegraphics[width=0.93\columnwidth]{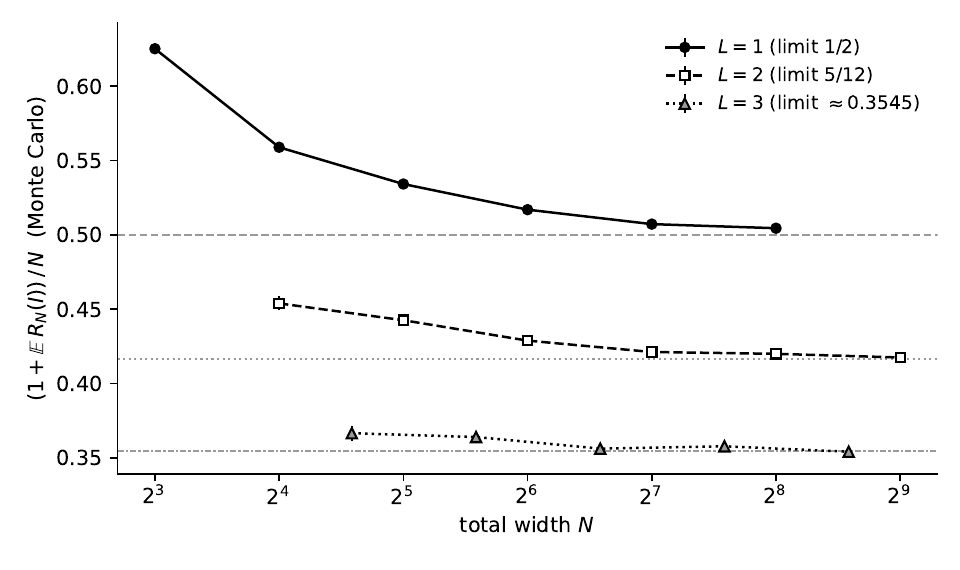}
\caption{Monte Carlo estimates of $(1+\E R_N(I))/N$, the expected number of maximal affine intervals normalized by $N$, for $\beta=\gamma=1$, $I=[-1,1]$, and equal widths. Horizontal lines are the asymptotic values $1/2$, $5/12$, and approximately $0.3545$ for depths $L=1,2,3$.}
\label{fig:mc-convergence-main}
\end{figure*}

\begin{table*}[t]
\centering
\caption{Monte Carlo estimates of $1+\E R_N([-1,1])$, the expected number of maximal affine intervals, for $\beta=\gamma=1$. The theory column is $1+\sum_\ell n_\ell\int_I\rho_\ell$; it is exact for $L=1$ and the leading prediction for $L\ge2$. These rows are a selection; Table~\ref{tab:mc} reports
the full set of runs.}
\label{tab:mc-main}
\small
\begin{tabular}{cccrr@{\qquad}cccrr}
\toprule
$L$ & widths & $N$ & MC & theory & $L$ & widths & $N$ & MC & theory\\
\midrule
1 & $16$ & 16 & $8.94\pm0.05$ & 9.00 & 2 & $16,16$ & 32 & $14.16\pm0.11$ & 14.33\\
1 & $64$ & 64 & $33.08\pm0.09$ & 33.00 & 2 & $64,64$ & 128 & $53.92\pm0.21$ & 54.33\\
1 & $256$ & 256 & $129.15\pm0.17$ & 129.00 & 3 & $64,64,64$ & 192 & $68.72\pm0.37$ & 69.06\\
2 & $128,64$ & 192 & $86.17\pm0.36$ & 86.33 & 2 & $64,128$ & 192 & $74.71\pm0.42$ & 75.67\\
\bottomrule
\end{tabular}
\end{table*}

The unequal-width rows use the same total budget $N=192$ but allocate more neurons to different layers. Their agreement with the prediction checks that the theorem is not tied to equal widths. They also illustrate the layerwise nature of the formula: for these parameters, moving width toward the earlier layer increases the expected count because the layer intensity decreases with depth.

\subsection{Higher-dimensional surface measure}
For $D=B(0,1)$, we estimate the auxiliary hidden boundary measure through Crofton's representation rather than a lattice discretization. Sampling a random direction and a random offset in the orthogonal projection of the ball gives an unbiased estimator based entirely on exact one-dimensional chord intersections. Table~\ref{tab:crofton-main} reports the ratio of the measured boundary measure to the asymptotic prediction. The ratios are close to one for dimensions $2,3,5$ and depths up to three; the full calibration table and standard errors are in Appendix~\ref{sec:higher-dimensional-numerics}.

\begin{table}[t]
\centering
\caption{Finite-network boundary measure on the unit ball with
$\beta_\ell=\gamma_\ell=1$. Predictions are the leading terms from the layerwise
surface integral; errors are one standard error over $40$ networks. These rows
are a selection covering all three input dimensions and all three depths;
Table~\ref{tab:crofton-networks} reports the full set of nine configurations.}
\label{tab:crofton-main}
\small
\begin{tabular}{ccrrr}
\toprule
$d$ & widths & prediction & estimate & ratio\\
\midrule
2 & $64$ & 76.68 & $76.22\pm1.18$ & 0.994\\
3 & $32,32$ & 107.83 & $108.30\pm1.84$ & 1.004\\
5 & $24,24,24$ & 171.86 & $167.43\pm3.35$ & 0.974\\
\bottomrule
\end{tabular}
\end{table}

These checks also illustrate why the hidden quantity remains useful as an auxiliary construction: it is directly measurable by line intersections and the visibility estimate transfers its leading term to the scalar output. In the main theorems, however, the headline quantities are the scalar kink count in one dimension and the scalar kink-set measure in fixed higher dimension; the number of one-dimensional affine regions is $1+R_N(I)$.

\section{Affine geometry before and after training}
\label{sec:wisconsin}
The initialization theory gives an analytically tractable reference geometry. We now use the same finite-network statistic to ask how supervised learning changes that geometry on real data. All post-training statements in this section are empirical; Theorems~\ref{thm:affine-region-asymptotics} and~\ref{thm:higher-dimensional-boundaries} are not asserted for trained weights.

\subsection{Design and initialization prediction}
We use the Breast Cancer Wisconsin diagnostic data, a classical benchmark for binary classification based on $30$ quantitative features extracted from digitized images of breast-mass cell nuclei \citep{StreetWolbergMangasarian1993}. The data contain $569$ observations, with $212$ malignant and $357$ benign cases, and we use $d=30$ standardized features. The principal experiments use $10$ stratified $70/30$ train/test splits with $5$
independently initialized networks per split, giving $50$ split--seed
combinations for each of the two architectures reported below. Feature
standardization is fitted on the training observations of each split and then
applied to the held-out observations. Within a split, the evaluation segments
are drawn once and held fixed across initialization, training, and all network
seeds, so every change is measured on the same segments before and after
training. The relative change is computed separately for each split--seed
combination, and the variability reported below is across those combinations.
Segments measured on one network are not treated as independent replicates, and
the splits are re-partitions of one dataset rather than independent datasets.
The wider architectures and the bias-variance sweep of
Appendix~\ref{app:wisconsin} remain single-split exploratory checks. The main network has two hidden layers of width $(64,64)$ and a scalar logit. We use He scaling in the parameterization above and hidden bias variance $\sigma_b^2=0.1$, which respects the positive-bias-variance assumption of the theory. Training uses full-batch Adam with learning rate $3\times10^{-3}$ for $300$ epochs and binary cross entropy.

For endpoints $x_0,x_1$, let $N_\ell(x_0,x_1)$ denote the total number of activation switches contributed by layer $\ell$ along the segment joining them. Write $x(t)=x_0+t(x_1-x_0)$, $t\in[0,1]$, let $v=(x_1-x_0)/\|x_1-x_0\|$, $s_0=\langle x_0,v\rangle$, $s_1=s_0+\|x_1-x_0\|$, and $y=x_0-s_0v$. The restriction to this chord is a one-dimensional Gaussian ReLU network whose limiting layer-$\ell$ intercept and slope variances are $A'_\ell=A_\ell+B_\ell\|y\|^2$ and $B_\ell$. Hence, the covariance propagation in Section~\ref{subsec:Cond Kac--Rice-1d} gives, in the fixed-depth proportional-width regime,
\begin{equation}
\label{eq:chord-prediction}
\E N_\ell(x_0,x_1)
=\frac{n_\ell}{\pi}\big[\arctan\big(s_1\sqrt{B_\ell/A'_\ell}\big)-\arctan\big(s_0\sqrt{B_\ell/A'_\ell}\big)\big]+o(n_\ell).
\end{equation}
For $\ell=1$, the remainder in~\eqref{eq:chord-prediction} is identically zero, so the displayed expression is the exact finite-width expectation. For $\ell\ge2$, however, the exact finite-width input is the conditional Kac--Rice identity~\eqref{eq:main-kr}; the deterministic arctangent expression arises only after propagation of the random conditional covariance data and is therefore a leading asymptotic prediction rather than an exact finite-width expectation. We use this leading term as the closed-form initialization prediction below, summing it over the hidden layers when predicting the total switch count.

This prediction concerns the hidden switch count $S_N$, not the scalar kink
count $R_N$ of Theorem~\ref{thm:affine-region-asymptotics}. We report $S_N$
throughout this section. The chord prediction~\eqref{eq:chord-prediction} is
layerwise, and the layer decomposition below requires the layer attribution
that $S_N$ carries and $R_N$ does not. The two agree to leading order at these
widths by the visibility bound~\eqref{eq:main-vis}, in which the expected
number of switches invisible at the output is of order $\sum_{r}2^{-n_r}$; they
agree exactly in every exploratory run of Appendix~\ref{app:wisconsin}.

No quadrature or grid discretization enters either the prediction or the measured switch count: all switches are located exactly by solving linear equations on inherited cells.

We use four segment families: within malignant, within benign, between class, and an off-manifold control whose endpoints are independent $N(0,I_{30})$ draws. The control has the same coordinatewise scale as the standardized features but is unrelated to the learned data geometry. At initialization, the closed-form prediction agrees with the measured count for
every architecture and segment family. For $(64,64)$, averaged over the $50$
split--seed combinations, the within-malignant prediction is $47.14$ against
$46.49\pm3.33$ measured, and the between-class prediction is $70.98$ against
$70.60\pm3.04$, where the stated spread is one standard deviation across
combinations. In both cases the deviation is below a quarter of that spread. The
corresponding single-split figures for all four exploratory architectures are in
Appendix~\ref{app:wisconsin}.

\subsection{Training redistributes the boundaries}
We next compare the same segment statistics before and after supervised training. The main question is whether training changes affine complexity globally or reorganizes it according to the class relation. The changes are strongly class dependent. For $(64,64)$, averaged over the $50$
split--seed combinations, total switch counts decrease by $24.1\%$ on
within-malignant and by $22.5\%$ on within-benign segments and increase by
$18.6\%$ between classes, while the off-manifold control changes by only
$-4.3\%$. The direction agrees with the mean in every one of the $50$
combinations for the three data-segment families; for the control, whose mean is
close to zero, it agrees in $49$ of $50$. The deeper $(64,64,64)$ network behaves
the same way. Table~\ref{tab:wis-main} reports both architectures with their
variability.
\begin{table*}[t]
\centering
\caption{Relative change in total switch count from initialization to epoch
$300$, over $10$ stratified splits with $5$ network seeds each. Entries are the
mean over the $50$ split--seed combinations, one standard deviation across them,
and the number of combinations in which the change has the same sign as the
mean. Test accuracy is $0.970\pm0.013$ for $(64,64)$ and $0.971\pm0.013$ for
$(64,64,64)$.}
\label{tab:wis-main}
\small
\begin{tabular}{llrrr}
\toprule
architecture & segment family & mean & sd & same sign\\
\midrule
$(64,64)$ & within malignant & $-24.1\%$ & $7.3$ & $50/50$\\
 & within benign & $-22.5\%$ & $5.7$ & $50/50$\\
 & between class & $+18.6\%$ & $5.4$ & $50/50$\\
 & off manifold (control) & $-4.3\%$ & $1.8$ & $49/50$\\
\midrule
$(64,64,64)$ & within malignant & $-27.3\%$ & $7.1$ & $50/50$\\
 & within benign & $-25.9\%$ & $5.1$ & $50/50$\\
 & between class & $+15.5\%$ & $5.3$ & $50/50$\\
 & off manifold (control) & $-9.2\%$ & $3.0$ & $50/50$\\
\bottomrule
\end{tabular}
\end{table*}

\begin{figure*}[!t]
\centering
\begin{minipage}[b]{0.48\textwidth}
  \centering
  \includegraphics[width=\textwidth]{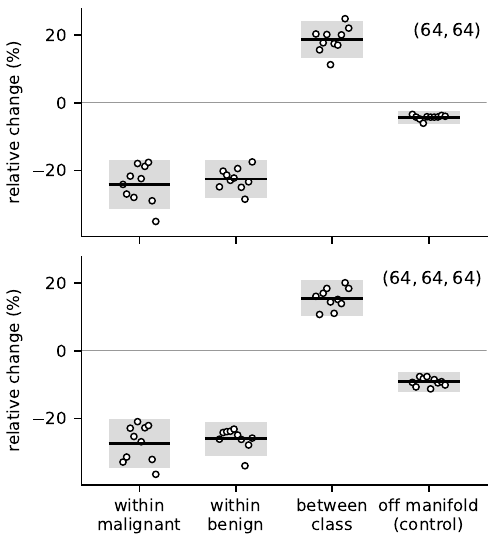}\\[2pt]
  {\small (a)}
\end{minipage}
\hfill
\begin{minipage}[b]{0.48\textwidth}
  \centering
  \includegraphics[width=\textwidth]{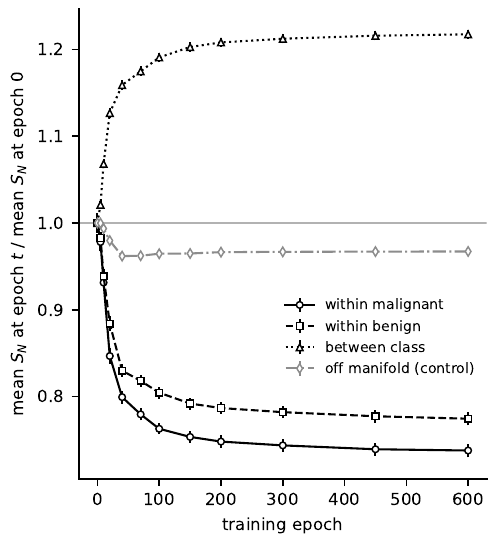}\\[2pt]
  {\small (b)}
\end{minipage}
\caption{Split-level and trajectory views of the same experiment.
(a) Results behind Table~\ref{tab:wis-main}. Open circles are the ten split
means, each averaged over the five network seeds of that split; the horizontal
bar is the mean over all $50$ split--seed combinations and the grey band is one
standard deviation across them. The within-class families lie entirely below
zero and the between-class family entirely above it in every split, while the
control stays close to zero.
(b) For the $(64,64)$ network, switch counts along within-class segments
decrease during training, between-class counts increase, and the off-manifold
control remains comparatively flat. Counts are normalized by initialization and
averaged over $20$ network seeds on the single exploratory split.}
\label{fig:wis-splits}
\end{figure*}

The separation is therefore not an artifact of one partition of the data or of
one initialization: Figure~\ref{fig:wis-splits} shows that no split reverses it
and that the between-class and within-class families do not overlap in any of
them. Relative to the off-manifold control, within-class and
between-class changes remain strongly separated; we use this as a comparison
relative to the control, not as an additive causal correction. The same sign
pattern appears in the wider architectures of Appendix~\ref{app:wisconsin},
which were run on a single exploratory split. The change also grows with depth
within a network: in $(64,64,64)$ the within-malignant change is $-7.7\%$ in the
first hidden layer and $-48.5\%$ in the third, while the between-class change
grows from $+7.6\%$ to $+30.5\%$ (Table~\ref{tab:wis-layer-main}). The first
layer of the control is flat, $0.0\pm0.6$ per cent, which is the quantity the
layerwise theory predicts and which no global constant would reproduce.

\begin{table*}[t]
\centering
\caption{Layerwise relative change for the $(64,64,64)$ network, over the same
$10\times5$ split--seed design as Table~\ref{tab:wis-main}. Entries are the mean
and one standard deviation across the $50$ combinations.}
\label{tab:wis-layer-main}
\small
\begin{tabular}{lrrr}
\toprule
segment family & layer 1 & layer 2 & layer 3\\
\midrule
within malignant & $-7.7\pm4.4$ & $-27.1\pm11.1$ & $-48.5\pm12.4$\\
within benign & $-7.6\pm3.8$ & $-26.1\pm6.5$ & $-45.1\pm11.0$\\
between class & $+7.6\pm2.8$ & $+10.1\pm6.8$ & $+30.5\pm12.1$\\
off manifold (control) & $+0.0\pm0.6$ & $-10.4\pm4.1$ & $-17.7\pm6.8$\\
\bottomrule
\end{tabular}
\end{table*}

The separation develops monotonically and is close to saturation by epochs $150$--$300$; Figure~\ref{fig:wis-splits}(b) follows one continuous Adam run through epoch $600$. Across all nine combinations of $\sigma_b^2\in\{0.01,0.1,1\}$ and training length in $\{100,300,1000\}$, the sign pattern persists. At $\sigma_b^2=1$, for example, the effect shifts from removing within-class boundaries toward adding between-class boundaries, but the ordering remains the same. Appendix~\ref{app:wisconsin} reports these robustness runs and the additional architectures in full.

Finally, the number of scalar output kinks equals the total hidden switch count
in $159{,}094$ of the $159{,}100$ segment measurements of the replicated study,
and in every exploratory run of Appendix~\ref{app:wisconsin}. The six exceptions
all occur in one trained network, where the slope change at the switch has
relative size about $10^{-9}$ and therefore falls below the tolerance of the
slope-comparison test; no switch is invisible at the output. At initialization
this agreement is what the visibility theorem predicts, and at the tested widths
it is stronger than required by it. After training it is a separate empirical observation. Together, the initialization agreement, class-dependent reorganization, depth decomposition, and training trajectory indicate that the statistic is tracking learned geometry rather than merely a global rescaling of the network.

\section{Discussion and limitations}
The theoretical and empirical parts answer two sides of the same question. At random initialization, conditional Kac--Rice and covariance propagation make the finite-network affine geometry explicit. On real data, the same statistic is accurately predicted before training and then reorganized in a label-dependent way by learning. The observed effect is not a generic increase in complexity: within-class paths become geometrically simpler while between-class paths acquire more boundaries.

The trained-network observation is consistent with, but distinct from, recent results on local linear-region complexity. \citet{HumayunBalestrieroBaraniuk2024} report that linear regions migrate away from data points toward the decision boundary during a late phase of training, while \citet{PatelMontufar2025} relate lower local complexity to feature learning and optimization. Our chord-wise switch count is neither a neighborhood-based local-complexity measure nor a decision-boundary density. It records the hidden activation switches encountered along a prescribed segment and, crucially for the present comparison, has an explicit Gaussian initialization benchmark. The opposite changes on within-class and between-class segments therefore provide a class-conditioned one-dimensional diagnostic of boundary redistribution rather than a global complexity measure.

The results concern expectations at fixed depth and fixed input dimension. We do not prove concentration, rates for the $o(N)$ terms, growing-dimensional limits, or post-training analogues of the Gaussian theorems. In $d\ge2$, boundary measure does not determine the number of full-dimensional affine regions. The off-manifold control addresses a simple global-contraction explanation but is not an exact geometry-matched counterfactual. The principal experiment varies both the train/test split and the network
initialization, but the splits are re-partitions of a single dataset rather than
independent datasets, so the reported variability measures sensitivity to
partition and initialization and not sampling variability of the data source.
The wider architectures and the bias-variance sweep remain single-split
exploratory checks, and the study uses one dataset in one application domain. Full proofs, synthetic tables, Crofton calibration, unequal-width checks, all additional Wisconsin architectures, and robustness details appear in the appendices.

\appendix
\numberwithin{equation}{section}
\numberwithin{table}{section}
\numberwithin{figure}{section}
\section*{Appendices}
The appendices give the complete finite-network arguments and the broader numerical checks. The main text states the scalar-output results; several proofs below establish stronger auxiliary hidden-switch or hidden-boundary estimates on the way to those conclusions.
\section{Proof of Theorem \ref{thm:affine-region-asymptotics}:  1D theorem}
\label{sec:proof-affine-region-asymptotics}

This section reduces Theorem~\ref{thm:affine-region-asymptotics} to the expected hidden-switch count and a visibility estimate. We first record the finite Gaussian geometry needed to identify activation switches with the breakpoints of the activation partition and to determine when a switch reaches the scalar output. We then state the two central propositions, derive the theorem from them immediately, and prove the propositions in the subsequent sections. All switch counts are taken in the open interval $(a,b)$; endpoint zeros are ignored because all relevant endpoint zero probabilities are zero. Let $\cF_\ell$ denote the sigma-field generated by all weights and biases up to hidden layer $\ell$.

\begin{definition}[Switches, finite partitions, and visibility]
\label{def:switch-notions}
Let $I=[a,b]\subset\R$ be a compact interval.
\begin{enumerate}[label=\textup{(\roman*)},leftmargin=2.5em]
\item A point $t_0\in(a,b)$ is an \emph{activation switch} of neuron $(\ell,j)$ if $z_{\ell,j}(t_0)=0$ and $z_{\ell,j}$ changes sign at $t_0$.
\item The \emph{switch counts} are
\[
        N_{\ell,j}(I):=\#\{t_0\in(a,b):\ t_0\text{ is an activation switch of }z_{\ell,j}\},
        \qquad
        S_N(I):=\sum_{\ell=1}^L\sum_{j=1}^{n_\ell}N_{\ell,j}(I).
\]
\item The \emph{activation pattern} at $t$ is $A_N(t)=(\1\{z_{\ell,j}(t)>0\})_{1\le\ell\le L,\,1\le j\le n_\ell}$, and the \emph{activation partition} $\cP_N(I)$ is the partition of $I$ into maximal intervals on which $A_N(t)$ is constant.
\item An activation switch is \emph{visible} if it creates a genuine kink of the scalar output, that is, if $f'_{N,+}(t_0)\ne f'_{N,-}(t_0)$.  The number of visible activation switches in $(a,b)$ is denoted by $V_N(I)$.
\item Put $\mathcal B_{0,N}=\varnothing$ and let $\mathcal B_{\ell,N}$ be the union of $\mathcal B_{\ell-1,N}$ and the switch locations of all neurons in layer $\ell$.  The connected components of $(a,b)\setminus\mathcal B_{\ell,N}$ are denoted by $\mathcal C_{\ell,N}$.
\item If $t_0$ is an isolated switch of $u=(\ell,j)$ and no other hidden preactivation vanishes at $t_0$, its \emph{downstream sensitivity} is the coefficient of $h_{\ell,j}$ in the scalar output when all downstream gates are frozen at their values at $t_0$:
\[
        D_u(t_0):=\frac{\partial f_N(t_0)}{\partial h_{\ell,j}(t_0)}.
\]
A path from $u$ to the output is \emph{active} at $t_0$ when all its downstream hidden neurons are active there.
\end{enumerate}
\end{definition}

For a continuous piecewise-affine function, an unadorned derivative denotes the ordinary derivative where it exists and is set equal to zero at the finitely many points where it does not exist.  This convention affects neither Lebesgue integrals nor any fixed input probabilistic identity below.  One-sided derivatives are always denoted explicitly. We now derive the  Theorem~\ref{thm:affine-region-asymptotics} from a sequence of auxiliary results. We state those auxiliary results now and postpone their proof to the end of this section. First, the finite-width argument needs a genericity statement that separates switch locations and excludes cancellation along an active downstream path. This statement also ensures that the activation partition can be described by finitely many isolated switches.

\begin{lemma}[Finite Gaussian geometry]
\label{lem:finite-gaussian-geometry}
For every fixed finite architecture, there is an event of probability one on which, simultaneously for all layers, the following properties hold.
\begin{enumerate}[label=\textup{(\roman*)},leftmargin=2.5em]
\item Each set $\mathcal B_{\ell,N}$ is finite, and every hidden activation is affine on each cell in $\mathcal C_{\ell,N}$.
\item No layer-$\ell$ preactivation vanishes at a point of $\mathcal B_{\ell-1,N}$, every zero in a cell of $\mathcal C_{\ell-1,N}$ is isolated and sign-changing, and no two distinct neurons switch at the same input point.
\item If a switch has an active downstream path to the output, then its downstream sensitivity is nonzero.
\end{enumerate}
\end{lemma}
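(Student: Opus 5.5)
We argue by induction on the layer index $\ell$, conditioning on $\cF_{\ell-1}$ and using that the layer-$\ell$ parameters are independent of $\cF_{\ell-1}$ and have a density. Every exceptional event will be either the zero set of a nonzero polynomial in the layer-$\ell$ (or downstream) Gaussian parameters, or a countable union of such sets. Such events are Lebesgue-null, hence have probability zero. Since the architecture is finite, intersecting finitely many probability-one events gives the simultaneous statement.

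\emph{Step 1: properties (i) and (ii).} Assume inductively that $\mathcal B_{\ell-1,N}$ is finite and that every $h_{\ell-1,i}$ is affine on each cell $C\in\mathcal C_{\ell-1,N}$; for $\ell=1$ there is a single cell and the input itself is affine. On $C$ write $h_{\ell-1,i}(t)=p_{i,C}+q_{i,C}t$, so that $z_{\ell,j}=U_{j,C}+V_{j,C}t$. Here $(U_{j,C},V_{j,C})$ is a linear image of $(b_{\ell,j},W_{\ell,j\cdot})$ in which $b_{\ell,j}$ enters $U$ with coefficient one.

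If $Q_C=0$, then $V\equiv0$ and $U$ is a nondegenerate Gaussian, so $z_{\ell,j}$ is a nonzero constant on $C$ almost surely. If $Q_C>0$, the pair $(U,V)$ is nondegenerate because its covariance determinant is at least $\beta_\ell Q_C$. An affine function with $V\ne0$ has exactly one zero, which is simple and sign-changing. The event $V=0\ne Q_C$ is null.

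For a fixed breakpoint $t_0\in\mathcal B_{\ell-1,N}$, $z_{\ell,j}(t_0)$ is Gaussian with variance at least $\beta_\ell$, so it is nonzero almost surely; there are finitely many such points, and they are $\cF_{\ell-1}$-measurable. For two distinct neurons $j\ne k$ in layer $\ell$ on the same cell, a common zero forces $U_jV_k-U_kV_j=0$. This is a nonzero polynomial in the independent parameters of the two neurons, for instance because it contains the monomial $b_{\ell,j}V_k$.

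Distinct neurons in different layers are handled by the breakpoint step: a layer-$\ell$ neuron does not vanish on $\mathcal B_{\ell-1,N}$. Hence $\mathcal B_{\ell,N}$ is finite, with at most one new point per neuron per cell. Each $h_{\ell,j}=\relu(z_{\ell,j})$ is affine on the refined cells, which closes the induction.

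\emph{Step 2: property (iii).} This is the step we expect to be the main obstacle. Randomness sits in the downstream weights, yet the switch location and the downstream gates themselves depend on those weights, so we cannot simply condition on a fixed gate pattern. We handle this by a union over the finitely many gate vectors $g\in\{0,1\}^{\text{downstream}}$.

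Fix $g$ containing an active path from $u$ to the output. The frozen sensitivity $D_u^g$ is then a polynomial in the downstream weights $w_e$, a sum of path monomials $\prod_{e\in\pi}c_ew_e$ over paths active under $g$. Distinct paths use distinct edge sets, so their monomials are distinct. The coefficients $c_e>0$ are all of one sign, so no cancellation occurs and $D_u^g\not\equiv0$. The variables $w_e$ have a joint density, so $\{D_u^g=0\}$ is null.

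The actual $D_u(t_0)$ equals $D_u^{g(t_0)}$, where $g(t_0)$ is the realized pattern; this pattern is well defined because, by (ii), no other neuron vanishes at $t_0$. On the complement of the finite union $\bigcup_g\{D_u^g=0\}$, every switch with an active downstream path therefore has $D_u(t_0)\ne0$, whatever $g(t_0)$ turns out to be. Crucially, $D_u^g$ does not depend on $t_0$, so the countably many switch locations cause no difficulty.
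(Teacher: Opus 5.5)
Your proof is correct and follows the paper's argument. Both use the same layerwise induction with the $Q_C=0$ versus $Q_C>0$ dichotomy, the bound $\det\Cov(U,V)\ge\beta_\ell Q_C$ and the common-root determinant $U_jV_k-U_kV_j$, and for (iii) the same union over downstream gate vectors with the nontrivial path-monomial polynomial; your observation that $D_u^g$ depends only on the downstream weights and not on $t_0$ is a mild streamlining that lets you skip the paper's conditioning on $\cF_\ell$ and enumeration of switches, and read off well-definedness of the gate pattern directly from (ii).
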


The preceding geometry converts switch locations into exact identities for the activation partition and the scalar kink set. These identities isolate visibility as the only remaining difference between switches and scalar kinks.

\begin{lemma}[Switch counts and scalar kinks]
\label{lem:partition-identities}
On the event in Lemma~\ref{lem:finite-gaussian-geometry}, we have
	\(\#\cP_N(I)=1+S_N(I),\) \(R_N(I)=V_N(I),\) and
\(
 f'_{N,+}(t_0)-f'_{N,-}(t_0)
 =D_u(t_0)\bigl(h'_{\ell,j,+}(t_0)-h'_{\ell,j,-}(t_0)\bigr)
\)
for every switch $t_0$ of $u=(\ell,j)$.
\end{lemma}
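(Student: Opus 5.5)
The argument is deterministic and works on the probability-one event of Lemma~\ref{lem:finite-gaussian-geometry}. Let $\mathcal B_N:=\mathcal B_{L,N}$ be the set of all switch locations in $(a,b)$. By part (i) of that lemma, $\mathcal B_N$ is finite. By part (ii), its points are pairwise distinct across neurons, so $\#\mathcal B_N=S_N(I)$.

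\emph{Partition identity.} Between consecutive points of $\mathcal B_N$, no preactivation changes sign. Any zero of a preactivation inside such a gap would, by (ii), be a sign-changing isolated zero, hence a switch, hence a point of $\mathcal B_N$. So $A_N$ is constant on each complementary component. At a point $t_0\in\mathcal B_N$, exactly one neuron changes sign, so the pattern differs on the two sides and $t_0$ is a genuine breakpoint of the partition. The maximal constancy intervals are therefore exactly the $1+S_N(I)$ components, up to the finitely many boundary points, which we attach by convention. This gives $\#\cP_N(I)=1+S_N(I)$.

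\emph{Jump formula.} Fix a switch $t_0$ of $u=(\ell,j)$. By (ii), every other hidden preactivation is nonzero at $t_0$, so all other gates are constant on a neighbourhood $(t_0-\delta,t_0+\delta)$. On this neighbourhood, freezing the downstream gates makes $f_N$ an affine function of the layer-$\ell$ activations. Each downstream preactivation is a fixed linear combination of upstream quantities plus a bias, and the gates multiply by fixed $0/1$ factors. Hence $f_N(t)=c(t)+D_u(t_0)\,h_{\ell,j}(t)$, where $D_u(t_0)$ is the frozen path sum of Definition~\ref{def:switch-notions}(vi). The remainder $c$ collects the contributions of all other neurons of layers $\le\ell$. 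Those neurons do not switch near $t_0$, so their activations are affine there. Consequently $c$ is affine near $t_0$, and $c$'s one-sided derivatives agree. Taking one-sided derivatives gives the stated jump formula.

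\emph{Kink identity.} Every point where $f_N$ fails to be differentiable lies in $\mathcal B_N$. Off $\mathcal B_N$, all gates are locally constant, so $f_N$ is locally affine. Conversely, a point of $\mathcal B_N$ is a kink exactly when the jump is nonzero, which is the definition of visibility. Hence $\mathcal R_N(I)$ is precisely the set of visible switches, and $R_N(I)=V_N(I)$. Note that this uses only the jump formula, not part (iii) of the lemma. Part (iii) is needed later, when bounding invisibility. There, one also observes that $h'_{\ell,j,+}-h'_{\ell,j,-}=\pm z'_{\ell,j}(t_0)\neq0$, because the zero is isolated and sign-changing on an affine cell.

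\emph{Main obstacle.} The hardest step is making the freezing argument rigorous. One must check that the one-sided derivative of $h_{\ell,j}$ is the only non-smooth input near $t_0$. This requires verifying that no earlier-layer neuron switches at $t_0$, which (ii) provides via $\mathcal B_{\ell-1,N}$. It also requires verifying that no later neuron vanishes at $t_0$, which follows from (ii) applied at later layers because $t_0\in\mathcal B_{\ell,N}\subset\mathcal B_{r-1,N}$ for every $r>\ell$.
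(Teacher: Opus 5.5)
Your proposal is correct and follows the paper's proof essentially step for step. Distinct isolated switches give $1+S_N(I)$ activation intervals; freezing all other gates near $t_0$ writes $f_N$ as $D_u(t_0)h_{\ell,j}$ plus a locally affine remainder; and $R_N(I)=V_N(I)$ holds because a continuous piecewise-affine function is non-differentiable exactly where its one-sided derivatives differ. Your version is more careful than the paper's, for example in checking that no earlier, same-layer, or later preactivation vanishes at $t_0$. The only nit is that no convention is needed for the switch points: the strict inequality in $A_N(t)$ already attaches each $t_0$ to the side where $z_u<0$.
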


The theorem now follows from the partition identity and two central propositions. The first proposition, whose proof will be given in the next section, gives the expected number of switches, while the second shows that switches that are not visible at the scalar output make a negligible contribution.

\begin{proposition}[Expected number of activation switches]
\label{prop:wide-switch-asymptotics}
In the centered Gaussian fully connected model of Section~\ref{sec:main-asymptotic-theorem}, assume fixed hidden depth $L$ and the proportional-width regime defined there. Then, for every compact interval $I=[a,b]$,
\[
        \E S_N(I)
        =\sum_{\ell=1}^L n_\ell\int_I\rho_\ell(t)\,\dd t+o(N).
\]
\end{proposition}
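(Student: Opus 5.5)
By linearity, $\E S_N(I)=\sum_\ell\sum_j\E N_{\ell,j}(I)$. Given $\cF_{\ell-1}$, the neurons of layer $\ell$ are exchangeable, so $\E S_N(I)=\sum_\ell n_\ell\,\E N_{\ell,1}(I)$. It therefore suffices to show that $\E N_{\ell,1}(I)\to\int_I\rho_\ell(t)\,\dd t$ for each fixed $\ell$; multiplying by $n_\ell\le N$ then gives the $o(N)$ error.

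\textbf{Layer 1.} The root of $b_{1,j}+W_{1,j}t$ is $-b_{1,j}/W_{1,j}$, a scaled Cauchy variable with density $\sqrt{\beta_1\gamma_1}/(\pi(\beta_1+\gamma_1t^2))=\rho_1(t)$. The identity is therefore exact.

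\textbf{Layers $\ell\ge2$, exact conditional formula.} I would first prove the conditional Kac--Rice identity \eqref{eq:main-kr} on the event of Lemma~\ref{lem:finite-gaussian-geometry}.
\begin{itemize}
\item The cells $\mathcal C_{\ell-1,N}$ are finitely many and $\cF_{\ell-1}$-measurable.
\item On each cell $C$, $z_{\ell,1}=U_C+V_Ct$ with $(U_C,V_C)$ conditionally centered Gaussian.
\item When $Q_C>0$, the pair is nondegenerate because its covariance determinant is at least $\beta_\ell Q_C$. The change of variables $(t,v)\mapsto(-tv,v)$ then gives $\E[\#\{t\in C:U_C+V_Ct=0\}\mid\cF_{\ell-1}]=\int_C\phi_{S(t)}(0)\,\E[|V_C|\mid U_C+V_Ct=0]\,\dd t$.
\item The Gaussian conditional computation yields the integrand $\frac1\pi\sqrt{Q/S-(C/S)^2}$.
\item When $Q_C=0$, the restriction is a nonzero constant and the integrand also vanishes.
\end{itemize}
Summing over cells, and noting that zeros at inherited breakpoints have probability zero by Lemma~\ref{lem:finite-gaussian-geometry}(ii), gives \eqref{eq:main-kr}. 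Here $S_{\ell,N},Q_{\ell,N},C_{\ell,N}$ are the empirical ReLU moments of layer $\ell-1$ listed in Section~\ref{subsec:Cond Kac--Rice-1d}.

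\textbf{Propagation of the covariance data.} I would prove, by induction on $\ell$ and for each fixed $t$ (for Lebesgue-a.e.\ $t$ suffices), that
\[
(S_{\ell,N}(t),Q_{\ell,N}(t),C_{\ell,N}(t))\to(A_\ell+B_\ell t^2,\,B_\ell,\,B_\ell t)
\]
in probability. The inductive step goes as follows.
\begin{itemize}
\item Given $\cF_{\ell-2}$, the summands $h_{\ell-1,i}(t)^2$, $h'_{\ell-1,i}(t)^2$, $h_{\ell-1,i}h'_{\ell-1,i}$ are i.i.d.\ in $i$.
\item For a centered Gaussian pair $(Z,Z')$ with covariance $(S,C,Q)$, these summands equal $Z^2\1\{Z>0\}$, $Z'^2\1\{Z>0\}$, $ZZ'\1\{Z>0\}$ (almost surely at fixed $t$).
\item By symmetry the half-space identities give means $S/2$, $Q/2$, $C/2$.
\item Their conditional variances are bounded by fourth moments, which are polynomial in $(S,Q,C)$, so a conditional weak law gives concentration.
\item The inductive hypothesis plus continuity then yield the recursions $A_\ell=\beta_\ell+\gamma_\ell A_{\ell-1}/2$ and $B_\ell=\gamma_\ell B_{\ell-1}/2$. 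The base case is $(\beta_1+\gamma_1t^2,\gamma_1,\gamma_1t)$.
\end{itemize}
The limiting density is $\frac1\pi\sqrt{B_\ell/s-B_\ell^2t^2/s^2}=\frac1\pi\sqrt{A_\ell B_\ell}/s=\rho_\ell(t)$ with $s=A_\ell+B_\ell t^2$. Since the integrand is a continuous function of $(S,Q,C)$ on $S\ge\beta_\ell>0$, it converges in probability for each $t$.

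\textbf{Main obstacle: exchanging limit and expectation.} The step I expect to be hardest is passing from pointwise convergence in probability to convergence of $\E\int_I(\cdot)\,\dd t$. Because $S\ge\beta_\ell$, the integrand is at most $\frac1\pi\sqrt{Q_{\ell,N}(t)/\beta_\ell}$. It therefore suffices that $\sup_N\sup_{t\in I}\E Q_{\ell,N}(t)$ is finite, or better that the second moment is bounded.
\begin{itemize}
\item I would bound $\E Q_{\ell,N}(t)^2$ inductively. By the same half-space conditioning and Jensen, $\E Q_{\ell,N}^2\le c\,\E Q_{\ell-1,N}^2$ plus lower-order terms, with constants depending only on $\gamma,\beta$ and compact $I$. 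This gives uniform fourth moments of the slopes.
\item Uniform integrability on $\Omega\times I$ then follows. Fubini together with Vitali gives $\E N_{\ell,1}(I)\to\int_I\rho_\ell$.
\end{itemize}
The subtle point is that $h'_{\ell-1,i}$ is defined only almost everywhere. I would handle it using the convention fixed before Lemma~\ref{lem:finite-gaussian-geometry}, which is valid at each fixed $t$ almost surely.
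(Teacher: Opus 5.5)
Your proposal is correct and follows essentially the same route as the paper. Both arguments use exchangeability to reduce to one neuron per layer, the exact Cauchy law in layer one, the cellwise conditional Kac--Rice identity via the change of variables $(t,v)\mapsto(-tv,v)$, inductive propagation of the empirical ReLU moments through the half-space identities, and uniform integrability from $S_{\ell,N}\ge\beta_\ell$ together with uniform fourth-moment bounds. The only difference is cosmetic: you apply Vitali on $\Omega\times I$, whereas the paper uses uniform integrability at each fixed $t$ followed by dominated convergence in $t$, and the two are equivalent here.
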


\begin{proposition}[Negligibility of invisible activation switches]
\label{prop:asymptotic-visibility}
Assume the centered Gaussian fully connected model of Section~\ref{sec:main-asymptotic-theorem}.  If $\E S_N(I)=O(N)$ and $N\sum_{r=1}^L2^{-n_r}\to0$, then $\E[S_N(I)-V_N(I)]\to0$.
\end{proposition}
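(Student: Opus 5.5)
The plan is to bound the expected number of invisible switches by conditioning on the layers up to the switching neuron and on the location of its switch. A downstream ``all-inactive'' event then has conditional probability exactly $2^{-n_r}$. We work on the probability-one event of Lemma~\ref{lem:finite-gaussian-geometry}. By Lemma~\ref{lem:partition-identities}, a switch $t_0$ of $u=(\ell,j)$ is invisible iff $D_u(t_0)=0$, because $h'_{\ell,j,+}(t_0)-h'_{\ell,j,-}(t_0)\neq0$ at a sign-changing zero of an affine piece with nonzero slope. By part (iii) of Lemma~\ref{lem:finite-gaussian-geometry}, $D_u(t_0)=0$ forces the absence of an active downstream path. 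If every layer $r>\ell$ contained an active neuron at $t_0$, we could chain them into an active path, since the network is fully connected and every edge weight is nonzero almost surely. Hence an invisible switch lies in the event $E_r(t_0)=\{z_{r,k}(t_0)\le0\ \forall k\}$ for some $r>\ell$, and
\[
 S_N(I)-V_N(I)\le\sum_{\ell<r}\ \sum_{j}\ \sum_{t_0\in Z_{\ell,j}}\1_{E_r(t_0)},
\]
where $Z_{\ell,j}$ is the finite switch set of $(\ell,j)$.

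The main step is to show $\E\sum_{t_0\in Z_{\ell,j}}\1_{E_r(t_0)}\le2^{-n_r}\E N_{\ell,j}(I)$. I would condition on $\cF_{r-1}$. The switch locations $Z_{\ell,j}$ for $\ell<r$ are $\cF_{r-1}$-measurable, so we may fix each $t_0$. Given $\cF_{r-1}$, the vector $(z_{r,k}(t_0))_k$ has independent centered Gaussian coordinates $b_{r,k}+n_{r-1}^{-1/2}\sum_iW_{r,ki}h_{r-1,i}(t_0)$, each with variance $\ge\beta_r>0$. The rows $(b_{r,k},W_{r,k\cdot})$ are independent across $k$ and independent of $\cF_{r-1}$. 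Therefore
\[
 \Pp(E_r(t_0)\mid\cF_{r-1})=2^{-n_r}.
\]
Summing over the finitely many $t_0$ and taking expectations gives the bound with $\E N_{\ell,j}(I)$. The point to check is that the conditioning uses only randomness of layer $r$, so the summation index set is frozen. For $r=\ell$ nothing is needed, since the event concerns only strictly downstream layers. One should also use the strict-inequality version or note that ties have probability zero; either way the bound is at most $2^{-n_r}$.

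Summing over $r$ and the switching neurons then yields
\[
 0\le\E[S_N(I)-V_N(I)]\le\sum_{r=1}^L2^{-n_r}\E S_N(I)\le C N\sum_{r=1}^L2^{-n_r},
\]
using $\E S_N(I)=O(N)$, and the right-hand side tends to $0$ by hypothesis. Nonnegativity holds because visible switches are switches.

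I expect the main obstacle to be the combinatorial step that the failure of every active path forces an entire downstream layer to be inactive. This needs the routing argument above, together with the convention that the output node itself is always ``active.'' It also needs care with the possibility that a downstream neuron is exactly zero at $t_0$. That possibility is excluded on the event of Lemma~\ref{lem:finite-gaussian-geometry}(ii), since distinct neurons do not vanish at a common point.
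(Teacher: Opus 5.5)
Your proposal is correct and follows essentially the same route as the paper. You reduce invisibility to the absence of an active downstream path via Lemma~\ref{lem:partition-identities} and Lemma~\ref{lem:finite-gaussian-geometry}(iii), bound it by the union of ``entire downstream layer $r$ inactive'' events at the $\cF_{r-1}$-measurable switch location, use independent fair signs to get conditional probability exactly $2^{-n_r}$, and then sum. The only differences are cosmetic: you condition directly on $\cF_{r-1}$ instead of on $\cF_\ell$ plus the tower property (the paper's measurable ordered list $T_1,\dots,T_M$ is the clean way to make your ``fix each $t_0$'' rigorous), and you treat last-layer switches implicitly through the vacuous active path rather than by citing $a_j/\sqrt{n_L}\neq0$.
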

\begin{proof}[Proof of Theorem~\ref{thm:affine-region-asymptotics}]
By Proposition~\ref{prop:wide-switch-asymptotics}, we have
\(
        \E S_N(I)=\sum_{\ell=1}^L n_\ell\int_I\rho_\ell(t)\,\dd t+o(N).
\)
In particular, $\E S_N(I)=O(N)$.  Since the widths are proportional to $N$, we have $N\sum_{r=1}^L2^{-n_r}\to0$.  Therefore, Proposition~\ref{prop:asymptotic-visibility} gives $\E[S_N(I)-V_N(I)]\to0$.

Lemma~\ref{lem:partition-identities} gives $R_N(I)=V_N(I)$ almost surely. Taking expectations gives
\(
        \E R_N(I)=\E V_N(I)=\E S_N(I)+o(1).
\)
This gives the stated $o(N)$ asymptotic after inserting the asymptotic formula for activation switches.  Dividing by $N$ and using $n_\ell/N\to\alpha_\ell$ gives the equivalent normalized limit.
\end{proof}

\begin{proof}[Proof of Lemma~\ref{lem:partition-identities}]
Between two consecutive points of $\mathcal B_{L,N}$ every gate is constant, so every hidden activation and the scalar output are affine.  At a point of $\mathcal B_{L,N}$ exactly one preactivation changes sign.  Crossing the point therefore changes exactly one entry of the activation vector, and a finite set of $S_N(I)$ distinct interior switch locations divides $I$ into $1+S_N(I)$ activation intervals.
Near a switch of $u$, all other gates are fixed.  Repeated substitution through the downstream affine maps makes the scalar output an affine function of $h_{\ell,j}$ with coefficient $D_u(t_0)$, plus terms whose derivatives agree on the two sides.  Taking one-sided derivatives gives the displayed jump identity.  The local derivative jump of $h_{\ell,j}$ is nonzero because the zero is sign-changing with nonzero slope.  Finally, a continuous piecewise-affine scalar function has a maximal affine boundary precisely where its two one-sided derivatives differ.  Hence, the number of its affine regions is one plus the number of visible switches.
\end{proof}

\begin{remark}[Activation switches and inherited kinks]
\label{rem:switch-vs-kink}
Figure~\ref{fig:switch-vs-kink} separates four facts used below. The
preactivation $z_{2,1}$ inherits kinks from the first layer at $t=-2,0,1$, but
it does not vanish there. The finite Gaussian geometry established below shows
that inherited kink locations are almost surely not zeros of a later
preactivation. Thus, $N_{2,1}(I)$ counts only the new sign-changing zeros
$t=-1.5,0.75,1.4$, and each switch in $S_N(I)$ belongs to the unique neuron
whose preactivation vanishes.

Every kink of the scalar output nevertheless originates from a switch in some
hidden neuron.  In panel (c), the kink of $h_{2,1}$ at $t=0$ is inherited from
the switch of $h_{1,2}$ and passes through an open gate.  Affine combinations
create no new breakpoints, while a ReLU creates a new breakpoint only when its
argument changes sign.  Hence, every hidden kink, and therefore every kink of
$f_N$, occurs at a switch.  At a new sign-changing zero of $z_{2,1}$,
the one-sided derivatives of $h_{2,1}$ differ.  Whether this local kink reaches
the scalar output is determined by the downstream sensitivity in
Definition~\ref{def:switch-notions}. Closed gates erase inherited kinks, as at
$t=-2$ and $t=1$ in the figure; the visibility estimate below shows that the
expected number of switches erased in this way is negligible as the widths
grow.
\end{remark}

\begin{proof}[Proof of Proposition~\ref{prop:asymptotic-visibility}]
Fix $1\le\ell\le L-1$ and condition on $\cF_\ell$.  On the event of Lemma~\ref{lem:finite-gaussian-geometry}, order the layer-$\ell$ switches increasingly by input location, using the neuron index only as a deterministic rule for breaking ties.  This gives an $\cF_\ell$-measurable list $T_1,\dots,T_M$, where $M=\sum_jN_{\ell,j}(I)$.
Fix $T_m$ and let $A_{r,m}$ be the event that all neurons in a downstream layer $r>\ell$ are inactive at $T_m$.  Since $T_m$ is $\cF_{r-1}$-measurable, conditional on $\cF_{r-1}$ the variables $z_{r,k}(T_m)$, $1\le k\le n_r$, are independent centered Gaussians with conditional variances at least $\beta_r$.  Their signs are therefore independent and fair, so
\(
 \Pp(A_{r,m}\mid\cF_{r-1})=2^{-n_r}.
\)
Since $\cF_\ell\subseteq\cF_{r-1}$, the tower property then gives
\(
 \Pp(A_{r,m}\mid\cF_\ell)
 =\E[\Pp(A_{r,m}\mid\cF_{r-1})\mid\cF_\ell]
 =2^{-n_r}.
\)
If none of the events $A_{r,m}$ occurs, choose one active neuron in each downstream hidden layer.  Full connectivity joins these neurons to the switching neuron and the output, giving an active path.  Lemma~\ref{lem:finite-gaussian-geometry} makes its downstream sensitivity nonzero, and Lemma~\ref{lem:partition-identities} then makes the switch visible.  Consequently, an invisible switch is contained, up to a null set, in $\bigcup_{r=\ell+1}^L A_{r,m}$.  The conditional union bound and conditional linearity over the measurable finite list give
\[
        \E\!\big[
        \#\{\text{invisible activation switches in layer }\ell\}
        \mid \cF_\ell
        \big]
        \le
        \sum_{r=\ell+1}^{L}2^{-n_r}
        \sum_{j=1}^{n_\ell}N_{\ell,j}(I).
\]
For $\ell=L$, the direct edge to the output is active and its coefficient is $a_j/\sqrt{n_L}\ne0$ almost surely, so every switch in the last layer is visible.  Summing the displayed estimate over $\ell=1,\dots,L-1$ and taking expectations yields
\[
        \E[S_N(I)-V_N(I)]
        \le
        \sum_{\ell=1}^{L-1}
        \sum_{r=\ell+1}^{L}2^{-n_r}
        \E\sum_{j=1}^{n_\ell}N_{\ell,j}(I)
        \le
        \sum_{r=1}^L2^{-n_r}\E S_N(I).
\]
By the assumptions of Proposition~\ref{prop:asymptotic-visibility}, the right side tends to zero.
\end{proof}

\begin{proof}[Proof of Lemma~\ref{lem:finite-gaussian-geometry}]
We first prove (i) and (ii) by induction over the layers.  In layer one, $z_{1,j}(t)=b_{1,j}+w_{1,j}t$.  Since $(b_{1,j},w_{1,j})$ has a nondegenerate density, $w_{1,j}\ne0$ almost surely, each neuron has at most one isolated sign-changing zero, and the determinant associated with a common root $b_{1,j}w_{1,k}-b_{1,k}w_{1,j}$ is a nontrivial polynomial for $j\ne k$.  Thus, two first-layer neurons have a common zero with probability zero.

Suppose the result holds through layer $\ell-1$ and condition on $\cF_{\ell-1}$.  The set $\mathcal B_{\ell-1,N}$ is finite.  On a cell $C\in\mathcal C_{\ell-1,N}$, write $h_{\ell-1,i}(t)=p_{i,C}+q_{i,C}t$.  The restriction of a new preactivation is
\[
 z_{\ell,j}(t)=U_{j,C}+V_{j,C}t,
 \quad
 \Var(V_{j,C}\mid\cF_{\ell-1})
 =\frac{\gamma_\ell}{n_{\ell-1}}\sum_iq_{i,C}^2=:Q_C.
\]
If $Q_C=0$, then $V_{j,C}=0$ almost surely, while $U_{j,C}$ has conditional variance at least $\beta_\ell$; hence, the restriction has no zero almost surely.  If $Q_C>0$, the independent bias gives
\(
 \det\Cov((U_{j,C},V_{j,C})\mid\cF_{\ell-1})
 \ge \beta_\ell Q_C>0.
\)
Consequently, the restriction is not identically zero and any root in $C$ has nonzero slope.  For two neurons in the current layer, the condition for a common root is $U_{j,C}V_{k,C}-U_{k,C}V_{j,C}=0$.  When $Q_C>0$ this determinant is a nontrivial polynomial in two independent nondegenerate Gaussian pairs, so it vanishes with probability zero.  At any $s\in\mathcal B_{\ell-1,N}$, the conditional variance of $z_{\ell,j}(s)$ is at least $\beta_\ell$, and hence $z_{\ell,j}(s)\ne0$ almost surely.  A finite union over cells, neurons, and boundary points proves the induction.  In particular, each neuron in the current layer adds at most one switch per old cell.  On every resulting subcell the sign of each new preactivation is constant, so its ReLU activation is either zero or the same affine function.  This also proves the affine assertion in (i).

It remains to prove (iii).  Fix a layer-$\ell$ switch $t_0$ of $u=(\ell,j)$ and condition on $\cF_\ell$; by ordering the finite switch set first by location and then by neuron index, this conditioning can be made simultaneously for every switch.  The downstream weights, biases, and output weights form a finite-dimensional Gaussian vector with a density.  Successive conditioning on the preceding layer shows that no downstream preactivation equals zero at $t_0$ almost surely.  Outside the null sets on which a downstream preactivation vanishes, partition the parameter space according to the complete downstream gate vector $g$.  On a fixed gate region $\Omega_g$, the identities $h_{r,k}(t_0)=g_{r,k}z_{r,k}(t_0)$ show recursively that all preactivations at $t_0$ are polynomials in the downstream parameters.  Thus, $\Omega_g$ is described by finitely many strict polynomial inequalities.

On $\Omega_g$,  we have
\(
 D_u(t_0)=\sum_{\pi:u\to {\rm out}}
 \Big(\prod_{e\in\pi}c_e w_e\Big)
 \Big(\prod_{v\in\pi\setminus\{u,{\rm out}\}}g_v\Big),
\)
where the positive constants $c_e$ are the width normalizations.  This is a polynomial.  If $g$ contains an active path, choose one such path.  Its edge monomial occurs with a nonzero coefficient and cannot be produced by a different path, since two distinct paths use different edge variables at their first divergence.  The polynomial is therefore nontrivial.  Its zero set has Lebesgue measure zero, and hence Gaussian probability zero.  There are finitely many gate vectors and finitely many switches.  Taking the corresponding finite unions proves (iii).
\end{proof}

\section{Proof of Proposition \ref{prop:wide-switch-asymptotics}: expected activation-switch count}
\label{sec:exact-conditional-kac-rice}
This section proves Proposition~\ref{prop:wide-switch-asymptotics} from three inputs. The conditional Kac--Rice formula gives an exact finite-width expression for one neuron's switch count, uniform fourth moments provide the integrability needed to average that expression, and propagation of empirical ReLU moments identifies the deterministic covariance limit. We state all three inputs first and then derive the proposition before proving them. The conditional Kac--Rice proposition itself is reduced afterward to an elementary piecewise-affine Gaussian zero formula.
For $2\le\ell\le L$ and $1\le j\le n_\ell$, define the conditional covariance quantities on the cells in $\mathcal C_{\ell-1,N}$ by
\[
\begin{aligned}
S_{\ell,N}(t)&:=\Var(z_{\ell,j}(t)\mid\cF_{\ell-1}),\\
Q_{\ell,N}(t)&:=\Var(z'_{\ell,j}(t)\mid\cF_{\ell-1}),\\
R_{\ell,N}(t)&:=\Cov(z_{\ell,j}(t),z'_{\ell,j}(t)\mid\cF_{\ell-1}).
\end{aligned}
\]

The next proposition is the exact finite-width input to the large-width switch-count calculation. It expresses the conditional expected switch count entirely through the quantities defined above.

\begin{proposition}[Conditional Kac--Rice formula at finite width]
\label{prop:conditional-kac-rice}
Fix $2\le\ell\le L$ and $1\le j\le n_\ell$.  Conditional on $\cF_{\ell-1}$,
\[
        \E\bigl[N_{\ell,j}(I)\mid\cF_{\ell-1}\bigr]
        =\frac1\pi\int_I
       \sqrt{ 
        \frac{Q_{\ell,N}(t)}{S_{\ell,N}(t)}-
        \Big(\frac{R_{\ell,N}(t)}{S_{\ell,N}(t)}\Big)^2
        }\dd t.
\]
For $\ell=1$, the corresponding formula is $\E N_{1,j}(I)=\int_I\sqrt{\beta_1\gamma_1}/[\pi(\beta_1+\gamma_1t^2)]\,\dd t$.
\end{proposition}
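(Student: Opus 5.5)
The plan is to condition on $\cF_{\ell-1}$ throughout and to work cell by cell on the finite partition $\mathcal C_{\ell-1,N}$ supplied by Lemma~\ref{lem:finite-gaussian-geometry}(i). On a fixed cell $C$, every activation $h_{\ell-1,i}$ is affine, $h_{\ell-1,i}(t)=p_{i,C}+q_{i,C}t$. Consequently $z_{\ell,j}(t)=U_C+V_Ct$, where
\[
U_C=b_{\ell,j}+n_{\ell-1}^{-1/2}\sum_iW_{\ell,ji}p_{i,C},\qquad V_C=n_{\ell-1}^{-1/2}\sum_iW_{\ell,ji}q_{i,C},
\]
and $(U_C,V_C)$ is a centered Gaussian pair conditional on $\cF_{\ell-1}$. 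By Lemma~\ref{lem:finite-gaussian-geometry}(ii), almost surely no zero falls on $\mathcal B_{\ell-1,N}$ and every zero inside a cell is simple and sign-changing. Hence $N_{\ell,j}(I)=\sum_C\#\{t\in C:U_C+V_Ct=0\}$, and it suffices to prove, for each cell, the cellwise identity
\[
\E\bigl[\#\{t\in C:U_C+V_Ct=0\}\mid\cF_{\ell-1}\bigr]=\int_C\frac{1}{\pi}\sqrt{\frac{Q}{S}-\frac{R^2}{S^2}}\,\dd t .
\]
Here $S=\Var(U_C+V_Ct)$, $Q=\Var V_C$ and $R=\Cov(U_C+V_Ct,V_C)$, all conditional on $\cF_{\ell-1}$; these coincide with $S_{\ell,N},Q_{\ell,N},R_{\ell,N}$ on $C$. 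Summing over the finitely many cells, which cover $I$ up to a finite set, then gives the formula. Note that $S\ge\beta_\ell>0$, so the integrand is well defined.

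For the cellwise identity I will do an elementary change of variables. If $Q=0$, then $V_C=0$ and $U_C$ is nondegenerate, so the count is $0$ almost surely. In this case $R=0$ as well, and the integrand vanishes. If $Q>0$, the pair $(U_C,V_C)$ has a density $p$, because its determinant is at least $\beta_\ell Q>0$, as in the proof of Lemma~\ref{lem:finite-gaussian-geometry}. A zero in $C$ exists exactly when $V_C\neq0$ and $-U_C/V_C\in C$, and then it is unique. So the expected count is the probability of the event $\{-U_C/V_C\in C\}$. The map $(t,v)\mapsto(u,v)=(-tv,v)$ has absolute Jacobian $|v|$, which gives
\[
\Pp(-U_C/V_C\in C)=\int_C\int_{\R}|v|\,p(-tv,v)\,\dd v\,\dd t .
\]
Next, observe that $p(-tv,v)$ equals the joint density of $(Z(t),V_C)$ at $(0,v)$, where $Z(t)=U_C+V_Ct$, because the linear map $(u,v)\mapsto(u+vt,v)$ has unit determinant. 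The inner integral is therefore $p_{Z(t)}(0)\,\E[|V_C|\mid Z(t)=0]$. For centered Gaussians this equals
\[
\frac{1}{\sqrt{2\pi S}}\sqrt{\frac{2}{\pi}}\,\sqrt{Q-\frac{R^2}{S}}=\frac1\pi\sqrt{\frac{Q}{S}-\frac{R^2}{S^2}},
\]
using the facts that the conditional variance of $V_C$ given $Z(t)=0$ is $Q-R^2/S$ and that $\E|N(0,\sigma^2)|=\sigma\sqrt{2/\pi}$.

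For $\ell=1$ the same computation applies on the single cell $\R$, with $(U,V)=(b_{1,j},W_{1,j})$. Here $S=\beta_1+\gamma_1t^2$, $Q=\gamma_1$ and $R=\gamma_1t$. Then $QS-R^2=\beta_1\gamma_1$, which yields the stated Cauchy density. No conditioning is needed in this case because $\cF_0$ is trivial.

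The main obstacle is bookkeeping rather than analysis. The cells $C$ are random, but they are $\cF_{\ell-1}$-measurable. The justification is that, conditional on $\cF_{\ell-1}$, the cells are fixed while $(b_{\ell,j},W_{\ell,j\cdot})$ remains independent Gaussian. The cellwise identity can then be applied with $C$ deterministic, and the finite sum over cells commutes with conditional expectation. A second point is to make sure that boundary points and endpoint zeros contribute nothing. This follows from Lemma~\ref{lem:finite-gaussian-geometry}(ii) and from the fact that every fixed point has positive conditional variance.
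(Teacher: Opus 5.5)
Your proposal is correct and follows essentially the same route as the paper: condition on $\cF_{\ell-1}$, work cell by cell, apply the change of variables $(t,v)\mapsto(-tv,v)$ with Jacobian $|v|$, use the unit-determinant shear to $(Z(t),V_C)$, and finish with Gaussian conditioning, treating degenerate cells ($Q=0$) and boundary zeros exactly as the paper does. The differences are cosmetic: the paper packages the cellwise computation as a separate lemma for deterministic piecewise-affine $f_i$ and then applies it conditionally, and for $\ell=1$ it uses the Cauchy law of the ratio $X/Y$, whereas your one-cell specialization with $QS-R^2=\beta_1\gamma_1$ gives the same density.
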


We recall the notation $A_\ell,B_\ell$ and $\rho_\ell$ defined in Section~\ref{sec:main-asymptotic-theorem}. For every fixed deterministic $t$, Lemma~\ref{lem:finite-gaussian-geometry} and nondegeneracy of the preactivations imply that $t$ is an activation boundary with probability zero. Consequently, our derivative convention satisfies $h'_{\ell,j}(t)=z'_{\ell,j}(t)\1\{z_{\ell,j}(t)>0\}$ almost surely. Put $\kappa_{1,N}(t):=\rho_1(t)$, and for $\ell\ge2$ set, off $\mathcal B_{\ell-1,N}$,
\[
        \kappa_{\ell,N}(t)
        :=\frac1\pi
       \sqrt{ 
        \frac{Q_{\ell,N}(t)}{S_{\ell,N}(t)}-
        \Big(\frac{R_{\ell,N}(t)}{S_{\ell,N}(t)}\Big)^2
        }.
\]
Set $\kappa_{\ell,N}=0$ on $\mathcal B_{\ell-1,N}$; these finitely many values are immaterial for all integrals below.
The second input controls the tails of the preactivations and their derivatives uniformly on the compact interval. It supplies the uniform integrability needed when the random Kac--Rice density is averaged.

\begin{lemma}[Moment bounds]
\label{lem:uniform-fourth-moments}
For every fixed layer $\ell$ and every compact interval $I$,
\[
        \sup_N\sup_{t\in I}
        \E\bigl[|z_{\ell,1}(t)|^4+|z'_{\ell,1}(t)|^4\bigr]<\infty.
\]
	Consequently, $\E[h_{\ell,1}(t)^4]$ and $\E[h'_{\ell,1}(t)^4]$, as well as $\E[h_{\ell,1}(t)^2h'_{\ell,1}(t)^2]$, are uniformly bounded in $N$ and $t\in I$.
\end{lemma}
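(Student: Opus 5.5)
We argue by induction on the layer $\ell$, proving the stronger statement that $\sup_N\sup_{t\in I}\E[|z_{\ell,1}(t)|^p+|z'_{\ell,1}(t)|^p]<\infty$ for $p=4$. For $\ell=1$ this is immediate: $z_{1,1}(t)=b_{1,1}+W_{1,1}t$ and $z'_{1,1}(t)=W_{1,1}$ are Gaussian with variances $\beta_1+\gamma_1t^2$ and $\gamma_1$, which are bounded on the compact interval $I$ and independent of $N$.

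For the induction step we condition on $\cF_{\ell-1}$. At a fixed deterministic $t$, Lemma~\ref{lem:finite-gaussian-geometry} and the nondegeneracy noted above give $t\notin\mathcal B_{\ell-1,N}$ almost surely. Conditionally on $\cF_{\ell-1}$, the pair $(z_{\ell,1}(t),z'_{\ell,1}(t))$ is therefore centered Gaussian, with variances
\[
 S_{\ell,N}(t)=\beta_\ell+\frac{\gamma_\ell}{n_{\ell-1}}\sum_ih_{\ell-1,i}(t)^2,
 \qquad
 Q_{\ell,N}(t)=\frac{\gamma_\ell}{n_{\ell-1}}\sum_ih'_{\ell-1,i}(t)^2 .
\]
Gaussian fourth moments then give $\E[z_{\ell,1}(t)^4\mid\cF_{\ell-1}]=3S_{\ell,N}(t)^2$ and $\E[z'_{\ell,1}(t)^4\mid\cF_{\ell-1}]=3Q_{\ell,N}(t)^2$. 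Taking expectations and applying Jensen (or Cauchy--Schwarz) to the empirical average,
\[
 \E\Big(\frac1{n_{\ell-1}}\sum_ih_{\ell-1,i}(t)^2\Big)^2\le\frac1{n_{\ell-1}}\sum_i\E h_{\ell-1,i}(t)^4=\E h_{\ell-1,1}(t)^4 ,
\]
where the equality uses exchangeability of the neurons within a layer. The analogous bound holds with $h'$ in place of $h$. Since $|h_{\ell-1,1}|\le|z_{\ell-1,1}|$ and $|h'_{\ell-1,1}(t)|\le|z'_{\ell-1,1}(t)|$ almost surely (by the derivative convention $h'=z'\1\{z>0\}$), the induction hypothesis bounds the right-hand sides uniformly in $N$ and $t\in I$. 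This yields
\[
 \E z_{\ell,1}(t)^4\le 3\bigl(2\beta_\ell^2+2\gamma_\ell^2M_{\ell-1}\bigr),
 \qquad
 \E z'_{\ell,1}(t)^4\le 3\gamma_\ell^2M_{\ell-1},
\]
where $M_{\ell-1}$ is the uniform fourth-moment bound from layer $\ell-1$.

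The consequences follow directly. The bounds on $\E h^4$ and $\E h'^4$ come from the same domination $|h|\le|z|$, $|h'|\le|z'|$. The mixed moment is handled by Cauchy--Schwarz: $\E[h^2h'^2]\le(\E h^4)^{1/2}(\E h'^4)^{1/2}$.

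The only delicate point is justifying the conditional Gaussian representation at a fixed $t$. This requires $t$ to lie off the inherited breakpoints almost surely, so that $z'_{\ell,1}(t)=n_{\ell-1}^{-1/2}\sum_iW_{\ell,1i}h'_{\ell-1,i}(t)$ holds with the $\cF_{\ell-1}$-measurable coefficients $h'_{\ell-1,i}(t)$. This is supplied by Lemma~\ref{lem:finite-gaussian-geometry}(i)--(ii) together with the fact that each preactivation has conditional variance at least $\beta_\ell>0$ at a fixed point, so the induction closes without any width-dependent constants.
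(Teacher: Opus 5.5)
Your proof is correct and follows essentially the same route as the paper. Both argue by induction over layers using the conditional Gaussian identities $\E[z_{\ell,1}(t)^4\mid\cF_{\ell-1}]=3S_{\ell,N}(t)^2$ and $\E[z'_{\ell,1}(t)^4\mid\cF_{\ell-1}]=3Q_{\ell,N}(t)^2$, close the induction with Jensen and exchangeability, and finish with the domination $|h|\le|z|$, $|h'|\le|z'|$ and Cauchy--Schwarz for the mixed moment; your fixed-$t$ breakpoint-avoidance remark is the paper's ``outside a null event at this fixed input'' step.
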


The third input identifies the deterministic covariance data that enter the Kac--Rice density. For fixed $t\in I$, write $M_{\ell,N}(t)$ for the vector of empirical ReLU moments
\[
        M_{\ell,N}(t):=\biggl(\frac1{n_\ell}\sum_jh_{\ell,j}(t)^2,
        \frac1{n_\ell}\sum_jh'_{\ell,j}(t)^2,
        \frac1{n_\ell}\sum_jh_{\ell,j}(t)h'_{\ell,j}(t)\biggr).
\]

\begin{proposition}[Propagation of covariance moments]
\label{prop:propagation-covariance-moments}
Fix $t\in I$.  Under the regime of fixed depth and proportional widths, for every $\ell\ge1$, in probability and in $L^1$,
\[
        M_{\ell,N}(t)\to
        \Bigl(\frac12(A_\ell+B_\ell t^2),\frac12B_\ell,\frac12B_\ell t\Bigr),
        \qquad
        (S_{\ell,N}(t),Q_{\ell,N}(t),R_{\ell,N}(t))\to
        (A_\ell+B_\ell t^2,B_\ell,B_\ell t)
\]
for the covariance vector whenever $\ell\ge2$.
\end{proposition}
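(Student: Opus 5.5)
We argue by induction on $\ell$. The covariance statement for layer $\ell\ge2$ follows from the moment statement for layer $\ell-1$, because of the exact identities
\[
S_{\ell,N}=\beta_\ell+\gamma_\ell M^{(1)}_{\ell-1,N},\qquad Q_{\ell,N}=\gamma_\ell M^{(2)}_{\ell-1,N},\qquad R_{\ell,N}=\gamma_\ell M^{(3)}_{\ell-1,N}.
\]
These hold since, conditional on $\cF_{\ell-1}$, the pair $(z_{\ell,j}(t),z'_{\ell,j}(t))$ is a linear combination of the independent weights and the bias. Substituting the limits of $M_{\ell-1,N}$ gives $(\beta_\ell+\tfrac{\gamma_\ell}{2}(A_{\ell-1}+B_{\ell-1}t^2),\tfrac{\gamma_\ell}{2}B_{\ell-1},\tfrac{\gamma_\ell}{2}B_{\ell-1}t)=(A_\ell+B_\ell t^2,B_\ell,B_\ell t)$ by the recursions, with convergence in probability and in $L^1$ inherited linearly.

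\textbf{Base case $\ell=1$.} Here $(z_{1,j}(t),z'_{1,j}(t))=(b_{1,j}+W_{1,j}t,W_{1,j})$ are i.i.d.\ centered Gaussian pairs with covariance matrix $\Sigma_1=\begin{pmatrix}A_1+B_1t^2&B_1t\\B_1t&B_1\end{pmatrix}$. The half-space identities for a centered Gaussian pair $(X,Y)$ are
\[
\E[X^2\1\{X>0\}]=\tfrac12\E X^2,\quad \E[Y^2\1\{X>0\}]=\tfrac12\E Y^2,\quad \E[XY\1\{X>0\}]=\tfrac12\E XY,
\]
each obtained by symmetry under $(X,Y)\mapsto(-X,-Y)$. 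Together with $h'=z'\1\{z>0\}$ a.s., they identify the mean of each summand as half the corresponding covariance entry. The weak law of large numbers for i.i.d.\ summands with finite second moments gives $L^2$, hence $L^1$ and in-probability, convergence.

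\textbf{Inductive step.} Fix $\ell\ge2$ and assume the statement for $\ell-1$, hence the covariance convergence at layer $\ell$ as above. Conditional on $\cF_{\ell-1}$, the pairs $(z_{\ell,j}(t),z'_{\ell,j}(t))$, $j=1,\dots,n_\ell$, are i.i.d.\ centered Gaussian with the random covariance matrix $\Sigma_{\ell,N}=\begin{pmatrix}S&R\\R&Q\end{pmatrix}$. By the half-space identities,
\[
\E[M_{\ell,N}(t)\mid\cF_{\ell-1}]=\tfrac12(S_{\ell,N},Q_{\ell,N},R_{\ell,N}).
\]
Here $t\notin\mathcal B_{\ell-1,N}$ a.s., so $z'$ is well defined and the indicator identity for $h'$ holds a.s. The conditional variance of each coordinate of $M_{\ell,N}$ is at most $n_\ell^{-1}$ times a conditional fourth moment, which is bounded by $C(S^2+Q^2)$. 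By Lemma~\ref{lem:uniform-fourth-moments} (or directly from the fourth moment bounds at layer $\ell-1$), $\E[S^2+Q^2]$ is bounded uniformly in $N$. Therefore
\[
\E\|M_{\ell,N}-\E[M_{\ell,N}\mid\cF_{\ell-1}]\|^2=O(1/n_\ell)\to0.
\]
Combined with the $L^1$ convergence of $\tfrac12(S,Q,R)$ to $\tfrac12(A_\ell+B_\ell t^2,B_\ell,B_\ell t)$, this gives $L^1$ convergence of $M_{\ell,N}(t)$, and hence convergence in probability.

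\textbf{Main obstacle.} The delicate point is the uniform integrability, namely bounding $\E[S_{\ell,N}^2+Q_{\ell,N}^2]$ uniformly in $N$. This requires care because $Q$ involves derivatives $h'$ that are only defined off the random breakpoint set. We handle it by invoking Lemma~\ref{lem:uniform-fourth-moments}: by Cauchy--Schwarz, $S^2\le 2\beta_\ell^2+2\gamma_\ell^2 n_{\ell-1}^{-1}\sum_i h_{\ell-1,i}^4$, and similarly for $Q$. The expectations of these bounds are uniformly bounded by that lemma.
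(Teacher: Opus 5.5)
Your proposal is correct and follows the paper's proof essentially step for step. The shared route is induction on $\ell$, the half-space identities plus the law of large numbers in the first layer, the exact affine relation between $(S_{\ell,N},Q_{\ell,N},R_{\ell,N})$ and $M_{\ell-1,N}$, and a conditional-variance bound of order $1/n_\ell$ (controlled through Lemma~\ref{lem:uniform-fourth-moments}) combined with the triangle inequality; the only cosmetic difference is that you write out the Jensen/Cauchy--Schwarz bound on $\E[S_{\ell,N}^2+Q_{\ell,N}^2]$, which the paper absorbs into the proof of that lemma.
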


The three stated inputs now determine the switch-count asymptotic. The Kac--Rice identity reduces each layer to the expectation of its random intensity, the covariance limit identifies the pointwise deterministic density, and the moment bound supplies uniform integrability and dominated convergence.

\begin{proof}[Proof of Proposition~\ref{prop:wide-switch-asymptotics}]
Fix a layer $\ell$.  Proposition~\ref{prop:conditional-kac-rice} gives
$\E N_{\ell,1}(I)=\int_I\E\kappa_{\ell,N}(t)\,\dd t$; for $\ell=1$ this
uses $\kappa_{1,N}=\rho_1$, and for $\ell\ge2$ it follows by first
conditioning on $\cF_{\ell-1}$.  Proposition~\ref{prop:propagation-covariance-moments}
implies, for each fixed $t$, that the covariance triple converges in
probability to $(A_\ell+B_\ell t^2,B_\ell,B_\ell t)$.  Since its first
coordinate is positive, continuity of the Kac--Rice integrand gives
\[
 \kappa_{\ell,N}(t)\longrightarrow
	\frac1\pi\sqrt{
 \frac{B_\ell}{A_\ell+B_\ell t^2}
 -\frac{B_\ell^2t^2}{(A_\ell+B_\ell t^2)^2}
 }=\rho_\ell(t)
\]
in probability.
It remains to justify passage to expectations and integration. For
$\ell\ge2$, the positive bias variance and Lemma~\ref{lem:uniform-fourth-moments}
give the quantitative chain
	\( \E\kappa_{\ell,N}(t)^2
 \le \frac{\E Q_{\ell,N}(t)}{\pi^2\beta_\ell}
 =\frac{\gamma_\ell}{\pi^2\beta_\ell}
   \E h'_{\ell-1,1}(t)^2
 \le C_\ell,\)
\[
\begin{aligned}
 \kappa_{\ell,N}(t)\xrightarrow{\Pp}\rho_\ell(t)
 \ \Longrightarrow\ 
 \E\kappa_{\ell,N}(t)\to\rho_\ell(t),\quad\text{ and }\quad
 \E N_{\ell,1}(I)
 &=\int_I\E\kappa_{\ell,N}(t)\,\dd t
 \longrightarrow\int_I\rho_\ell(t)\,\dd t.
\end{aligned}
\]
The second line uses uniform integrability, and the third uses dominated
convergence; the same $L^2$ bound gives a deterministic integrable bound for
$\E\kappa_{\ell,N}(t)$ on the compact interval $I$. For $\ell=1$, the identity
is exact. Exchangeability within each layer and fixed $L$ therefore give
\(
 \E S_N(I)
 =\sum_{\ell=1}^L n_\ell\E N_{\ell,1}(I)
 =\sum_{\ell=1}^L n_\ell\int_I\rho_\ell(t)\,\dd t+o(N).
\)
\end{proof}

We next prove the three inputs in the order in which their additional arguments arise. The conditional Kac--Rice proposition uses one auxiliary zero-count identity for a deterministic piecewise-affine Gaussian field. The remaining two inputs are then proved by moment induction and conditional laws of large numbers.

Its proof reduces to a deterministic piecewise-affine Gaussian field. Let $f_1,\dots,f_m$ be deterministic continuous piecewise-affine functions on $I=[a,b]$ with a common finite affine partition, and let $\zeta,\xi_1,\dots,\xi_m$ be independent centered Gaussian variables with $\Var(\zeta)=\tau^2>0$ and $\Var(\xi_i)=v_i^2$. Write $Z(t)=\zeta+\sum_{i=1}^m\xi_i f_i(t)$ and, at differentiability points, define
	\(S(t)=\tau^2+\sum_i v_i^2f_i(t)^2,\) 
	\(Q(t)=\sum_i v_i^2f_i'(t)^2,\) and
\(
        R(t)=\sum_i v_i^2f_i(t)f_i'(t).
\)
The auxiliary formula below computes the expected number of zeros from these quantities.

\begin{lemma}[Piecewise-affine Gaussian zero formula]
\label{lem:piecewise-kac-rice}
Under the preceding setup, we have
\[
	\E\big[\#\{t\in(a,b):Z(t)=0\}\big]
        =\frac1\pi\int_I
	\sqrt{ 
        \frac{Q(t)}{S(t)}-
        \Big(\frac{R(t)}{S(t)}\Big)^2
        }\dd t.
\]
\end{lemma}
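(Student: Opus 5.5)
Let $a=t_0<t_1<\dots<t_K=b$ be the common affine partition of the $f_i$. Almost surely $Z$ has no zero at any $t_k$, since $Z(t_k)$ is Gaussian with variance $S(t_k)\ge\tau^2>0$. The count therefore splits as a sum over the open cells $(t_{k-1},t_k)$, and I will treat each cell separately and then add. On one cell write $f_i(t)=p_i+q_it$, so that
\[
Z(t)=U+Vt,\qquad U=\zeta+\sum_i\xi_ip_i,\qquad V=\sum_i\xi_iq_i .
\]
The pair $(U,V)$ is centered Gaussian, with $\Var(U)=\tau^2+\sum_iv_i^2p_i^2$, $\Var(V)=\sum_iv_i^2q_i^2=Q$ (constant on the cell) and $\Cov(U,V)=\sum_iv_i^2p_iq_i$. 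On the cell one has $S(t)=\Var(U+Vt)$ and $R(t)=\Cov(U+Vt,V)$.

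There are two cases. If $Q=0$, then $V=0$ almost surely and $U$ is nondegenerate, so there are no zeros, and the integrand also vanishes because $Q=0$ forces $R=0$. If $Q>0$, the determinant of the covariance matrix is at least $\tau^2Q>0$, because $\zeta$ is independent of everything else. So $(U,V)$ has a joint density $p_{U,V}$, and an affine function with $V\ne0$ has at most one zero, located at $-U/V$. The number of zeros in $(c,d)$ is then $\1\{V\ne0,\,-U/V\in(c,d)\}$. Its expectation is computed with the change of variables $(t,v)\mapsto(u,v)=(-tv,v)$, whose absolute Jacobian is $|v|$:
\[
\E\#\{t\in(c,d):Z(t)=0\}=\int_c^d\int_\R|v|\,p_{U,V}(-tv,v)\,\dd v\,\dd t .
\]
Next I rewrite $(U+Vt,V)$ as a linear image of $(U,V)$ with unit determinant. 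This gives $p_{U,V}(-tv,v)=p_{Z(t),V}(0,v)$, so the inner integral equals $p_{Z(t)}(0)\,\E[|V|\mid Z(t)=0]$. That is exactly the Kac--Rice integrand on the cell.

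The remaining step is a Gaussian computation. We have $p_{Z(t)}(0)=(2\pi S)^{-1/2}$. Conditional on $Z(t)=0$, $V$ is centered Gaussian with variance $Q-R^2/S$, which is nonnegative by Cauchy--Schwarz, so $\E[|V|\mid Z(t)=0]=\sqrt{2(Q-R^2/S)/\pi}$. Multiplying gives
\[
\frac1\pi\sqrt{\frac QS-\Big(\frac RS\Big)^2}.
\]
Summing over cells, and noting that the finitely many partition points carry no Lebesgue mass, yields the formula.

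The only delicate point is justifying the change of variables, in particular making sure the degenerate set $\{V=0\}$ and boundary zeros contribute nothing. This is handled by the nondegeneracy bound $\det\ge\tau^2Q$ together with Tonelli's theorem, since all integrands are nonnegative. The formula also holds when $Q>0$ but $(p_i)$ and $(q_i)$ are collinear, because the bias $\zeta$ alone guarantees a density.
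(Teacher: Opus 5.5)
Your proposal is correct and follows the paper's proof essentially step for step. Both arguments use the same ingredients: the cellwise decomposition with boundary zeros excluded via $\Var Z(t_k)\ge\tau^2$, the dichotomy $Q=0$ versus $Q>0$ with the bound $\det\Cov(U,V)\ge\tau^2Q$, the change of variables $(t,v)\mapsto(-tv,v)$ with Jacobian $|v|$, the unit-determinant shear giving $p_{Z(t),V}(0,v)$, and the Gaussian conditioning computation. Your remark that $Q=0$ forces $R=0$, so that the integrand also vanishes on degenerate cells, is a small detail the paper leaves implicit.
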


\begin{proof}[Proof of Proposition~\ref{prop:conditional-kac-rice}]
Conditional on $\cF_{\ell-1}$, Lemma~\ref{lem:finite-gaussian-geometry} gives a finite partition on which the functions $h_{\ell-1,i}$ are deterministic and affine.  Moreover
\[
        z_{\ell,j}(t)
        =b_{\ell,j}
        +\sum_{i=1}^{n_{\ell-1}}\frac{W_{\ell,ji}}{\sqrt{n_{\ell-1}}}h_{\ell-1,i}(t)
\]
is a centered Gaussian process in the variables from the current layer $(b_{\ell,j},W_{\ell,j1},\dots,W_{\ell,jn_{\ell-1}})$, with the bias variance $\beta_\ell$ playing the role of $\tau^2$.  Lemma~\ref{lem:piecewise-kac-rice}, applied conditionally to this finite random partition, gives the displayed integral.  On a cell with $Q_{\ell,N}=0$ the restriction is a nonzero random constant almost surely.  On a cell with $Q_{\ell,N}>0$ the affine coefficient pair is nondegenerate, so every zero has nonzero slope and is sign-changing.  At each point of $\mathcal B_{\ell-1,N}$ the conditional variance of $z_{\ell,j}$ is at least $\beta_\ell$, and hence the probability of a boundary zero is zero.  The finite conditional zero count therefore equals $N_{\ell,j}(I)$ almost surely.

For $\ell=1$, write $b_{1,j}=\sqrt{\beta_1}X$ and $W_{1,j}=\sqrt{\gamma_1}Y$, where $X$ and $Y$ are independent standard Gaussian variables. The unique zero of $b_{1,j}+W_{1,j}t$ is $T=-\sqrt{\beta_1/\gamma_1}\,X/Y$. Since the ratio $X/Y$ has the standard Cauchy law, $T$ has density $\sqrt{\beta_1\gamma_1}/[\pi(\beta_1+\gamma_1t^2)]$. The switch count $N_{1,j}(I)$ is the indicator that $T$ lies in the interior of $I$, up to the null event $W_{1,j}=0$ and the endpoint null events. Integrating this density over $I$ gives the first-layer formula.
\end{proof}

\begin{proof}[Proof of Lemma~\ref{lem:piecewise-kac-rice}]
Let $a=s_0<\cdots<s_K=b$ be a common affine partition. On $C_k=(s_{k-1},s_k)$ write $Z(t)=U_k+V_kt$. Since $\zeta$ enters $U_k$ with coefficient one and does not enter $V_k$, the Cauchy--Schwarz inequality gives, for $t\in C_k$,
\[
 S(t)Q(t)-R(t)^2
 =\tau^2Q(t)
 +\left(\sum_i v_i^2f_i(t)^2\right)
  \left(\sum_i v_i^2f_i'(t)^2\right)
 -\left(\sum_i v_i^2f_i(t)f_i'(t)\right)^2
 \ge \tau^2Q(t).
\]
Here, $Q(t)=\Var(V_k)$ is constant on $C_k$. If $Q(t)=0$, then $V_k=0$ almost surely, whereas $U_k$ has variance at least $\tau^2$; the cell therefore contains no zero almost surely. If $Q(t)>0$, the displayed inequality shows that $(U_k,V_k)$ is nondegenerate and the affine restriction has at most one zero. A zero at $t\in C_k$ occurs exactly when $U_k=-tV_k$. 
Thus, the expected zero count can be obtained by integrating the joint density of $(U_k,V_k)$ over the pairs $(u,v)$ for which $u+vt=0$ for some $t\in C_k$.

Parameterize these pairs by $(t,v)$ through $(u,v)=(-tv,v)$. The absolute value of the Jacobian of this parameterization is $|v|$. Hence,
\(
\E\big[\#\{t\in C_k:Z(t)=0\}\big]
=\int_{C_k}\int_{\R}|v|p_{U_k,V_k}(-tv,v)\,\dd v\dd t.
\)
For fixed $t$, the linear change of variables $(u,v)\mapsto(z,v)=(u+tv,v)$ has Jacobian one, so
$p_{U_k,V_k}(-tv,v)=p_{Z(t),V_k}(0,v)$. Factoring the latter joint density into the marginal density of $Z(t)$ and the conditional density of $V_k$ given $Z(t)=0$, we therefore obtain
\(
\E\big[\#\{t\in C_k:Z(t)=0\}\big]
=\int_{C_k}p_{Z(t)}(0)
\E\big[|V_k|\mid Z(t)=0\big]\,\dd t.
\)
This is the Kac--Rice identity in the present affine setting. The factor $p_{Z(t)}(0)$ measures how much probability mass the field places near zero at $t$, while the conditional factor $\E[|V_k|\mid Z(t)=0]$ weights a zero by the speed at which the affine restriction crosses zero.

Since $(Z(t),V_k)$ is a centered Gaussian pair with variances $S(t)$ and $Q(t)$ and covariance $R(t)$, the conditional law of $V_k$ given $Z(t)=0$ is centered Gaussian with variance $Q(t)-R(t)^2/S(t)$. Hence,
\[
 p_{Z(t)}(0)=\frac1{\sqrt{2\pi S(t)}},\qquad
 \E[|V_k|\mid Z(t)=0]
	=\sqrt{\frac2\pi}\,\sqrt{Q(t)-R(t)^2/S(t)},
\]
and their product is $\pi^{-1}[Q(t)/S(t)-(R(t)/S(t))^2]^{1/2}$. Finally, $Z(s_k)$ has variance at least $\tau^2$ at every partition point, so none of the partition points is a zero almost surely. Summing over the cells proves the formula. The same argument excludes an identically zero restriction: for $Q>0$ the pair $(U_k,V_k)$ has a density.
\end{proof}

We now prove the auxiliary estimates.

\begin{proof}[Proof of Lemma~\ref{lem:uniform-fourth-moments}]
The claim is immediate in the first layer because
$z_{1,1}(t)=b_{1,1}+w_{1,1}t$ and $z'_{1,1}(t)=w_{1,1}$ are Gaussian with
variances bounded uniformly for $|t|\le\sup_{t\in I}|t|$.
Assume the claim in layer $\ell-1$ and fix $t\in I$.  Outside a null event at
this fixed input, the conditional variances and fourth moments satisfy
	\(Q_{\ell,N}(t)=\frac{\gamma_\ell}{n_{\ell-1}} \sum_i h'_{\ell-1,i}(t)^2,\)
\[
\begin{aligned}
S_{\ell,N}(t)&=\beta_\ell+\frac{\gamma_\ell}{n_{\ell-1}}
 \sum_i h_{\ell-1,i}(t)^2,\\
\E[z_{\ell,1}(t)^4\mid\cF_{\ell-1}]&=3S_{\ell,N}(t)^2,\qquad
\E[z'_{\ell,1}(t)^4\mid\cF_{\ell-1}]=3Q_{\ell,N}(t)^2.
\end{aligned}
\]
Jensen's inequality gives
\(
 \Big(\frac1{n_{\ell-1}}\sum_i h_{\ell-1,i}(t)^2\Big)^2
 \le\frac1{n_{\ell-1}}\sum_i h_{\ell-1,i}(t)^4,
\)
and the same estimate holds for the derivatives.  Exchangeability and the
induction hypothesis therefore bound the expectations of $S_{\ell,N}(t)^2$
and $Q_{\ell,N}(t)^2$ uniformly in $N$ and $t$.  The inequalities
$|h_{\ell,1}|\le|z_{\ell,1}|$ and
$|h'_{\ell,1}|\le|z'_{\ell,1}|$ give the corresponding ReLU bounds, while
\(
 \E[h_{\ell,1}(t)^2h'_{\ell,1}(t)^2]
	\le\sqrt{\E h_{\ell,1}(t)^4}\sqrt{\E h'_{\ell,1}(t)^4)}
\)
gives the mixed bound.  Setting the derivative to zero at points of
nondifferentiability cannot increase any of these moments.
\end{proof}

\begin{proof}[Proof of Proposition~\ref{prop:propagation-covariance-moments}]
We argue by induction on $\ell$ for the fixed input $t$.  In the first layer,
we have 
	\( (z_{1,j}(t),z'_{1,j}(t))=(b_{1,j}+w_{1,j}t,w_{1,j}),\)
	\(\E[Z_+^2]=\tfrac12\Var(Z),\)
	\(\E[(Z')^2\1\{Z>0\}]=\tfrac12\Var(Z'),\) and 
\(
 \E[ZZ'\1\{Z>0\}]=\tfrac12\Cov(Z,Z')
\)
for a centered Gaussian pair $(Z,Z')$.  The ordinary law of large numbers,
together with Lemma~\ref{lem:uniform-fourth-moments}, gives the stated
convergence in $L^2$, and therefore in probability and $L^1$.
Suppose the empirical limits hold in layer $\ell-1$.  Conditional on
$\cF_{\ell-1}$, the pairs in layer $\ell$ are independent and identically distributed centered Gaussian, and
\[
 \begin{aligned}
 S_{\ell,N}(t)&=\beta_\ell+\frac{\gamma_\ell}{n_{\ell-1}}
 \sum_i h_{\ell-1,i}(t)^2,&
 \E[h_{\ell,j}(t)^2\mid\cF_{\ell-1}]&=\tfrac12S_{\ell,N}(t),\\
 Q_{\ell,N}(t)&=\frac{\gamma_\ell}{n_{\ell-1}}
 \sum_i h'_{\ell-1,i}(t)^2,&
 \E[h'_{\ell,j}(t)^2\mid\cF_{\ell-1}]&=\tfrac12Q_{\ell,N}(t),\\
 R_{\ell,N}(t)&=\frac{\gamma_\ell}{n_{\ell-1}}
 \sum_i h_{\ell-1,i}(t)h'_{\ell-1,i}(t),&
 \E[h_{\ell,j}(t)h'_{\ell,j}(t)\mid\cF_{\ell-1}]&=\tfrac12R_{\ell,N}(t).
 \end{aligned}
\]
The induction hypothesis and the recursions for $A_\ell$ and $B_\ell$ give
convergence of the covariance triple to
$(A_\ell+B_\ell t^2,B_\ell,B_\ell t)$ in probability and $L^1$. Conditional
independence and Lemma~\ref{lem:uniform-fourth-moments}, applied to the three
components of $M_{\ell,N}(t)$, yield
\[
\begin{aligned}
 \E\|M_{\ell,N}(t)-\E[M_{\ell,N}(t)\mid\cF_{\ell-1}]\|^2
 &\le \frac{C_{\ell,t}}{n_\ell},\\
 \E[M_{\ell,N}(t)\mid\cF_{\ell-1}]
 &=\frac12(S_{\ell,N}(t),Q_{\ell,N}(t),R_{\ell,N}(t)),\\
 \E\|M_{\ell,N}(t)-\tfrac12(A_\ell+B_\ell t^2,B_\ell,B_\ell t)\|
 &\le \frac{\sqrt{C_{\ell,t}}}{\sqrt{n_\ell}}
 +\frac12\E\|(S_{\ell,N},Q_{\ell,N},R_{\ell,N})\\
 &\hspace{4.2cm}-(A_\ell+B_\ell t^2,B_\ell,B_\ell t)\|\longrightarrow0.
\end{aligned}
\]
Hence, $M_{\ell,N}(t)\to\tfrac12(A_\ell+B_\ell t^2,B_\ell,B_\ell t)$ in
$L^1$ and therefore in probability. This completes the induction.
\end{proof}

\section{Proof of Theorem \ref{thm:higher-dimensional-boundaries}: higher dimensions}
\label{sec:higher-dimensional-boundaries}

We now prove Theorem~\ref{thm:higher-dimensional-boundaries} using the notation introduced in Section~\ref{subsec:higher-dimensional-main-result}. The argument has four inputs: finite polyhedral geometry, an exact conditional surface formula, propagation of the covariance matrices, and a visibility estimate for the scalar kink set. We state these four propositions first, derive the theorem immediately from them, and then prove the geometric and probabilistic inputs.
Let $\mathcal U$ be the finite set of hidden neurons, ordered first by layer
and then by neuron index, and let
$\varepsilon\in\{0,1\}^{\mathcal U}$. Replace each ReLU gate by multiplication
by the corresponding coordinate of $\varepsilon$, and define the resulting
affine functions $z_u^\varepsilon$ and $h_u^\varepsilon$ recursively. Thus,
$h_u^\varepsilon=\varepsilon_u z_u^\varepsilon$, and
$z_u^\varepsilon(x)=a_u^\varepsilon+
\langle g_u^\varepsilon,x\rangle$, where the coefficients are polynomials in
the finite parameter vector of the network. Set
\(
 C_\varepsilon
 =D\cap
 \{x:\min_{u\in\mathcal U} (2\varepsilon_u-1)z_u^\varepsilon(x)>0\},
 \) and \(
 \overline C_\varepsilon^{\,*}
 =D\cap
 \{x:\min_{u\in\mathcal U}(2\varepsilon_u-1)z_u^\varepsilon(x)\ge0\}.
\)
The second set is obtained by replacing the strict inequalities by weak ones;
since $D$ is open, it need not be the topological closure of $C_\varepsilon$.
If $\varepsilon_u=0$, let $\varepsilon^u$ be the mask obtained by changing
only the coordinate $u$ to one, and put
\(
 F_{\varepsilon,u}
 =D\cap\{z_u^\varepsilon=0\}\cap
 \{\min_{v\ne u}(2\varepsilon_v-1)z_v^\varepsilon>0\}.
\)
On $\{z_u^\varepsilon=0\}$, the affine functions associated with
$\varepsilon$ and $\varepsilon^u$ agree for every neuron other than $u$.
Every regular switching facet of the realized network therefore appears in
exactly one of the finitely many sets $F_{\varepsilon,u}$ with
$\varepsilon_u=0$. The first central proposition turns this deterministic mask representation into the generic finite polyhedral geometry used by the remaining arguments.
Let $\mathscr S_N$ denote the union of all faces of this masked polyhedral decomposition of dimension at most $d-2$, and write $u\prec v$ when $u$ belongs to an earlier hidden layer than $v$. For a mask $\varepsilon$ and hidden neuron $u$, let $D_u^\varepsilon$ denote the downstream sensitivity obtained by freezing all downstream gates according to $\varepsilon$.

\begin{proposition}[Finite polyhedral geometry and Gaussian genericity]
\label{lem:higher-dimensional-geometry}
For every fixed finite architecture, almost surely, the following assertions hold simultaneously for all masks $\varepsilon$ and hidden neurons $u,v$. Every nonempty cell $C_\varepsilon$ is full dimensional, so $\dim C_\varepsilon=d$, and the realized and masked networks agree there: $(z_w,h_w)|_{C_\varepsilon}=(z_w^\varepsilon,h_w^\varepsilon)|_{C_\varepsilon}$ for every hidden neuron $w$. No masked preactivation vanishes identically, that is, $(a_u^\varepsilon,g_u^\varepsilon)\ne(0,0)$, and every nonempty switching facet satisfies $g_u^\varepsilon\ne0$ and $F_{\varepsilon,u}\subseteq\{x\in D:a_u^\varepsilon+\langle g_u^\varepsilon,x\rangle=0\}$.

Distinct hidden-neuron boundaries meet only in codimension at least two: if $u\ne v$, then $\dim(\mathcal Z_u(D)\cap\mathcal Z_v(D))\le d-2$ and $\mathcal H^{d-1}(\mathcal Z_u(D)\cap\mathcal Z_v(D))=0$. Moreover, $\mathcal H^{d-1}(\mathscr S_N)=0$, and if $u\prec v$, then $\mathcal H^{d-1}(F_{\varepsilon,u}\cap\{z_v=0\})=0$. Finally, whenever $F_{\varepsilon,u}\ne\varnothing$ and $\varepsilon$ contains an active path from $u$ to the output, we have $D_u^\varepsilon\ne0$.
\end{proposition}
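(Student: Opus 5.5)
The plan is to work on the finite parameter space $\Theta$ of the fixed architecture, where the Gaussian law has a density, so it suffices to show that each failure event lies in a finite union of zero sets of nontrivial polynomials (or in sets built from such zero sets by projection/Fubini arguments). Since there are finitely many masks and finitely many pairs of neurons, each assertion can be proved for a fixed $\varepsilon$, $u$, $v$, and the finite union can be taken at the end. Throughout, the structural fact used is the one already exploited in the masked description of Section~\ref{subsec:higher-dimensional-main-result}: the coefficients $(a_u^\varepsilon,g_u^\varepsilon)$ are polynomials in the parameters, the bias $b_u$ enters $a_u^\varepsilon$ linearly with coefficient one, and $b_u$ enters no coefficient of any neuron $w$ with $w\preceq u$, $w\neq u$.

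\textbf{Nonvanishing and facets.} The first assertions are handled in three steps.

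First, $(a_u^\varepsilon,g_u^\varepsilon)\neq(0,0)$ holds almost surely. Conditional on all other parameters, $a_u^\varepsilon=b_u+(\text{terms free of }b_u)$ has a nondegenerate Gaussian law.

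Second, for $g_u^\varepsilon\neq0$ on a nonempty facet: if $g_u^\varepsilon=0$, then $z_u^\varepsilon$ is the constant $a_u^\varepsilon$, which is nonzero almost surely. Hence $\{z_u^\varepsilon=0\}$ is empty, so $F_{\varepsilon,u}=\varnothing$. The inclusion of $F_{\varepsilon,u}$ in the stated hyperplane is then the definition.

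Third, agreement of the realized and masked networks on $C_\varepsilon$ follows by induction over layers. On $C_\varepsilon$ every sign $(2\varepsilon_w-1)z_w^\varepsilon>0$ is prescribed, so $\sigma(z_w^\varepsilon)=\varepsilon_wz_w^\varepsilon$, and this matches the realized recursion. Full dimensionality is immediate because $C_\varepsilon$ is an open subset of $\R^d$ (a finite intersection of open half-spaces with the open set $D$), so it is $d$-dimensional whenever it is nonempty.

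\textbf{Codimension-two intersections.} For $u\neq v$ and masks $\varepsilon,\varepsilon'$, consider the two hyperplanes $H_u^\varepsilon=\{z_u^\varepsilon=0\}$ and $H_v^{\varepsilon'}=\{z_v^{\varepsilon'}=0\}$. The aim is to show that almost surely they do not coincide, so that their intersection is either empty or an affine set of dimension $d-2$. Coincidence forces $(a_u,g_u)$ and $(a_v,g_v)$ to be parallel, which requires all $2\times2$ minors $a_u^\varepsilon g_{v,k}^{\varepsilon'}-a_v^{\varepsilon'}g_{u,k}^\varepsilon$ to vanish. Take $v$ to be the later neuron, or the one with larger index within the same layer. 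Then $b_v$ enters only $a_v^{\varepsilon'}$, and the minor is affine in $b_v$ with coefficient $-g_{u,k}^\varepsilon$. This is a nonzero polynomial, because $g_u^\varepsilon\neq0$ can be assumed (otherwise the boundary is empty). Hence coincidence is a null event.

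Since $\mathcal Z_u(D)$ is contained in the finite union over $\varepsilon$ of $H_u^\varepsilon\cap D$, the intersection $\mathcal Z_u(D)\cap\mathcal Z_v(D)$ is contained in finitely many $(d-2)$-dimensional affine sets, which have $\mathcal H^{d-1}$ measure zero. The same argument gives $\mathcal H^{d-1}(F_{\varepsilon,u}\cap\{z_v=0\})=0$. Likewise, $\mathscr S_N$, consisting of faces cut out by at least two non-coinciding hyperplanes among finitely many, has $\mathcal H^{d-1}(\mathscr S_N)=0$.

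\textbf{Downstream sensitivity.} The final claim is where I expect the main obstacle. The difficulty is that the switching location is now a whole facet rather than a point, and the mask $\varepsilon$ is a deterministic label rather than a gate vector realized at a fixed point. Fortunately $D_u^\varepsilon$ does not depend on $x$: it is the path sum from Lemma~\ref{lem:finite-gaussian-geometry}(iii) with the gates given by $\varepsilon$, and so it is a polynomial purely in the downstream weights and output weights. If $\varepsilon$ contains an active path, its edge monomial cannot be produced by any other path, so the polynomial is nontrivial and $\{D_u^\varepsilon=0\}$ is null. This holds regardless of whether $F_{\varepsilon,u}\neq\varnothing$, which avoids any conditioning on the facet.

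The remaining care is purely bookkeeping: every exceptional set used above is defined for deterministic masks and neurons, never for random ones, so the finite union over all $(\varepsilon,\varepsilon',u,v)$ yields a single almost-sure event on which every assertion of Proposition~\ref{lem:higher-dimensional-geometry} holds simultaneously.
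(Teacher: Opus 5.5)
Your proposal is correct and follows the paper's proof essentially step for step. The bias $b_u$ entering $a_u^\varepsilon$ with coefficient one rules out vanishing masked preactivations, and the $2\times2$ minor with $\partial/\partial b_v=-g_{u,k}^\varepsilon$ rules out coincident hyperplanes. Uniqueness of the active-path monomial makes $D_u^\varepsilon$ a nontrivial polynomial, and a finite union over deterministic masks and neurons finishes the argument. Your explicit covering $\mathcal Z_u(D)\subseteq\bigcup_\varepsilon\{z_u^\varepsilon=0\}$ and the openness argument for $\dim C_\varepsilon=d$ only spell out steps the paper leaves implicit.
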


Recall from Section~\ref{subsec:higher-dimensional-main-result} that $\mathcal Z_{\ell,j}(D)=\mathcal Z_u(D)=\{x\in D:z_{\ell,j}(x)=0\}$ is the activation boundary of the neuron $u=(\ell,j)$, and that $\mathsf S_{N,d}(D)$ in~\eqref{eq:snd} sums the $\mathcal H^{d-1}$ measures of these boundaries over all hidden neurons.
At points where the preceding activations are differentiable, define
\( S_{\ell,N}(x)=\Var(z_{\ell,j}(x)\mid\cF_{\ell-1})\),
 \(R_{\ell,N}(x)=\Cov(z_{\ell,j}(x),\nabla z_{\ell,j}(x)\mid\cF_{\ell-1}),\)
 and 
\[
 \begin{aligned}
 C_{\ell,N}(x)&=\Cov(\nabla z_{\ell,j}(x)\mid\cF_{\ell-1}),&
 \Gamma_{\ell,N}(x)&=C_{\ell,N}(x)-
 \frac{R_{\ell,N}(x)R_{\ell,N}(x)^{\mathsf T}}{S_{\ell,N}(x)}.
 \end{aligned}
\]
Here $R_{\ell,N}(x)$ is a column vector. On the finite union of cell
boundaries, these quantities may be defined arbitrarily. Put
\[
 \kappa^{(d)}_{\ell,N}(x)
 =\frac{1}{\sqrt{2\pi S_{\ell,N}(x)}}
 \E\|G_{\ell,N}(x)\|,
 \qquad
 G_{\ell,N}(x)\mid\cF_{\ell-1}\sim N(0,\Gamma_{\ell,N}(x)).
\]
For the first layer, the same notation refers to the deterministic covariance
quantities of $b_{1,j}+\langle W_{1,j},x\rangle$. The second central proposition expresses the expected surface measure of one activation boundary through these conditional covariance quantities.

\begin{proposition}[Conditional Kac--Rice formula in fixed dimension]
\label{prop:higher-dimensional-kac-rice}
For every $1\le\ell\le L$ and $1\le j\le n_\ell$, we have
\(
 \E[\mathcal H^{d-1}(\mathcal Z_{\ell,j}(D))
 \mid\cF_{\ell-1}]
 =\int_D\kappa^{(d)}_{\ell,N}(x)\,\dd x,
\)
where the conditioning is omitted for $\ell=1$.
\end{proposition}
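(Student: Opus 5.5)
We argue as in Lemma~\ref{lem:piecewise-kac-rice}, replacing the zero count by surface measure and the Jacobian $|v|$ by the coarea factor $\|g\|$. Condition on $\cF_{\ell-1}$. By Proposition~\ref{lem:higher-dimensional-geometry}, applied to the network truncated at layer $\ell-1$, the input domain $D$ splits into finitely many full-dimensional convex cells $C$. On each cell the previous activations are deterministic affine functions $h_{\ell-1,i}(x)=p_{i,C}+\langle q_{i,C},x\rangle$. The cell boundaries form a finite union of pieces of hyperplanes, so they are Lebesgue null. On a cell $C$ we write $z_{\ell,j}(x)=U_C+\langle V_C,x\rangle$. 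Here $(U_C,V_C)\in\R\times\R^d$ is a centered Gaussian vector, linear in the current-layer parameters $(b_{\ell,j},W_{\ell,j\cdot})$.

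The bias enters $U_C$ with coefficient one and does not enter $V_C$. The Cauchy--Schwarz argument of Lemma~\ref{lem:piecewise-kac-rice} then gives $\Cov(U_C+\langle V_C,x\rangle,V_C)$ with value variance $S\ge\beta_\ell$. It also gives $\Gamma_{\ell,N}(x)\succeq \beta_\ell C_{\ell,N}/S$ in the sense that $\det\Cov$ of the pair restricted to the range of $C_{\ell,N}$ is positive. Hence $\{z_{\ell,j}=0\}\cap C$ is almost surely either empty or a hyperplane piece, and the first case always occurs when $V_C=0$. Adding over cells, the set $\mathcal Z_{\ell,j}(D)$ differs from $\bigcup_C\{U_C+\langle V_C,\cdot\rangle=0\}\cap C$ only by a subset of the cell boundaries. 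That subset is $\mathcal H^{d-1}$-null almost surely, because the conditional variance of $z_{\ell,j}$ is at least $\beta_\ell$ everywhere, so $z_{\ell,j}$ vanishes on a $(d-1)$-dimensional boundary piece with probability zero. This last point needs care: I would apply Fubini on each boundary hyperplane piece, using $\Pp(z_{\ell,j}(y)=0)=0$ for each $y$, which makes the expected $\mathcal H^{d-1}$-measure of the zero set inside the piece vanish.

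The core step is a surface Kac--Rice identity on one cell:
\[
\E\,\mathcal H^{d-1}\bigl(\{x\in C:U_C+\langle V_C,x\rangle=0\}\bigr)=\int_C p_{Z(x)}(0)\,\E\bigl[\|V_C\|\bigm| Z(x)=0\bigr]\dd x,
\]
where $Z(x)=U_C+\langle V_C,x\rangle$. I would prove it by a coarea formula on parameter space, in line with the one-dimensional Jacobian computation. Fix a test direction and write the left side as $\E\int_C\delta(Z(x))\|\nabla Z\|\dd x$. More rigorously, for $\varepsilon>0$ the coarea formula applied to the affine map $x\mapsto Z(x)$ on $C$ gives, for $V_C\ne0$,
\[
\frac1{2\varepsilon}\int_C\1\{|Z(x)|<\varepsilon\}\|V_C\|\dd x=\frac1{2\varepsilon}\int_{-\varepsilon}^{\varepsilon}\mathcal H^{d-1}(\{Z=s\}\cap C)\dd s .
\]
Take expectations and use Fubini on the left. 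For fixed $x$, the pair $(Z(x),V_C)$ is Gaussian with a continuous density in the first coordinate, which is uniformly bounded by $(2\pi\beta_\ell)^{-1/2}$. Letting $\varepsilon\to0$, the left side converges to $\int_C p_{Z(x)}(0)\E[\|V_C\|\mid Z(x)=0]\dd x$.

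On the right side, the expected measure of the level set $\{Z=s\}\cap C$ is continuous in $s$. Continuity follows from dominated convergence: the level-set measure is bounded by the diameter of $D$ raised to the power $d-1$, since $C$ is convex and bounded, and for fixed parameters the level sets vary continuously in the Hausdorff sense except on a null set. I expect this limit interchange to be the main technical obstacle. If the continuity argument proves awkward, a fallback is to apply the area/coarea formula directly to the map $(x,\text{parameters})$, or to parametrize the hyperplanes as in the one-dimensional $(t,v)\mapsto(-tv,v)$ change of variables.

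Finally, Gaussian conditioning identifies the integrand. We have $p_{Z(x)}(0)=(2\pi S_{\ell,N}(x))^{-1/2}$, and the law of $V_C$ given $Z(x)=0$ is $N(0,C_{\ell,N}-R_{\ell,N}R_{\ell,N}^{\mathsf T}/S_{\ell,N})=N(0,\Gamma_{\ell,N}(x))$. The integrand is therefore $\kappa^{(d)}_{\ell,N}(x)$. Summing over cells proves the claim. For $\ell=1$ there is a single cell $D$ and the computation is unconditional, with deterministic covariances $S=\beta_1+\gamma_1\|x\|^2$, $R=\gamma_1x$ and $C=\gamma_1I_d$.
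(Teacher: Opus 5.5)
Your proposal is correct and has the same overall structure as the paper. Both condition on $\cF_{\ell-1}$, work cell by cell on the inherited complex, identify the integrand by Gaussian regression, and show that the skeleton carries no surface measure. The difference is how you prove the key cellwise identity.

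The paper, through Lemma~\ref{lem:piecewise-surface-kac-rice}, takes no limit. It conditions on the gradient $V_C=v$ and uses that the independent bias forces $\Var(U_C\mid V_C)\ge\beta_\ell$, even when the law of $V_C$ is singular. It then applies the coarea formula to the deterministic map $x\mapsto\langle v,x\rangle$ with weight $p_{U_C\mid V_C=v}(-\langle v,x\rangle)$. This gives the exact identity $\E[\mathcal H^{d-1}(\{Z=0\}\cap E)\mid V_C=v]=\int_E\|v\|\,p_{U_C\mid V_C=v}(-\langle v,x\rangle)\,\dd x$. Integrating over $v$ yields $p_{Z(x)}(0)\E[\|V_C\|\mid Z(x)=0]$.

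Your $\varepsilon$-regularization is the classical Rice route and does work, but it needs two limit interchanges, and the right-hand one is only sketched. Hausdorff continuity of the level sets is not enough by itself. You need convexity of the cell, so that hyperplane-section volumes are continuous inside the support interval. You also need to exclude the null event that $-U_C$ lands on an endpoint of the range of $\langle V_C,\cdot\rangle$ over $C$.

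The simplest repair is the paper's device. Conditionally on $V_C$, the expected level-set measure is a convolution of the bounded continuous density of $U_C\mid V_C$ with a bounded, compactly supported function of the level, so it is continuous. The same conditioning gives the domination $(2\pi\beta_\ell)^{-1/2}\E\|V_C\|$ on the left-hand side. Your route extends to smooth non-affine fields, where there is no global gradient to condition on. The paper's route gives an exact two-line identity in the affine setting.

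For the skeleton, your Fubini argument, $\E\,\mathcal H^{d-1}(\{y\in F:z_{\ell,j}(y)=0\})=\int_F\Pp(z_{\ell,j}(y)=0)\,\mathcal H^{d-1}(\dd y)=0$, is a valid alternative to the paper's reasoning. The paper notes that the restriction to a $(d-1)$-face is affine, so it carries positive measure only if it vanishes at a fixed interior point $x_F$. Your argument does not need that affine structure on the faces.
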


For fixed $x\in D$, define 
\(
 M^{(0)}_{\ell,N}(x)=\frac1{n_\ell}\sum_jh_{\ell,j}(x)^2,\) and
\[
 M^{(1)}_{\ell,N}(x)=\frac1{n_\ell}\sum_j
 h_{\ell,j}(x)\nabla h_{\ell,j}(x),\text{ and }
 M^{(2)}_{\ell,N}(x)=\frac1{n_\ell}\sum_j
 \nabla h_{\ell,j}(x)\nabla h_{\ell,j}(x)^{\mathsf T}.
\]
The second quantity is a column vector and the third is a symmetric matrix. The third central proposition propagates these empirical moments and identifies the deterministic covariance limit of the surface density.

\begin{proposition}[Propagation of the covariance matrices]
\label{prop:higher-dimensional-moments}
For every fixed $x\in D$ and every layer $\ell$, in probability and in $L^1$,
\[
 \left(M^{(0)}_{\ell,N}(x),M^{(1)}_{\ell,N}(x),
 M^{(2)}_{\ell,N}(x)\right)
 \longrightarrow
 \frac12\left(s_\ell(x),B_\ell x,B_\ell I_d\right).
\]
Consequently,
$\left(S_{\ell,N}(x),R_{\ell,N}(x),C_{\ell,N}(x)\right)\to
\left(s_\ell(x),B_\ell x,B_\ell I_d\right)$ in probability and in $L^1$.
\end{proposition}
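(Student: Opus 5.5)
The plan is to mimic the proof of Proposition~\ref{prop:propagation-covariance-moments}, now by induction on $\ell$ for the fixed input $x\in D$ and with vector- and matrix-valued moments. First I would record the analogue of Lemma~\ref{lem:uniform-fourth-moments}: $\sup_N\E[|z_{\ell,1}(x)|^4+\|\nabla z_{\ell,1}(x)\|^4]<\infty$. This follows by the same induction, using conditional Gaussianity and Jensen applied to the empirical averages. For a fixed $x$, the argument behind Proposition~\ref{lem:higher-dimensional-geometry} (each $z_u(x)$ has conditional variance at least $\beta_\ell$) shows that $x$ lies on no activation boundary almost surely. Hence $\nabla h_{\ell,j}(x)=\nabla z_{\ell,j}(x)\1\{z_{\ell,j}(x)>0\}$ almost surely.

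The basic Gaussian input is the half-space identity. For a centered jointly Gaussian vector $(Z,Y)\in\R\times\R^d$,
\[
\E[Z^2\1\{Z>0\}]=\tfrac12\Var Z,\qquad
\E[ZY\1\{Z>0\}]=\tfrac12\Cov(Z,Y),\qquad
\E[YY^{\mathsf T}\1\{Z>0\}]=\tfrac12\Cov(Y).
\]
All three follow from the symmetry $(Z,Y)\mapsto(-Z,-Y)$, since the integrands are even and $\{Z=0\}$ is null.

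For the first layer, $(z_{1,j}(x),\nabla z_{1,j}(x))=(b_{1,j}+\langle W_{1,j},x\rangle,W_{1,j})$ is centered Gaussian. Its variance is $\beta_1+\gamma_1\|x\|^2=s_1(x)$, its cross-covariance is $\gamma_1x=B_1x$, and its gradient covariance is $\gamma_1I_d=B_1I_d$. These are i.i.d.\ over $j$ with bounded fourth moments, so the $L^2$ law of large numbers together with the half-space identities gives $M_{1,N}(x)\to\tfrac12(s_1(x),B_1x,B_1I_d)$ in $L^2$, hence in $L^1$ and in probability.

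For the inductive step, condition on $\cF_{\ell-1}$. The pairs $(z_{\ell,j}(x),\nabla z_{\ell,j}(x))$ are then i.i.d.\ centered Gaussian with
\[
S_{\ell,N}=\beta_\ell+\gamma_\ell M^{(0)}_{\ell-1,N},\qquad
R_{\ell,N}=\gamma_\ell M^{(1)}_{\ell-1,N},\qquad
C_{\ell,N}=\gamma_\ell M^{(2)}_{\ell-1,N}.
\]
By the induction hypothesis this triple converges in $L^1$ to
\[
\bigl(\beta_\ell+\tfrac{\gamma_\ell}2 s_{\ell-1}(x),\ \tfrac{\gamma_\ell}{2}B_{\ell-1}x,\ \tfrac{\gamma_\ell}{2}B_{\ell-1}I_d\bigr)=(s_\ell(x),B_\ell x,B_\ell I_d),
\]
using $A_\ell=\beta_\ell+\gamma_\ell A_{\ell-1}/2$ and $B_\ell=\gamma_\ell B_{\ell-1}/2$. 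This is the second claim at layer $\ell$. The conditional half-space identities give $\E[M_{\ell,N}\mid\cF_{\ell-1}]=\tfrac12(S_{\ell,N},R_{\ell,N},C_{\ell,N})$. The conditional variance of each entry of $M_{\ell,N}$ is at most $n_\ell^{-1}$ times a conditional fourth moment, and that moment is bounded in expectation by the moment lemma. So $\E\|M_{\ell,N}-\E[M_{\ell,N}\mid\cF_{\ell-1}]\|^2\le C/n_\ell$. The triangle inequality then gives $L^1$ convergence of $M_{\ell,N}$ to $\tfrac12(s_\ell,B_\ell x,B_\ell I_d)$, closing the induction.

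The main obstacle is bookkeeping rather than conceptual. One must verify the uniform fourth-moment bound for the gradient vector in fixed dimension, entrywise over the $d^2$ matrix entries. The cleanest route is to bound $\E\|\nabla z_{\ell,1}\|^4$ through $\E[(\operatorname{tr}C_{\ell,N})^2]\le\gamma_\ell^2\E\|\nabla h_{\ell-1,1}\|^4$. One must also justify the almost-sure derivative formula at a fixed $x$. Since $d$ and $L$ are fixed, all constants remain finite and the argument is a direct vectorization of the one-dimensional one.
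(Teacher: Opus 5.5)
Your proposal is correct and follows the paper's proof essentially step for step. It uses the same uniform fourth-moment bound via $\E[\|\nabla z_{\ell,1}\|^4\mid\cF_{\ell-1}]\le 3(\operatorname{tr}C_{\ell,N})^2$ and Jensen, the same central-symmetry half-space identities and law of large numbers for the first layer, and the same inductive step: $L^1$ convergence of $(S_{\ell,N},R_{\ell,N},C_{\ell,N})$ from the recursions for $A_\ell,B_\ell$, combined with an $O(1/n_\ell)$ conditional-variance bound and the triangle inequality.
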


The fourth central proposition identifies the codimension-one scalar kink set and bounds the surface measure lost through inactive downstream layers. It is the step that transfers the activation-boundary asymptotic to the scalar output.

\begin{proposition}[Visible facets and the scalar kink set]
\label{prop:higher-dimensional-visibility}
On the event in Proposition~\ref{lem:higher-dimensional-geometry}, the part of
codimension one in $\mathcal R_N(D)$ is the union of the regular switching
facets whose downstream sensitivity is nonzero. This union is disjoint up to a
set of zero $\mathcal H^{d-1}$ measure, and
\[
 \E\bigl[\mathsf S_{N,d}(D)-
 \mathcal H^{d-1}(\mathcal R_N(D))\bigr]
 \le
 \sum_{r=1}^L2^{-n_r}
 \E\mathsf S_{N,d}(D).
\]
	In particular, if $\E[\mathsf S_{N,d}(D)]=O(N)$ and
$N\sum_{r=1}^L2^{-n_r}\to0$, then the left side tends to zero.
\end{proposition}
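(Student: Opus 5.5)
We prove Proposition~\ref{prop:higher-dimensional-visibility} by transcribing the one-dimensional proofs of Lemma~\ref{lem:partition-identities} and Proposition~\ref{prop:asymptotic-visibility} into the masked polyhedral language, using Proposition~\ref{lem:higher-dimensional-geometry} in place of Lemma~\ref{lem:finite-gaussian-geometry}.

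\emph{Identification of the kink set.} On each full-dimensional cell $C_\varepsilon$ the realized network agrees with the masked one, so $f_N$ is affine there. Hence $\mathcal R_N(D)$ is contained in the union of the regular facets $F_{\varepsilon,u}$ and the lower-dimensional skeleton $\mathscr S_N$, and $\mathcal H^{d-1}(\mathscr S_N)=0$.

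Fix a point $x_0$ in the relative interior of a regular facet $F_{\varepsilon,u}$, with $x_0$ off every $\{z_v=0\}$ for $v\ne u$. Such points exhaust the facet up to $\mathcal H^{d-1}$-null sets, by the codimension-two intersection statements. In a neighborhood of $x_0$ only the gate of $u$ changes. Freezing all other gates and substituting downstream as in Lemma~\ref{lem:partition-identities} gives the gradient jump
\[
 \nabla f^{\varepsilon^u}-\nabla f^{\varepsilon}=D_u^\varepsilon\, g_u^\varepsilon
\]
across the hyperplane $\{z_u^\varepsilon=0\}$. Since $g_u^\varepsilon\ne0$, the point $x_0$ is a kink exactly when $D_u^\varepsilon\ne0$. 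So the codimension-one part of $\mathcal R_N(D)$ is the union of the visible regular facets.

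These facets are pairwise $\mathcal H^{d-1}$-disjoint. Facets of distinct neurons meet in codimension two. Facets of one neuron $u$ with different masks lie in cells separated by some other $z_v$, so their overlap lies in $\{z_v=0\}$ and is null by the $u\prec v$ statement or by the pairwise intersection statement. It follows that
\[
 \mathsf S_{N,d}(D)-\mathcal H^{d-1}(\mathcal R_N(D))=\sum_u\mathcal H^{d-1}\bigl(\{x\in\mathcal Z_u(D):\text{invisible}\}\bigr)\ge0.
\]
By the last assertion of Proposition~\ref{lem:higher-dimensional-geometry}, any invisible point $x$ of $\mathcal Z_u(D)$, outside null sets, has some downstream layer $r$ entirely inactive at $x$. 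Otherwise one active neuron per downstream layer would give an active path by full connectivity. For $u$ in layer $L$ the direct output weight is nonzero, so no such points arise there.

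\emph{Expectation bound.} For $u$ in layer $\ell$ and $r>\ell$, the main step is to show
\[
 \E\Bigl[\int_{\mathcal Z_u(D)}\1_{A_r(x)}\,\dd\mathcal H^{d-1}(x)\,\Big|\,\cF_\ell\Bigr]=2^{-n_r}\,\mathcal H^{d-1}(\mathcal Z_u(D)),
\]
where $A_r(x)$ is the event that every neuron of layer $r$ is inactive at $x$. The set $\mathcal Z_u(D)$ is $\cF_\ell$-measurable with finite $\mathcal H^{d-1}$-measure almost surely. The function $(x,\omega)\mapsto\1_{A_r(x)}$ is jointly measurable, and $\mathcal Z_u(D)$ is a finite union of polytopes described by polynomial inequalities in the parameters. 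We therefore apply Fubini/Tonelli conditionally on $\cF_\ell$: integrate over $x\in\mathcal Z_u(D)$ and then take the conditional expectation pointwise.

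For fixed $x$, $\Pp(A_r(x)\mid\cF_{r-1})=2^{-n_r}$. Conditional on $\cF_{r-1}$ the values $z_{r,k}(x)$ are independent centered Gaussians with variance at least $\beta_r$, so their signs are independent and fair. The tower property then gives $\Pp(A_r(x)\mid\cF_\ell)=2^{-n_r}$.

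The main technical obstacle is justifying the Fubini step with the random, parameter-dependent integration domain. The one-dimensional proof avoided this by using the finite list $T_1,\dots,T_M$. Here I would handle it by disintegrating the parameters as (layers up to $\ell$, later layers). Those two blocks are independent, and $\mathcal Z_u(D)$ depends only on the first block, so Tonelli applies directly on the product space.

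Summing over $r$ and $u$ gives
\[
 \E\bigl[\mathsf S_{N,d}(D)-\mathcal H^{d-1}(\mathcal R_N(D))\bigr]\le\sum_{r}2^{-n_r}\,\E\mathsf S_{N,d}(D).
\]
The final claim then follows at once: if $\E[\mathsf S_{N,d}(D)]=O(N)$ and $N\sum_{r}2^{-n_r}\to0$, the right side tends to zero.
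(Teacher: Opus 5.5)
Your proposal is correct and follows the paper's proof essentially step for step. You freeze the downstream gates on regular facets to obtain the gradient jump $D_u^\varepsilon g_u^\varepsilon$, you reduce invisibility to an entirely inactive downstream layer through the active-path genericity, and you bound the invisible measure by $2^{-n_r}$ per downstream layer using the conditional fair-sign argument. Your explicit product-space Tonelli justification for integrating $\1_{A_r(x)}$ over the $\cF_\ell$-measurable set $\mathcal Z_u(D)$ fills in a step the paper uses only implicitly.
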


\begin{proof}[Proof of Theorem~\ref{thm:higher-dimensional-boundaries}]
By Proposition~\ref{prop:higher-dimensional-kac-rice}, the expected measure of
one activation boundary is the integral of its conditional surface density.
Proposition~\ref{prop:higher-dimensional-moments} and the moment bounds proved
below show that, for every fixed layer and $x\in D$, the expectation of this
density converges to $\rho_{\ell,d}(x)$ and is bounded by a constant independent
of $N$ and $x$. Dominated convergence therefore gives
	$\E[\mathcal H^{d-1}(\mathcal Z_{\ell,1}(D))]=
\int_D\rho_{\ell,d}(x)\,\dd x+o(1)$. Exchangeability within each layer and the
fact that $L$ is fixed yield
\(
	\E[\mathsf S_{N,d}(D)]
 =\sum_{\ell=1}^L n_\ell\int_D\rho_{\ell,d}(x)\,\dd x+o(N).
\)
The integrals are finite because $D$ is bounded and $A_\ell>0$, so
$\E\mathsf S_{N,d}(D)=O(N)$.
Proposition~\ref{lem:higher-dimensional-geometry} shows that the intersections
of distinct switching hypersurfaces and the lower-dimensional skeleton have
zero $\mathcal H^{d-1}$ measure. Proposition
\ref{prop:higher-dimensional-visibility} therefore identifies the remaining
visible facets with the part of codimension one in the scalar kink set. Since
proportional widths imply $N\sum_r2^{-n_r}\to0$, the estimate in that
proposition gives
\(
 \E\bigl[\mathsf S_{N,d}(D)-
 \mathcal H^{d-1}(\mathcal R_N(D))\bigr]\to0.
\)
The hidden-boundary asymptotic is the auxiliary estimate. Substitution into it gives the scalar kink-set asymptotic stated in Theorem~\ref{thm:higher-dimensional-boundaries}. All sets are measured inside the open domain $D$, so no term from $\partial D$ appears.
\end{proof}

\subsection{Proof of auxiliary results}
We now prove the stated auxiliary results.
We first prove the genericity proposition. The deterministic masks index all
exceptional events before the network parameters are sampled. The same
representation will later identify the regular switching facets used in the
visibility argument.

\begin{proof}[Proof of Proposition~\ref{lem:higher-dimensional-geometry}]
Fix a mask $\varepsilon$ and a hidden neuron $u$. Once the gates are frozen, repeated substitution through the network shows that the masked preactivation is affine, with coefficients that are polynomials in the network parameters. Moreover, on $C_\varepsilon$ the frozen gates agree with the realized gates, so the masked and realized networks coincide there:
\[
 z_u^\varepsilon(x)=a_u^\varepsilon+\langle g_u^\varepsilon,x\rangle,
 \qquad
 h_u^\varepsilon(x)=\varepsilon_u z_u^\varepsilon(x),
 \qquad
 (z_v,h_v)|_{C_\varepsilon}=(z_v^\varepsilon,h_v^\varepsilon)|_{C_\varepsilon}
 \quad\text{for every }v.
\]
To exclude an identically zero masked preactivation, set $Q_{\varepsilon,u}=(a_u^\varepsilon)^2+\|g_u^\varepsilon\|^2$. The bias $b_u$ enters $a_u^\varepsilon$ with coefficient 1, so $Q_{\varepsilon,u}$ is a nonzero polynomial. Since the joint Gaussian parameter vector has a density, its zero set has probability 0. Hence, outside a finite union of null sets, every nonempty $F_{\varepsilon,u}$ is a relatively open subset of a genuine affine hyperplane with nonzero normal.

We next rule out coincident facets. Consider distinct neurons $u$ and $v$, first with $v$ not earlier than $u$. If two candidate hyperplanes agree on a relatively open set, their affine coefficient vectors are proportional. Choose a coordinate $r$ with $g_{u,r}^\varepsilon\ne0$ and define
\(
 P_{\varepsilon,\eta,u,v,r}
 =g_{u,r}^\varepsilon a_v^\eta-g_{v,r}^\eta a_u^\varepsilon.
\)
Coincidence of the two hyperplanes forces $P_{\varepsilon,\eta,u,v,r}=0$. However, $\partial P_{\varepsilon,\eta,u,v,r}/\partial b_v=g_{u,r}^\varepsilon\not\equiv0$, so this polynomial is nontrivial and therefore vanishes only on a null set. Thus, distinct activation boundaries cannot share a relatively open facet. Their intersection has dimension at most $d-2$, and hence $\mathcal H^{d-1}(\mathcal Z_u(D)\cap\mathcal Z_v(D))=0$. The same argument, with $u$ in an earlier layer and $v$ in a later layer, shows that a later preactivation cannot vanish on a relatively open part of an inherited facet. Taking the finite union over all lower-dimensional faces also gives $\mathcal H^{d-1}(\mathscr S_N)=0$.

It remains to exclude cancellation along an active downstream route. After the downstream gates are frozen, the sensitivity of $u$ is the polynomial
\(
 D_u^\varepsilon
 =\sum_{\pi:u\to{\rm out}}
 \Big(\prod_{e\in\pi}c_e w_e\Big)
 \Big(\prod_{v\in\pi\setminus\{u,{\rm out}\}}\varepsilon_v\Big).
\)
If $\varepsilon$ contains an active path from $u$ to the output, the monomial corresponding to that path has nonzero coefficient. No different path produces the same monomial, because two distinct paths differ in an edge variable at their first divergence. Thus, $D_u^\varepsilon$ is a nonzero polynomial, so $\Pp(D_u^\varepsilon=0)=0$. A finite union over masks and neurons proves the assertions.
\end{proof}

The conditional surface formula is the higher-dimensional analogue of the 1D zero-count identity. Its geometric input is the coarea formula; Federer gives the classical geometric-measure-theoretic treatment \cite{Federer1969}, while Hanin and Rolnick use coarea at finite width to study boundary volume in random ReLU networks \cite{HaninRolnick2019LinearRegions}. The additional point here is that the inherited polyhedral skeleton must not contribute $(d-1)$-dimensional measure to the zero set. We therefore isolate the exact piecewise-affine Gaussian identity needed for the network proposition and then verify it cell by cell.

Let $\mathcal P$ be a finite polyhedral complex in $\R^d$, let $D\subset\R^d$ be bounded and open, and let $f_1,\ldots,f_m$ be continuous functions that are affine on every full-dimensional cell of $\mathcal P$. Let $\zeta,\xi_1,\ldots,\xi_m$ be independent centered Gaussian variables with $\Var(\zeta)=\tau^2>0$, and define $Z(x)=\zeta+\sum_i\xi_i f_i(x)$. At points in the relative interiors of the full-dimensional cells, define $S(x)=\Var Z(x)$, $R(x)=\Cov(Z(x),\nabla Z(x))$, and $C(x)=\Cov(\nabla Z(x))$. The next lemma gives the exact surface identity needed for the conditional network calculation.

\begin{lemma}[Surface formula for a piecewise-affine Gaussian field]
\label{lem:piecewise-surface-kac-rice}
Under the preceding setup,
\[
	\E\big[\mathcal H^{d-1}(\{x\in D:Z(x)=0\})\big]
 =\int_D\frac{1}{\sqrt{2\pi S(x)}}
	\E\big[\|N(0,C(x)-R(x)R(x)^{\mathsf T}/S(x))\|\big]\,\dd x.
\]
The values of the integrand on the lower-dimensional skeleton may be chosen
arbitrarily.
\end{lemma}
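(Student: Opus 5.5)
The plan is to prove Lemma~\ref{lem:piecewise-surface-kac-rice} cell by cell, exactly as in the one-dimensional Lemma~\ref{lem:piecewise-kac-rice}, with the Jacobian $|v|$ replaced by the norm of the gradient.

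\textbf{Step 1: reduction to open cells.} Let $P_1,\dots,P_K$ be the full-dimensional cells of $\mathcal P$ and write $C_k=D\cap\operatorname{int}P_k$. On $C_k$ the field is affine, $Z(x)=U_k+\langle V_k,x\rangle$, with $(U_k,V_k)$ a centered Gaussian vector in $\R^{1+d}$. We first show that the skeleton $\Sigma=D\setminus\bigcup_k C_k$ contributes nothing. The set $\Sigma$ is a finite union of relatively open faces $\phi$ of dimension at most $d-1$, and on each such face $Z$ is affine. Faces of dimension at most $d-2$ have zero $\mathcal H^{d-1}$ measure. For a $(d-1)$-dimensional face $\phi$, $Z$ vanishes on a set of positive $\mathcal H^{d-1}$ measure in $\phi$ only if its restriction to $\phi$ vanishes identically. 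Pick any $x\in\phi$. Since $\zeta$ enters with coefficient one, $\Var Z(x)\ge\tau^2>0$, so $\Pp(Z(x)=0)=0$. Hence $\mathcal H^{d-1}(\{Z=0\}\cap\Sigma)=0$ almost surely, and
\[
 \E\mathcal H^{d-1}(\{Z=0\}\cap D)=\sum_k\E\mathcal H^{d-1}(\{Z=0\}\cap C_k).
\]

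\textbf{Step 2: degenerate and nondegenerate cells.} On $C_k$ the gradient $V_k$ has covariance $C_k:=C(x)$, which is constant on the cell. As in the 1D proof, independence of $\zeta$ gives
\[
 \Cov(U_k,V_k)\succeq\operatorname{diag}(\tau^2,C_k),
\]
since the remaining part is a Gram-type positive semidefinite matrix. It follows that $S(x)C_k-R(x)R(x)^{\mathsf T}\succeq\tau^2C_k$. Suppose $V_k$ lies almost surely in a proper subspace $E=\operatorname{range}C_k$. If $E=\{0\}$, then $Z$ is a nonzero random constant on the cell, so both sides are zero. Otherwise $(U_k,V_k)$ has a density on $\R\times E$, and the computation below runs with Lebesgue measure on $E$. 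The norm expectation on the right automatically uses the degenerate covariance, so I will handle the general case by working in $\R\times E$ throughout.

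\textbf{Step 3: coarea/change of variables.} This is the main step. For $x\in C_k$, the zero set $\{Z=0\}\cap C_k$ is the hyperplane piece $\{x\in C_k:U_k+\langle V_k,x\rangle=0\}$. I would compute
\[
 \E\mathcal H^{d-1}(\{Z=0\}\cap C_k)
\]
via the coarea formula in the product space. Consider the map $\Phi:(x,v)\mapsto(u,v)=(-\langle v,x\rangle,v)$ from $C_k\times E$ to $\R\times E$. For fixed $(u,v)$, the fibre $\Phi^{-1}(u,v)$ is exactly the zero hyperplane piece. Its normal Jacobian in the $x$-direction is $\|v\|$, so
\[
 \int_{\R\times E}\mathcal H^{d-1}(\Phi^{-1}(u,v))\,p(u,v)\,\dd u\,\dd v=\int_{C_k}\int_E\|v\|\,p(-\langle v,x\rangle,v)\,\dd v\,\dd x.
\]
This is the identity $\int g\,J\Phi=\int\mathcal H^{d-1}(\text{fibre})$ obtained by applying coarea to the scalar map $x\mapsto\langle v,x\rangle$ for each fixed $v$ and then integrating in $v$ via Fubini/Tonelli, which is legitimate since everything is nonnegative.

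The unimodular shear $(u,v)\mapsto(u+\langle v,x\rangle,v)$ turns $p_{U_k,V_k}(-\langle v,x\rangle,v)$ into $p_{Z(x),V_k}(0,v)$. Factoring this density gives
\[
 p_{Z(x)}(0)\,\E\bigl[\|V_k\|\bigm| Z(x)=0\bigr]=\frac{1}{\sqrt{2\pi S(x)}}\,\E\|N(0,\,C-RR^{\mathsf T}/S)\|,
\]
by Gaussian conditioning. Summing over $k$ and noting that the skeleton values are Lebesgue-null completes the proof.

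The main obstacle is making Step 3 rigorous: justifying the measurability of $x\mapsto\mathcal H^{d-1}$ and the fibre version of coarea when $C_k$ is degenerate. The first thing I would try is the elementary route of fixing $v\ne0$ and rotating coordinates, so that the inner $u$-integral is one-dimensional, exactly reproducing the Jacobian $|v|$ argument of Lemma~\ref{lem:piecewise-kac-rice}.
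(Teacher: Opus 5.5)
Your proposal is correct and follows the paper's proof: the same skeleton argument through $\Var Z(x_\phi)\ge\tau^2$ at a deterministic point of each $(d-1)$-face, and the same cellwise coarea for fixed $v\neq0$ followed by integration in $v$ and Gaussian conditioning; the only difference is that the paper conditions on $V=v$, using that $U\mid V=v$ has variance at least $\tau^2$, which spares it your reduction to a joint density on $\R\times E$. One local slip: $\Cov(U_k,V_k)\succeq\operatorname{diag}(\tau^2,C_k)$ is false in general (for $m=1$, $f_1(x)=1+x_1$ the difference matrix has a zero diagonal entry but a nonzero off-diagonal entry in the same row), but the correct bound $\Cov(U_k,V_k)\succeq\operatorname{diag}(\tau^2,0)$ together with Cauchy--Schwarz still gives $SC-RR^{\mathsf T}\succeq\tau^2C$ and nondegeneracy of $(U_k,V_k)$ on $\R\times E$, and that is all you use.
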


\begin{proof}[Proof of Proposition~\ref{prop:higher-dimensional-kac-rice}]
Condition on $\cF_{\ell-1}$. The preceding activations are deterministic
functions that are affine on the cells of the finite complex from
Proposition~\ref{lem:higher-dimensional-geometry}, while
\(
 z_{\ell,j}(x)=b_{\ell,j}+\frac1{\sqrt{n_{\ell-1}}}
 \sum_iW_{\ell,ji}h_{\ell-1,i}(x).
\)
The bias $b_{\ell,j}$ has variance $\beta_\ell>0$, so for almost every
realization of the preceding layers the hypotheses of
Lemma~\ref{lem:piecewise-surface-kac-rice} hold. Applying that lemma to the conditional Gaussian law of the
current-layer parameters gives the asserted identity. In particular, the
codimension-one faces inherited from preceding layers contribute zero surface
measure by the final part of the lemma, so only the zero sets inside the
full-dimensional cells enter the integral. The first layer is the same
argument with one global affine cell.
\end{proof}

\begin{proof}
Fix a full-dimensional cell and write
$Z(x)=U+\langle V,x\rangle$ on its relative interior. The Gaussian law of $V$
may be singular. Since $\zeta$ occurs in $U$ and not in $V$, the conditional
law of $U$ given $V$ has variance at least $\tau^2$. For $v\ne0$, the coarea
formula applied to $x\mapsto U+\langle v,x\rangle$ gives, for every Borel set
$E$ in the cell,
\[
 \E\big[\mathcal H^{d-1}
 (\{U+\langle v,x\rangle=0\}\cap E)\mid V=v\big]
 =\int_E \|v\|p_{U\mid V=v}(-\langle v,x\rangle)\,\dd x.
\]
When $v=0$, the zero set is empty almost surely because the conditional law of
$U$ is nondegenerate, and both sides vanish. Integration with respect to $V$
and the Gaussian conditioning formula give
$p_{Z(x)}(0)\E[\|V\|\mid Z(x)=0]$, which is the stated integrand. This is the exact higher-dimensional analogue of the 1D identity above: the absolute slope $|V_k|$ is replaced by the gradient norm $\|V\|$, because the coarea formula converts that norm into $(d-1)$-dimensional surface measure.

It remains to check that summing over full-dimensional cells does not lose
surface measure on the skeleton. Faces of dimension at most $d-2$ have zero
$\mathcal H^{d-1}$ measure. Let $F$ be a face of dimension $d-1$. The
restriction of $Z$ to the affine hull of $F$ is affine. Unless this restriction
vanishes identically, its zero set in $F$ is contained in an affine set of
dimension at most $d-2$ and therefore has zero $\mathcal H^{d-1}$ measure.
Thus, a positive surface contribution from $F$ is possible only when $Z$
vanishes identically on $F$. Choosing any deterministic point $x_F$ in the
relative interior of $F$, this event is contained in $\{Z(x_F)=0\}$. Since
$\Var Z(x_F)\ge\tau^2$, it has probability zero. There are only finitely many
faces, so the entire skeleton contributes zero surface measure almost surely.
The relative interiors of the full-dimensional cells are disjoint, and the
openness of $D$ removes any contribution from $\partial D$.
\end{proof}

We next prove Proposition~\ref{prop:higher-dimensional-moments} and evaluate
the limiting surface density. At a point where an activation is not
differentiable, we set its gradient equal to zero. For every fixed
deterministic $x$, each preactivation has conditional variance at least its
positive bias variance, so the probability that $x$ lies on any switching
hypersurface is zero. The convention therefore does not affect identities at a
fixed input. For every fixed layer $\ell$, 
we have
\(
 \sup_N\sup_{x\in D}
 \E\bigl[|z_{\ell,1}(x)|^4+\|\nabla z_{\ell,1}(x)\|^4\bigr]<\infty.
\)
The assertion is immediate in the first layer. Conditional on
$\cF_{\ell-1}$, the preactivation is Gaussian with variance
$S_{\ell,N}$ and its gradient is a centered Gaussian vector with covariance
$C_{\ell,N}$. Hence,
$\E[z_{\ell,1}^4\mid\cF_{\ell-1}]=3S_{\ell,N}^2$ and
$\E[\|\nabla z_{\ell,1}\|^4\mid\cF_{\ell-1}]
\le3(\operatorname{tr}C_{\ell,N})^2$. The exact covariance formulas are
\(S_{\ell,N}(x)=\beta_\ell+\frac{\gamma_\ell}{n_{\ell-1}} \sum_i h_{\ell-1,i}(x)^2,\)
\(
 R_{\ell,N}(x)=\frac{\gamma_\ell}{n_{\ell-1}} \sum_i h_{\ell-1,i}(x)\nabla h_{\ell-1,i}(x),\) and
\(
 C_{\ell,N}(x)=\frac{\gamma_\ell}{n_{\ell-1}}
 \sum_i\nabla h_{\ell-1,i}(x) \nabla h_{\ell-1,i}(x)^{\mathsf T}.
\)
Jensen's inequality and the induction hypothesis bound the expectations of
$S_{\ell,N}^2$ and $(\operatorname{tr}C_{\ell,N})^2$ uniformly in $N$ and
$x$. Since $|h|\le|z|$ and $\|\nabla h\|\le\|\nabla z\|$, the same bounds
hold for the activations. Cauchy's inequality also gives the required uniform
bound for $\E[h^2\|\nabla h\|^2]$.

\begin{proof}[Proof of Proposition~\ref{prop:higher-dimensional-moments}]
For $\ell=1$, the pairs
$(z_{1,j}(x),\nabla z_{1,j}(x))=
(b_{1,j}+\langle W_{1,j},x\rangle,W_{1,j})$ are independent centered Gaussian
pairs with covariance data $(s_1(x),B_1x,B_1I_d)$. For every centered Gaussian
pair $(Z,G)$, central symmetry gives
\(
 \E[Z_+^2]=\tfrac12\E[Z^2],\)
\(\E[ZG\1\{Z>0\}]=\tfrac12\E[ZG],\) and
\(\E[GG^{\mathsf T}\1\{Z>0\}]=\tfrac12\E[GG^{\mathsf T}].\)
The ordinary law of large numbers and the fourth-moment bounds prove the first
layer assertion in $L^2$. The first-layer covariance vector is deterministic
and already equals $(s_1(x),B_1x,B_1I_d)$.

Suppose the conclusion holds through layer $\ell-1$. Set
$\mathbf M_{\ell,N}=(M^{(0)}_{\ell,N},M^{(1)}_{\ell,N},M^{(2)}_{\ell,N})$,
$\mathbf V_{\ell,N}=(S_{\ell,N},R_{\ell,N},C_{\ell,N})$, and
$\mathbf v_\ell=(s_\ell,B_\ell x,B_\ell I_d)$. Equip the product space with
the sum of the absolute value, the Euclidean norm, and the Frobenius norm. The
exact covariance identities give
\[
 \mathbf V_{\ell,N}
 =\bigl(\beta_\ell+\gamma_\ell M^{(0)}_{\ell-1,N},
        \gamma_\ell M^{(1)}_{\ell-1,N},
        \gamma_\ell M^{(2)}_{\ell-1,N}\bigr).
\]
The induction hypothesis and the recursions
$A_\ell=\beta_\ell+\gamma_\ell A_{\ell-1}/2$ and
$B_\ell=\gamma_\ell B_{\ell-1}/2$ therefore imply
$\mathbf V_{\ell,N}\to\mathbf v_\ell$ in $L^1$ and in probability.

It remains to control the empirical fluctuations within the new layer.
Conditional on $\cF_{\ell-1}$, the neurons in layer $\ell$ are independent and
identically distributed centered Gaussian pairs, so the symmetry identities
give $\E[\mathbf M_{\ell,N}\mid\cF_{\ell-1}]=\mathbf V_{\ell,N}/2$.
Conditional independence and the Gaussian fourth-moment bounds yield
\[
 \E\bigl[\|\mathbf M_{\ell,N}
 -\E[\mathbf M_{\ell,N}\mid\cF_{\ell-1}]\|^2
 \mid\cF_{\ell-1}\bigr]
 \le \frac{C_d}{n_\ell}
 \bigl(S_{\ell,N}^2+S_{\ell,N}\operatorname{tr}C_{\ell,N}
 +(\operatorname{tr}C_{\ell,N})^2\bigr).
\]
Indeed, the scalar component is bounded by
$\E[Z^4\mid\cF_{\ell-1}]=3S_{\ell,N}^2$, the vector component by
$\E[Z^2\|G\|^2\mid\cF_{\ell-1}]
\le3S_{\ell,N}\operatorname{tr}C_{\ell,N}$, and the matrix component by
$\E[\|G\|^4\mid\cF_{\ell-1}]
\le3(\operatorname{tr}C_{\ell,N})^2$. The moment bounds above and the
Cauchy--Schwarz inequality therefore give a constant $C_\ell$, independent of
$N$, such that
$\E\|\mathbf M_{\ell,N}-\E[\mathbf M_{\ell,N}\mid\cF_{\ell-1}]\|^2
\le C_\ell/n_\ell$. Consequently,
\[
 \E\|\mathbf M_{\ell,N}-\tfrac12\mathbf v_\ell\|
 \le \frac{C_\ell^{1/2}}{\sqrt{n_\ell}}
 +\frac12\E\|\mathbf V_{\ell,N}-\mathbf v_\ell\|\longrightarrow0.
\]
Hence, $\mathbf M_{\ell,N}\to\mathbf v_\ell/2$ in $L^1$ and therefore in
probability. This completes the induction.
\end{proof}
It remains to pass from the covariance data to the surface density. Recall that
\(
 \Gamma_{\ell,N}
 =C_{\ell,N}-\frac{R_{\ell,N}R_{\ell,N}^{\mathsf T}}{S_{\ell,N}},
 \) and
\( \Gamma_\ell
 =C_\ell-\frac{r_\ell r_\ell^{\mathsf T}}{s_\ell}.
\)
Since $S_{\ell,N}\ge\beta_\ell$ and $s_\ell\ge\beta_\ell$, we have
\[
\begin{aligned}
 \|\Gamma_{\ell,N}-\Gamma_\ell\|_F
 &\le \|C_{\ell,N}-C_\ell\|_F
 +\frac1{\beta_\ell}
   \|R_{\ell,N}R_{\ell,N}^{\mathsf T}
      -r_\ell r_\ell^{\mathsf T}\|_F
 +\frac{\|r_\ell r_\ell^{\mathsf T}\|_F}{\beta_\ell^2}
   |S_{\ell,N}-s_\ell|,
\end{aligned}
\]
while
\(
 \|R_{\ell,N}R_{\ell,N}^{\mathsf T}
      -r_\ell r_\ell^{\mathsf T}\|_F
 \le
 (\|R_{\ell,N}\|+\|r_\ell\|)
 \|R_{\ell,N}-r_\ell\|.
\)
Proposition~\ref{prop:higher-dimensional-moments} therefore gives
\(
 \Gamma_{\ell,N}(x)\xrightarrow{\Pp}\Gamma_\ell(x).
\)
Both matrices are positive semidefinite. Hence, continuity of the principal
square root and
\(
 \big|
 \E_G\|\Gamma_{\ell,N}^{1/2}G\|
 -\E_G\|\Gamma_\ell^{1/2}G\|
 \big|
 \le
 \|\Gamma_{\ell,N}^{1/2}-\Gamma_\ell^{1/2}\|_F
\)
give
\(
 \kappa^{(d)}_{\ell,N}(x)\xrightarrow{\Pp}\rho_{\ell,d}(x),
\)
where $\E_G$ denotes expectation only with respect to $G\sim N(0,I_d)$.
The middle inequality, together with continuity of the principal square root,
records the only continuity input. Jensen's inequality and
$\Gamma_{\ell,N}\le C_{\ell,N}$ give
\[
\begin{aligned}
 \E[(\kappa^{(d)}_{\ell,N}(x))^2]
 &\le\frac{\E\operatorname{tr}C_{\ell,N}(x)}{2\pi\beta_\ell},\qquad
 \sup_N\sup_{x\in D}\E[(\kappa^{(d)}_{\ell,N}(x))^2]&<\infty,
 \qquad
 \E\kappa^{(d)}_{\ell,N}(x)\to\rho_{\ell,d}(x).
\end{aligned}
\]
Thus, the surface densities are uniformly integrable at each fixed input, and
the same bound gives a deterministic integrable majorant on the bounded domain
$D$. Dominated convergence and
Proposition~\ref{prop:higher-dimensional-kac-rice} now yield
\[
 \E\mathcal H^{d-1}(\mathcal Z_{\ell,1}(D))
 =\int_D\E\kappa^{(d)}_{\ell,N}(x)\,\dd x
 \longrightarrow\int_D\rho_{\ell,d}(x)\,\dd x.
\]

Finally, we now prove Proposition~\ref{prop:higher-dimensional-visibility}. The geometric part identifies the codimension-one kink set with the visible switching facets. The probabilistic part then conditions on a switching facet and uses the probability that an entire downstream layer is inactive to bound the invisible surface measure. The resulting estimate is the higher-dimensional analogue of the visibility bound used for 1D switch counts.

\begin{proof}[Proof of Proposition~\ref{prop:higher-dimensional-visibility}]
Work on the event of Proposition~\ref{lem:higher-dimensional-geometry}, and
remove from every switching facet its intersections with other switching
hypersurfaces and all faces of dimension at most $d-2$. The removed set has
zero $\mathcal H^{d-1}$ measure. On the remaining regular part of the facet
of a neuron $u$, only the gate of $u$ changes. Freezing all downstream gates
therefore gives
	\(f_N^+=D_u h_u^++g\),  \(f_N^-=D_u h_u^-+g,\)
	\(\nabla f_N^+-\nabla f_N^-
	=D_u(\nabla h_u^+-\nabla h_u^-),\) and
\(
 \nabla h_u^+-\nabla h_u^-=\pm\nabla z_u\ne0.
\)
Hence, on regular switching facets,
$\nabla f_N^+=\nabla f_N^-$ if and only if $D_u=0$. Proposition~\ref{lem:higher-dimensional-geometry}
also shows that all remaining intersections form an $\mathcal H^{d-1}$-null
set and that distinct neurons do not share a relatively open facet. Thus,
the codimension-one part of $\mathcal R_N(D)$ is exactly the union of the
regular switching facets with $D_u\ne0$, disjoint up to an
$\mathcal H^{d-1}$-null set.

Fix $u=(\ell,j)$ with $\ell<L$, put
$\mu_u=\mathcal H^{d-1}\!\restriction\mathcal Z_u(D)$, and condition on
$\cF_\ell$. For $r>\ell$, let $A_r(x)$ be the event that every neuron in
layer $r$ is inactive at $x$. Conditional on $\cF_{r-1}$, the signs in layer
$r$ are independent and fair. If none of the events $A_r(x)$ occurs, full
connectivity supplies an active path from $u$ to the output, so
Proposition~\ref{lem:higher-dimensional-geometry} gives $D_u(x)\ne0$ outside
a $\mu_u$-null set. Therefore,
\(\Pp(A_r(x)\mid\cF_\ell)=2^{-n_r},\) and \(\{D_u=0\}\subseteq\bigcup_{r=\ell+1}^L A_r\) 
 \(\mu_u\text{-a.e.},\)
  and
\(
 \begin{aligned}
 \E[\mu_u(\{D_u=0\})\mid\cF_\ell]
 &\le\sum_{r=\ell+1}^L2^{-n_r}\,\mu_u(D).
 \end{aligned}
\)
For $\ell=L$, the output weight is nonzero almost surely, so every regular
last-layer facet is visible. Summing the preceding estimate over all neurons
and taking expectations yields
\[
 \begin{aligned}
  \E\bigl[\mathsf S_{N,d}(D)
       -\mathcal H^{d-1}(\mathcal R_N(D))\bigr]
 &\le\sum_{\ell=1}^{L-1}\sum_{j=1}^{n_\ell}
       \sum_{r=\ell+1}^L2^{-n_r}
       \E\mathcal H^{d-1}(\mathcal Z_{\ell,j}(D))
 &\le\sum_{r=1}^L2^{-n_r}
       \E\mathsf S_{N,d}(D),
 \end{aligned}
\]
which proves the proposition.
\end{proof}

\section{Explicit constants and numerics in one input dimension}
\label{sec:numerics}

The recursion defining $A_\ell$ and $B_\ell$ is affine in $\ell$ and can be solved in closed form when the variances do not depend on the layer. Fix constants $\beta>0$ and $\gamma>0$ and specialize to $\beta_\ell=\beta$ and $\gamma_\ell=\gamma$ for every $\ell$. The next corollary records the resulting closed forms and the corresponding integrated switch intensity.

\begin{corollary}[Intensities for constant variances]
\label{cor:constant-variance}
Under the constant-variance specialization above, for every compact interval
$I=[a,b]$, we have 
	\( \int_I\rho_\ell(t)\,\dd t
 =\frac1\pi[
 \arctan\bigl(b\sqrt{B_\ell/A_\ell}\bigr)
	-\arctan\bigl(a\sqrt{B_\ell/A_\ell}\bigr)]\) where
\[
 \begin{gathered}
 B_\ell=\gamma(\gamma/2)^{\ell-1},\qquad
 A_\ell=
 \begin{cases}
 \beta\dfrac{1-(\gamma/2)^\ell}{1-\gamma/2},&\gamma\ne2,\\[1.2ex]
 \beta\ell,&\gamma=2.
 \end{cases}
 \end{gathered}
\]
For $\beta=\gamma=1$ and $I=[-1,1]$, the integral is
$2\pi^{-1}\arctan((2^\ell-1)^{-1/2})$; it equals $1/2$ for $\ell=1$, $1/3$
for $\ell=2$, and $2\pi^{-1}\arctan(1/\sqrt7)\approx0.2301$ for $\ell=3$.
\end{corollary}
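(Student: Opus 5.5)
The plan is to solve the two linear recursions explicitly and then integrate the Cauchy-type density $\rho_\ell$ in closed form. First, the recursion $B_\ell=\gamma B_{\ell-1}/2$ with $B_1=\gamma$ is geometric, so induction gives $B_\ell=\gamma(\gamma/2)^{\ell-1}$ at once. The recursion $A_\ell=\beta+(\gamma/2)A_{\ell-1}$ with $A_1=\beta$ is affine. Unrolling it yields
\[
A_\ell=\beta\sum_{k=0}^{\ell-1}(\gamma/2)^k .
\]
This geometric sum equals $\beta(1-(\gamma/2)^\ell)/(1-\gamma/2)$ when $\gamma\ne2$, and $\beta\ell$ when $\gamma=2$. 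I would confirm both formulas by a one-line induction. Note that $A_\ell,B_\ell>0$, as needed below.

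Next, I would write
\[
\rho_\ell(t)=\frac{\sqrt{A_\ell B_\ell}}{\pi(A_\ell+B_\ell t^2)}=\frac{1}{\pi}\,\frac{c}{1+c^2t^2}, \qquad c=\sqrt{B_\ell/A_\ell}.
\]
This uses $\sqrt{A_\ell B_\ell}/A_\ell=\sqrt{B_\ell/A_\ell}$. The substitution $u=ct$ then gives an antiderivative $\pi^{-1}\arctan(ct)$, and evaluating it between $a$ and $b$ gives the stated arctangent difference.

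For the numerical specialization $\beta=\gamma=1$, the formulas give $B_\ell=2^{1-\ell}$ and $A_\ell=(1-2^{-\ell})/(1/2)=2(1-2^{-\ell})$. Hence
\[
\frac{B_\ell}{A_\ell}=\frac{2^{-\ell}}{1-2^{-\ell}}=\frac{1}{2^\ell-1}.
\]
Because $\arctan$ is odd and $I=[-1,1]$, the integral is $2\pi^{-1}\arctan((2^\ell-1)^{-1/2})$. The values follow directly. For $\ell=1$, $\arctan 1=\pi/4$ gives $1/2$. For $\ell=2$, $\arctan(1/\sqrt3)=\pi/6$ gives $1/3$. For $\ell=3$ the value is $2\pi^{-1}\arctan(1/\sqrt7)$, which evaluates numerically to $\approx0.2301$.

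There is no real obstacle here; the argument is elementary. The only point needing care is keeping the $\gamma=2$ case of the geometric sum separate, and checking that the simplification of $B_\ell/A_\ell$ to $1/(2^\ell-1)$ is consistent with $A_1=\beta$.
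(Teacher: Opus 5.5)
Your proposal is correct and follows essentially the same route as the paper. In both, you solve the geometric recursions for $B_\ell$ and $A_\ell$ (with $\gamma=2$ handled separately), integrate the Cauchy-form density $\rho_\ell$ via its arctangent antiderivative, and then specialize to $\beta=\gamma=1$ using $B_\ell/A_\ell=1/(2^\ell-1)$ together with $\arctan 1=\pi/4$ and $\arctan(1/\sqrt3)=\pi/6$.
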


\begin{proof}
The recursion $B_\ell=(\gamma/2)B_{\ell-1}$ with $B_1=\gamma$ gives the geometric formula, and $A_\ell=\beta+(\gamma/2)A_{\ell-1}$ with $A_1=\beta$ is a geometric sum.  The integral formula is the arctangent antiderivative of the density of Cauchy form $\rho_\ell$.  For $\beta=\gamma=1$ the ratio simplifies to $B_\ell/A_\ell=1/(2^\ell-1)$, and $\arctan1=\pi/4$, $\arctan(1/\sqrt3)=\pi/6$ give the stated values.
\end{proof}

For $\gamma<2$ the slope variance $B_\ell$ decays geometrically while $A_\ell$ converges, so deep layers switch rarely on a fixed interval; $\gamma=2$ is the threshold scaling at which $B_\ell$ remains of constant order. In our normalization, $\gamma=2$ corresponds to the effective weight variance $2/n_{\ell-1}$ of the {ReLU}-adapted initialization of He, Zhang, Ren, and Sun \cite{HeZhangRenSun2015}, and the halving factor in the recursion is the same half-space identity $\E[\relu(Z)^2]=\tfrac12\Var(Z)$ that underlies their argument for preserving variance.

\begin{table}[b]
\caption{Monte Carlo estimates of $1+\E R_N(I)$, the expected number of maximal affine intervals, for $\beta=\gamma=1$, $I=[-1,1]$, equal widths $n_\ell=n$, based on $2000$, $1000$, and $600$ independent networks for $L=1,2,3$; the last block compares two unequal allocations of the same total width $N=192$, based on $500$ networks each. The theory column shows $1+\sum_\ell n_\ell\int_I\rho_\ell$: for $L=1$ this value is exact, while for $L\ge2$ it is the leading prediction of Theorem~\ref{thm:affine-region-asymptotics}.}
\label{tab:mc}
\begin{tabular}{ccccc}
\hline
$L$ & $n$ & $N$ & $1+\E R_N(I)$ (MC) & theory\\
\hline
$1$ & $16$ & $16$ & $8.94\pm0.05$ & $9$\\
$1$ & $64$ & $64$ & $33.08\pm0.09$ & $33$\\
$1$ & $256$ & $256$ & $129.15\pm0.17$ & $129$\\
\hline
$2$ & $16$ & $32$ & $14.16\pm0.11$ & $14.33$\\
$2$ & $64$ & $128$ & $53.92\pm0.21$ & $54.33$\\
$2$ & $256$ & $512$ & $213.79\pm0.43$ & $214.33$\\
\hline
$3$ & $16$ & $48$ & $17.48\pm0.18$ & $18.01$\\
$3$ & $64$ & $192$ & $68.72\pm0.37$ & $69.06$\\
$3$ & $128$ & $384$ & $136.02\pm0.50$ & $137.11$\\
\hline
$2$ & $(128,64)$ & $192$ & $86.17\pm0.36$ & $86.33$\\
$2$ & $(64,128)$ & $192$ & $74.71\pm0.42$ & $75.67$\\
\hline
\end{tabular}
\end{table}

\begin{remark}[The constants as depth increases]
\label{rem:depth-saturation}
For $\beta=\gamma=1$ and $I=[-1,1]$, the identity $\arctan\bigl((2^\ell-1)^{-1/2}\bigr)=\arcsin\bigl(2^{-\ell/2}\bigr)$ rewrites the intensity integrals of Corollary~\ref{cor:constant-variance} as $\int_I\rho_\ell=\tfrac2\pi\arcsin(2^{-\ell/2})$.  For equal widths $n_1=\cdots=n_L=n$, the leading term of Theorem~\ref{thm:affine-region-asymptotics} is therefore $c_Ln$ with
\(
        c_L=\frac2\pi\sum_{\ell=1}^{L}\arcsin\bigl(2^{-\ell/2}\bigr),
\)
and $c_L$ increases to $c_\infty=\tfrac2\pi\sum_{\ell=1}^{\infty}\arcsin(2^{-\ell/2})\approx1.6094$.  Thus, at a fixed common width, the first hidden layer contributes $n/2$ expected kinks to the leading count, while all deeper layers combined contribute at most about $1.11\,n$. The affine-region count is larger by one.  This is a statement about the deterministic constants only: Theorem~\ref{thm:affine-region-asymptotics} concerns fixed depth, and no limit theorem in which the depth grows is claimed.
\end{remark}

Two features of Theorem~\ref{thm:affine-region-asymptotics} can be tested directly.  First, for $L=1$ the leading term is exact at every finite width: the case $\ell=1$ of Proposition~\ref{prop:conditional-kac-rice} is an identity, every switch in the first layer is visible because its downstream sensitivity $a_j/\sqrt{n_1}$ is nonzero almost surely, and Lemma~\ref{lem:partition-identities} then gives
\(
        \E R_N(I)=n_1\int_I\rho_1(t)\,\dd t
        \;\text{ for every }n_1.
\)
With $\beta=\gamma=1$ and $I=[-1,1]$ this reads $\E R_N(I)=n_1/2$, so the expected number of maximal affine intervals is $1+n_1/2$. Second, for $L\ge2$ the theorem determines $\E R_N(I)$ only up to an uncontrolled $o(N)$ correction, and the quality of the affine-region prediction $1+\sum_\ell n_\ell\int_I\rho_\ell$ at finite width is an empirical question.
Table~\ref{tab:mc} and Figure~\ref{fig:mc-convergence-main} report a Monte Carlo experiment with $\beta=\gamma=1$, $I=[-1,1]$, and equal widths $n_1=\dots=n_L=n$.  All region counts are computed exactly by piecewise-affine propagation: switches in the first layer are the roots $-b_{1,j}/w_{1,j}$, switches in later layers are obtained by solving the affine restriction of each preactivation on each inherited cell, and $R_N(I)$ is the number of slope changes of $f_N$ across the ordered switch locations; hence $1+R_N(I)$ is the affine-region count.  No discretization of $I$ is involved. The computations for Table~\ref{tab:mc} and Figure~\ref{fig:mc-convergence-main} were carried out by the script \nolinkurl{verify_theorem.py}, with a fixed random seed recorded in the file.

The $L=1$ rows agree with the exact value within Monte Carlo error at every width.  For $L=2$ and $L=3$ the difference between the measured expectation and the leading prediction stays below two affine regions across all widths considered, although Theorem~\ref{thm:affine-region-asymptotics} only guarantees $o(N)$; sharpening this error bound is left open.  The normalized counts in Figure~\ref{fig:mc-convergence-main} are close to their limiting values over the displayed range.  The contribution per neuron decreases with depth, reflecting the geometric decay of $B_\ell/A_\ell$ in Corollary~\ref{cor:constant-variance}.

\begin{remark}[Widths that differ between layers]
\label{rem:unequal-widths}
Theorem~\ref{thm:affine-region-asymptotics} allows the widths to have different limiting proportions, and the equal widths in Table~\ref{tab:mc} and Figure~\ref{fig:mc-convergence-main} are chosen purely for presentation; for general widths the leading prediction weights the layer integrals by $n_\ell$.  The last block of Table~\ref{tab:mc} compares two allocations of the same budget $N=192$ at depth $L=2$ and shows that the prediction is equally accurate for unequal widths.  It also illustrates that transferring neurons toward earlier layers increases the expected count, in line with the strict decay of $\int_I\rho_\ell$ in $\ell$ from Corollary~\ref{cor:constant-variance}.
\end{remark}

\section{Numerical verification in higher input dimension}
\label{sec:higher-dimensional-numerics}

Section~\ref{sec:numerics} verified Theorem~\ref{thm:affine-region-asymptotics}
for a scalar input. We now verify
Theorem~\ref{thm:higher-dimensional-boundaries} directly, for input dimensions
$d=2,3,5$. The quantity to be estimated is a $(d-1)$-dimensional Hausdorff
measure. Locating the activation boundaries by evaluating the network on a
regular lattice covering $D$ is not practical beyond $d=2$, because the number
of lattice points needed to resolve a surface at a fixed accuracy grows
exponentially in $d$. We therefore estimate the measure through the Crofton
representation of Section~\ref{sec:higher-dimensional-boundaries}, which
expresses it by 1D zero counts of the kind already used in
Section~\ref{sec:exact-conditional-kac-rice}.

\subsection{Observation domain and Crofton estimator}
\label{subsec:hd-domain}

We choose a radially symmetric observation domain so that the theoretical surface integral can be evaluated accurately without introducing another high-dimensional numerical approximation. The same symmetry also makes the Crofton sampling distribution particularly simple. We first record the scaling of the theoretical integral and then describe the line-based estimator used in the experiments.
Throughout this section $D=B(0,R)$ is the open ball of radius $R$ centered at
the origin. Since $\rho_{\ell,d}(x)$ depends on $x$ only through $\|x\|$, the
integral $\int_D\rho_{\ell,d}\dd x$ reduces to a 1D radial
integral, which we evaluate by quadrature.

A bounded domain is not a matter of convenience here. As $\|x\|\to\infty$ one
has $s_\ell(x)\sim B_\ell\|x\|^2$ and $A_\ell/s_\ell(x)\to0$, so
\[
 \rho_{\ell,d}(x)\;\sim\;\frac{c_d}{\|x\|},
 \qquad
 c_d=\frac{\Gamma(d/2)}{\sqrt\pi\,\Gamma\bigl(\tfrac{d-1}{2}\bigr)} ,
\]
and therefore $\int_{\R^d}\rho_{\ell,d}\dd x=\infty$ for every $d\ge2$. For
$d=1$ the transverse term is absent, $\rho_\ell(t)$ decays like $t^{-2}$, and
the integral over the whole line converges. This is why the 1D
statement admits a global constant while
Theorem~\ref{thm:higher-dimensional-boundaries} is stated on a bounded convex
domain.

Fixing $R=1$ costs no generality. Substituting $x=Ru$ gives
\begin{equation}
\label{eq:domain-scaling}
 \int_{B(0,R)}\rho_{\ell,d}\bigl[A_\ell,B_\ell\bigr]\dd x
 =R^{\,d-1}\int_{B(0,1)}\rho_{\ell,d}\bigl[A_\ell,B_\ell R^2\bigr]\dd x ,
\end{equation}
where the bracket records the two covariance parameters entering
$s_\ell$. Changing the radius is thus the same as changing $B_\ell$, and the
tables below may be read at any radius through~\eqref{eq:domain-scaling}. We
take $R=1$ and $\beta_\ell=\gamma_\ell=1$ for all $\ell$.

We now turn to the estimator itself. Applying the Crofton formula in the normalization used above \cite{SchneiderWeil2008}, for a $(d-1)$-rectifiable set $S\subset\R^d$ we have
\[
 \mathcal H^{d-1}(S)
 =\frac{1}{2v_{d-1}}\int_{S^{d-1}}\int_{v^\perp}
 \#\bigl(S\cap(y+\R v)\bigr)\dd y\,\sigma(\dd v),
\]
with $\sigma$ the surface measure on $S^{d-1}$ and $v_{d-1}$ the volume of the
unit ball in $\R^{d-1}$. The orthogonal projection of $D=B(0,R)$ onto
$v^\perp$ is the $(d-1)$-dimensional ball of radius $R$. Sampling $v$
uniformly on $S^{d-1}$ and $y$ uniformly on that projection, the line
$\{y+tv:t\in\R\}$ meets $D$ in a segment, which we call a chord, and the
formula becomes
\begin{equation}
\label{eq:crofton-estimator}
 \mathcal H^{d-1}(S\cap D)
 =\frac{|S^{d-1}|\,R^{\,d-1}}{2}\,
 \E\bigl[\#\{t:\ |t|<T(y),\ y+tv\in S\}\bigr],
 \qquad T(y)=\sqrt{R^2-\|y\|^2}.
\end{equation}
The right-hand side is an expectation over the chord alone, so the empirical
average over independently sampled chords is unbiased and the only error is a
Monte Carlo error.

Along a chord the network is a piecewise-affine function of the single
variable $t$. The zeros of every preactivation are therefore obtained exactly
by the recursion used in
Section~\ref{sec:proof-affine-region-asymptotics}: the first layer
contributes the roots of an affine function, and on each cell inherited from
the preceding layers a new preactivation is again affine, so its zero is
found by solving a linear equation. No discretization of $D$ is involved, and
the measurement of $\mathsf S_{N,d}(D)$ uses only 1D counts.

\subsection{Calibration and network experiments}
\label{subsec:crofton-calibration}

Before applying the estimator to a network, we calibrate it on surfaces whose measure is known exactly. This checks both the Crofton normalization and the Monte Carlo implementation independently of the network calculation. We then apply the same estimator, without changing the sampling scheme, to the activation boundaries of finite networks.

For a sphere of radius $a<R$, the exact measure is
$\mathcal H^{d-1}=|S^{d-1}|a^{d-1}$. Table~\ref{tab:crofton-calibration}
reports the estimate from $4\times10^4$ chords.

\begin{table}[t]
\centering
\caption{The estimator \eqref{eq:crofton-estimator} applied to the sphere of
radius $a$ inside the unit ball. Errors are one standard error over chords.
The larger relative errors in the rows with $a=0.4$ reflect the small
proportion of chords that meet a small sphere.}
\label{tab:crofton-calibration}
\begin{tabular}{ccrrr}
\hline
$d$ & $a$ & exact & estimate & ratio\\
\hline
2 & 0.4 & 2.5133 & $2.4698\pm0.0153$ & 0.983\\
2 & 0.7 & 4.3982 & $4.4011\pm0.0144$ & 1.001\\
3 & 0.4 & 2.0106 & $2.0103\pm0.0230$ & 1.000\\
3 & 0.7 & 6.1575 & $6.1563\pm0.0314$ & 1.000\\
5 & 0.4 & 0.6738 & $0.7021\pm0.0212$ & 1.042\\
5 & 0.7 & 6.3192 & $6.3093\pm0.0562$ & 0.998\\
\hline
\end{tabular}
\end{table}

We next apply the calibrated estimator to the network boundaries. Table~\ref{tab:crofton-networks} compares the asymptotic prediction
$\sum_\ell n_\ell\int_D\rho_{\ell,d}\dd x$ of
Theorem~\ref{thm:higher-dimensional-boundaries} with the estimate obtained
from $40$ independent networks and $300$ chords per network.

\begin{table}[t]
\centering
\caption{Expected boundary measure $\E\mathsf S_{N,d}(D)$ on the unit ball,
with $\beta_\ell=\gamma_\ell=1$. Errors are one standard error over the $40$
networks.}
\label{tab:crofton-networks}
\begin{tabular}{ccrrrr}
\hline
$d$ & widths & $N$ & prediction & estimate & ratio\\
\hline
2 & $64$          & 64 & 76.68  & $76.22\pm1.18$  & 0.994\\
2 & $32,32$       & 64 & 64.34  & $61.23\pm1.16$  & 0.952\\
2 & $24,24,24$    & 72 & 61.81  & $59.87\pm1.39$  & 0.969\\
3 & $64$          & 64 & 128.00 & $128.74\pm1.53$ & 1.006\\
3 & $32,32$       & 64 & 107.83 & $108.30\pm1.84$ & 1.004\\
3 & $24,24,24$    & 72 & 103.80 & $104.42\pm2.44$ & 1.006\\
5 & $64$          & 64 & 210.55 & $211.04\pm2.12$ & 1.002\\
5 & $32,32$       & 64 & 178.13 & $178.63\pm2.78$ & 1.003\\
5 & $24,24,24$    & 72 & 171.86 & $167.43\pm3.35$ & 0.974\\
\hline
\end{tabular}
\end{table}

The ratios lie close to one at all three input dimensions and all three
depths. The deviations below one occur at the smaller input dimension and the
larger depths, which is the regime in which the $o(N)$ term of
Theorem~\ref{thm:higher-dimensional-boundaries} is least negligible at the
widths used here; the sign agrees with the corresponding gaps in
Table~\ref{tab:mc}. We do not read a rate from these numbers, and the theorem
makes no assertion about one.

All entries of Tables~\ref{tab:crofton-calibration}
and~\ref{tab:crofton-networks} were produced by a single script,
\nolinkurl{crofton_estimator.py}. It is separate from the script used for
Section~\ref{sec:numerics}, which assumes a scalar input; here the network is
restricted to a chord in $\R^d$, so the first layer is treated differently
while the propagation through the later layers is the same. Both scripts fix
their random seeds and record them.

\section{Scope of the theoretical results}
\label{sec:scope-extensions}

In the 1D model, the finite Gaussian geometry and the visibility estimate identify the leading expected number of activation switches with the expected number of scalar kinks; the number of affine regions is one larger. All limits in the widths assume fixed depth. The covariance propagation is pointwise in the path parameter. Together with uniform integrability, this suffices for the expectation asymptotics, but we do not prove a uniform process limit, variance bounds, or concentration.

Positive hidden bias variances prevent degeneracy on inherited cells. The visibility estimates use centered Gaussian preactivations, for which every sign in a new layer is fair. A noncentered model would require a uniform positive lower bound on the activation probabilities.

For fully connected networks in fixed input dimension, the proof of Theorem~\ref{thm:higher-dimensional-boundaries} first obtains an explicit asymptotic formula for the expected hidden activation-boundary measure and then transfers it to the scalar kink set. The input dimension and the observation domain are fixed, and the theorem concerns only the part of codimension one. It does not determine the number of full-dimensional affine regions. That problem would also require control of intersections in higher codimension and of the topology of the complement. The theorem does not provide variance bounds, concentration, curvature measures, or limits in which the input dimension grows.

The numerical sections illustrate the theorems but do not extend them. The Crofton estimator of Section~\ref{sec:higher-dimensional-numerics} is unbiased, so the entries of Tables~\ref{tab:crofton-calibration} and ~\ref{tab:crofton-networks} carry only a Monte Carlo error. The agreement observed there concerns the leading term at the widths used and does not establish a rate for the $o(N)$ correction. The observation domain is a ball; for another bounded convex domain in the layer integrals would have to be evaluated again, although the dependence on the radius is explicit through~\eqref{eq:domain-scaling}.

Convolutional architectures are outside the present scope. Xiong et al. study linear-region counts for convolutional neural networks \cite{XiongEtAl2020}, but the finite-network visibility argument used here would require additional work in that setting. Shared filters make preactivations at different spatial positions dependent within an output channel and can create coincident switches.

\section{Wisconsin experiment: complete exploratory report}
\label{app:wisconsin}
This appendix records the full exploratory design and numerical results used to form Section~\ref{sec:wisconsin}. All trained-network conclusions are empirical.

\subsection{Data, network, and segment families}
This subsection records the common setup for the exploratory Wisconsin experiments. The design keeps the theoretical initialization model unchanged while measuring the same switch statistic along several families of held-out segments before and after training. The four architectures are used to separate the main example from checks of width and depth. The Breast Cancer Wisconsin diagnostic data have $569$ observations, with $212$ malignant and $357$ benign cases, and $30$ features. In the exploratory run, features are standardized coordinatewise over the full dataset and a fixed $70/30$ train/test split with seed $42$ is used. All real-data segments join held-out test points. The architectures are $(64,64)$, $(128,128)$, $(64,64,64)$, and $(128,128,128)$. The figures in this appendix come from the single exploratory split described
here; the replicated $10\times5$ design of Section~\ref{sec:wisconsin} covers
$(64,64)$ and $(64,64,64)$ only. Hidden biases have variance $\sigma_b^2=0.1$ in the main runs; He scaling is used for weights. Training uses full-batch Adam with learning rate $3\times10^{-3}$ for $300$ epochs, and reported test accuracies are $0.95$--$0.97$.

Four segment families are used: within malignant, within benign, between class, and off manifold. The control endpoints are drawn independently from $N(0,I_{30})$. Along every segment, all switches are located exactly by solving affine equations layer by layer; no grid in the path parameter is used. Scalar output kinks are counted separately across the ordered switch locations.

\subsection{Initialization agreement}
The first check asks whether the closed-form leading initialization prediction remains accurate on the actual data geometry in dimension $30$. For each segment, we evaluate the leading term in equation~\eqref{eq:chord-prediction} from its endpoints without simulation or quadrature and sum the layerwise contributions. Across every architecture and segment family, measured initialization counts agree with this prediction within one to two standard errors. Representative entries are given in Table~\ref{tab:wis-init}.
\tblcap{Initialization agreement in the exploratory study: the leading
prediction of~\eqref{eq:chord-prediction} against the measured switch count, for
representative combinations of architecture and segment family. Errors are one
standard error over the network seeds.}\label{tab:wis-init}
\begin{center}
\begin{tabular}{llrr}
\toprule
architecture & family & prediction & measured\\
\midrule
$(64,64)$ & within malignant & 44.64 & $44.12\pm2.55$\\
$(64,64)$ & between class & 70.95 & $70.41\pm1.73$\\
$(128,128)$ & within benign & 91.80 & $90.69\pm3.07$\\
$(128,128)$ & control & 121.90 & $121.64\pm1.87$\\
$(64,64,64)$ & between class & 104.19 & $103.01\pm3.96$\\
$(128,128,128)$ & within malignant & 131.30 & $128.83\pm6.08$\\
\bottomrule
\end{tabular}
\end{center}

\subsection{Training-induced changes}
The second part of the experiment asks how the same switch statistic changes under supervised training. We separate within-class, between-class, and off-manifold segments so that a global change in network geometry can be distinguished from a change tied to the class relation of the endpoints. Table ~\ref{tab:wis-change} reports the raw relative change of the total switch count from initialization to the trained network.
\tblcap{Relative change in total switch count from initialization to epoch
$300$, for the four architectures of the single exploratory split.
Table~\ref{tab:wis-main} reports the two replicated architectures over the
$10\times5$ split--seed design.}\label{tab:wis-change}
\begin{center}
\begin{tabular}{lrrrr}
\toprule
architecture & within mal. & within ben. & between & control\\
\midrule
$(64,64)$ & $-23.0\%$ & $-22.3\%$ & $+18.4\%$ & $-4.8\%$\\
$(128,128)$ & $-19.2\%$ & $-20.4\%$ & $+13.5\%$ & $-6.2\%$\\
$(64,64,64)$ & $-28.1\%$ & $-27.4\%$ & $+15.0\%$ & $-9.4\%$\\
$(128,128,128)$ & $-25.0\%$ & $-24.9\%$ & $+5.9\%$ & $-14.9\%$\\
\bottomrule
\end{tabular}
\end{center}
Relative to the off-manifold control, the within-class contrast is about $-10$ to $-19$ percentage points and the between-class contrast about $+20$ to $+24$ percentage points across the four architectures of the exploratory split. We interpret this subtraction only as a comparison relative to the control.

For $(64,64,64)$, the layer decomposition is given in Table~\ref{tab:wis-layer-app}.
\tblcap{Layerwise relative change for the $(64,64,64)$ network on the
exploratory split. Table~\ref{tab:wis-layer-main} reports the same quantities
over the replicated $10\times5$ split--seed design.}\label{tab:wis-layer-app}
\begin{center}
\begin{tabular}{lrrr}
\toprule
family & layer 1 & layer 2 & layer 3\\
\midrule
within malignant & $-8.2\%$ & $-28.6\%$ & $-49.0\%$\\
between class & $+8.1\%$ & $+9.6\%$ & $+28.5\%$\\
off manifold & $+0.4\%$ & $-10.8\%$ & $-18.6\%$\\
\bottomrule
\end{tabular}
\end{center}
The first control layer is essentially flat in every architecture, between $+0.1\%$ and $+0.4\%$.

\subsection{Training trajectory and robustness}
The endpoint comparison does not show when the separation develops or whether it depends strongly on the training duration. We therefore follow one continuous optimization trajectory and separately vary the hidden bias variance and the number of epochs. For $(64,64)$, with $20$ network seeds and $200$ segments per family, the mean total switch counts at selected epochs are given in Table~\ref{tab:wis-traj}.
\tblcap{Mean total switch count along the training trajectory of the $(64,64)$
network on the exploratory split, averaged over $20$ network seeds and $200$
segments per family. Figure~\ref{fig:wis-splits}(b) shows all twelve
checkpoints.}\label{tab:wis-traj}
\begin{center}
\begin{tabular}{rrrrr}
\toprule
epoch & within mal. & within ben. & between & control\\
\midrule
0 & 45.69 & 45.74 & 70.70 & 60.52\\
20 & 38.69 & 40.44 & 79.66 & 59.29\\
100 & 34.86 & 36.80 & 84.18 & 58.38\\
300 & 33.98 & 35.77 & 85.70 & 58.51\\
600 & 33.71 & 35.42 & 86.06 & 58.53\\
\bottomrule
\end{tabular}
\end{center}
The families separate monotonically and are close to saturation around epochs
$150$--$300$. This trajectory is a separate run: it uses its own seed stream and
its own sample of evaluation segments on the single exploratory split, so its
levels are not directly comparable with the split-averaged figures of
Table~\ref{tab:wis-main}. Within the trajectory itself the changes from epoch
$0$ to epoch $300$ are $-25.6\%$ on within-malignant, $-21.8\%$ on
within-benign, $+21.2\%$ between classes and $-3.3\%$ on the control, so the
ordering matches the replicated figures. The Adam state is carried continuously
through the checkpoints; an earlier implementation that reset it at each
checkpoint is described below.

For $(64,64)$, the sign pattern is also unchanged across all nine combinations of $\sigma_b^2\in\{0.01,0.1,1\}$ and training lengths $\{100,300,1000\}$. Magnitudes vary: at $\sigma_b^2=1$, within-class changes are about $-4$ to $-11\%$, between-class changes about $+34$ to $+41\%$, and the control about $+1\%$. In every table and run, the number of output kinks equals the total switch count exactly, both before and after training.

\subsection{Reproducibility notes}
The exploratory tables of this appendix and Figure~\ref{fig:wis-splits}(b) were
produced by \nolinkurl{wisconsin_geometry.py}, with base seed $42$. Independent
networks use \texttt{default\_rng([42,\,s])} for the tables,
\texttt{default\_rng([42,\,1000+s])} for the trajectory, and
\texttt{default\_rng([42,\,7000+s])} for the robustness sweep. The replicated
$10\times5$ design of Section~\ref{sec:wisconsin} and
Figure~\ref{fig:wis-splits} were produced by \nolinkurl{wisconsin_replication.py},
which imports the network, training and switch-counting routines from
\nolinkurl{wisconsin_geometry.py} so that both studies measure the same quantity
with the same code. Its streams are disjoint from those above: the split
partitions come from \texttt{StratifiedShuffleSplit} with
\texttt{random\_state}$=42$, the evaluation segments of split $i$ from
\texttt{default\_rng([42,\,5000+i])}, and network seed $k$ of split $i$ from
\texttt{default\_rng([42,\,i,\,k])}. The four scripts and their saved outputs are archived with a permanent
identifier \citep{OzkanHirsch2026Code}.

The replication script compares the scalar output kink count with the total
hidden switch count on every segment, before and after training, and records the
number of exact agreements in its saved output. Kinks are identified by
comparing the one-sided slopes with relative tolerance $10^{-8}$, which
separates a genuine slope change from floating-point noise. A switch whose
downstream sensitivity is very small can produce a slope change below this
tolerance and is then not counted as a kink; this accounts for the six
exceptions reported in Section~\ref{sec:wisconsin}.

The exploratory study included internal consistency checks because the
theoretical prediction and the trained-network measurements use separate code
paths. Two implementation errors were identified during these checks and
corrected before the results above were recorded. Neither correction changes the
design of the experiment, but both are relevant for reproducibility. First, an
early trajectory implementation reset the Adam state at each checkpoint; the
corrected run carries optimizer state continuously. It remains a separate run
with its own seed stream and its own segment sample, and is not a re-derivation
of the main-text figures. Second, an initialization routine initially failed to
pass $\sigma_b^2$ into the covariance recursion in the robustness calculation;
this was corrected before the results above were recorded.

\section*{CRediT authorship contribution statement}
\noindent \textbf{Recep {\"O}zkan:} Conceptualization, Methodology, Formal analysis, Software, Validation, Visualization, Investigation, Writing -- original draft, Writing -- review and editing, Funding acquisition.

\noindent \textbf{Christian Hirsch:} Conceptualization, Methodology, Formal analysis, Validation, Data Curation, Investigation, Writing -- original draft, Supervision, Writing -- review and editing.

\section*{Declarations}
\noindent {\bf Declaration of generative AI and AI-assisted technologies in the writing process.} During the preparation of this work, the authors used ChatGPT and Claude to assist with mathematical exploration and drafting, language editing, manuscript structure, and consistency checks. The authors independently verified all mathematical claims, derivations, and proofs and reviewed and edited all AI-assisted text. The authors take full responsibility for the content of the article.

\medskip
\noindent {\bf Funding.} This work was supported by the Scientific and Technological Research Council of T\"urkiye (T\"UB\.ITAK) under the T\"UB\.ITAK-B\.IDEB 2219 International Postdoctoral Research Fellowship Programme [grant number 1059B192501091].

\medskip
\noindent {\bf Declaration of competing interest.} The authors declare that they have no known competing financial interests or personal relationships that could have appeared to influence the work reported in this paper.

\medskip
\noindent {\bf Acknowledgements.} This article is based upon work from COST Action 24122 mSPACE, supported by COST (European Cooperation in Science and Technology), \href{https://www.cost.eu}{www.cost.eu}.

\medskip
\noindent {\bf Data availability.} The scripts \nolinkurl{verify_theorem.py}, \nolinkurl{crofton_estimator.py}, \nolinkurl{wisconsin_geometry.py} and \nolinkurl{wisconsin_replication.py}, which produce all tables and figures of this article, together with the random seeds used, are available at \url{https://github.com/Recep-Ozkan/conditional-kac-rice-relu} and archived at \url{https://doi.org/10.5281/zenodo.22881941}. The Breast Cancer Wisconsin (diagnostic) data are publicly available and are loaded directly from \texttt{scikit-learn}.

\bibliography{references}

\end{document}